\documentclass{article}

    \PassOptionsToPackage{numbers, compress}{natbib}

\usepackage[final]{neurips_2024}

\usepackage[utf8]{inputenc} %
\usepackage[T1]{fontenc}    %
\usepackage{hyperref}       %
\usepackage{url}            %
\usepackage{booktabs}       %
\usepackage{amsfonts}       %
\usepackage{nicefrac}       %
\usepackage{microtype}      %
\usepackage{xcolor}         %

\usepackage{bm}

\usepackage{multirow}
\usepackage{makecell}
\usepackage{mathtools} %
\usepackage{tikz} %
\usepackage{amsthm,amssymb}
\usepackage{dsfont}
\newtheorem{theorem}{Theorem}
\newtheorem{definition}{Definition}
\newtheorem{proposition}{Proposition}
\newtheorem{corollary}{Corollary}
\newtheorem{lemma}{Lemma}
\newtheorem{example}{Example}

\usepackage[ruled, vlined, linesnumbered, noend,]{algorithm2e}
\DontPrintSemicolon
\SetKwInput{KwInput}{Input}
\SetKwInput{KwOutput}{Output}

\usepackage{subcaption}
\usepackage{tikzit}

\tikzstyle{circle}=[fill=none, draw=black, shape=circle]
\tikzstyle{generic}=[fill=none]
\tikzstyle{generic_redtext}=[fill=none, text=red]
\tikzstyle{generic_bluetext}=[fill=none, text=blue]
\tikzstyle{rect_cyan}=[fill=cyan, draw=black, shape=rectangle]
\tikzstyle{rect_green}=[fill=green, draw=black, shape=rectangle]
\tikzstyle{rect_magenta}=[fill=magenta, draw=black, shape=rectangle]
\tikzstyle{rect}=[draw=black, shape=rectangle, minimum height=1cm]
\tikzstyle{generic_bold}=[fill=none, font={\boldmath}]
\tikzstyle{small_multline}=[fill=none, align=center, font={\scriptsize}]

\tikzstyle{dashed_directed}=[->, dashed]
\tikzstyle{directed}=[->]

\usepackage{thmtools}

\tikzstyle{sum}=[draw,circle,inner sep=2pt,ultra thick,cyan]
\tikzstyle{prod}=[draw,circle,inner sep=2pt,ultra thick,orange]
\tikzstyle{lit}=[draw, circle, inner sep=2pt, ultra thick, black]

\usepackage{stmaryrd}

\newcommand{\yj}[1]{}
\newcommand{\bw}[1]{}
\newcommand{\dm}[1]{}
\newcommand{\guy}[1]{}

\newcommand{\chg}[1]{\textcolor{blue}{#1}} %
\renewcommand{\chg}[1]{#1}

\newcommand{\softbold}{\bm}
\newcommand{\propertyvars}{\softbold{X}}
\newcommand{\propertyvarsval}{\softbold{x}}
\newcommand{\propertyvar}{X}
\newcommand{\othervars}{\softbold{Y}}
\newcommand{\othervarsval}{\softbold{y}}
\newcommand{\othervar}{Y}
\newcommand{\firstvars}{\propertyvars}
\newcommand{\firstval}{\propertyvarsval}
\newcommand{\lastvars}{\othervars}
\newcommand{\lastval}{\othervarsval}
\newcommand{\detvars}{\propertyvars}

\newcommand{\vars}{\softbold{V}}
\newcommand{\varsval}{\softbold{v}}
\newcommand{\var}{V}
\newcommand{\varsubset}{\softbold{W}}
\newcommand{\varsubsetval}{\softbold{w}}
\newcommand{\varrest}{\softbold{Z}}
\newcommand{\varrestval}{\softbold{z}}

\newcommand{\circuit}{C}

\newcommand{\node}{\alpha}

\newcommand{\identity}{\mathds{1}}
\newcommand{\scope}{\textnormal{vars}}
\newcommand{\support}{\text{supp}}
\newcommand{\assign}{\text{Assign}}
\newcommand{\nodefn}{p}

\newcommand{\semiring}{\mathcal{S}}
\newcommand{\domain}{S}
\newcommand{\0}{0}
\newcommand{\1}{1}
\newcommand{\logicfn}{\phi}
\newcommand{\weightfn}{\omega}

\newcommand{\mappingfn}{\tau}

\newcommand{\inv}{-1}

\newcommand{\boolsemiring}{\mathcal{B}}
\newcommand{\probsemiring}{\mathcal{P}}
\newcommand{\maxtimessemiring}{\mathcal{M}}

\newcommand{\sumtosum}{(Additive)}
\newcommand{\deter}{(Det)}
\newcommand{\prodtoprod}{(Multiplicative)}

\newcommand{\prodzeroone}{(Prod 0/1)}

\newcommand{\supportmapping}[3]{\llbracket #1 \rrbracket_{#2 \to #3}}

\newcommand{\prodcmp}{\texttt{PROD-CMP}}
\newcommand{\prodscmp}{\texttt{PROD-SCMP}}

\newcommand{\marg}{\texttt{AGG}}
\newcommand{\marginput}{\texttt{AGG-INPUT}}
\newcommand{\newnode}{\texttt{NEWNODE}}
\newcommand{\mappingalg}{\texttt{MAPPING}}
\newcommand{\mappingalginput}{\texttt{MAPPING-INPUT}}

\newcommand{\isomorphism}{\iota}
\newcommand{\circuito}{\circuit_{\text{other}}}

\newcommand{\op}{operator}
\newcommand{\ops}{operators}
\newcommand{\Ops}{Operators}

\renewcommand{\mathbf}{\bm}

\title{A Compositional Atlas for Algebraic Circuits}

\author{%
  Benjie Wang%
    \\
  University of California, Los Angeles\\
  \texttt{benjiewang@ucla.edu} \\
  \And
  Denis Deratani Mauá\\
  University of São Paulo\\
  \texttt{ddm@ime.usp.br}
  \And
  Guy Van den Broeck\\
  University of California, Los Angeles \\
  \texttt{guyvdb@cs.ucla.edu} \\
  \And
  YooJung Choi \\
  Arizona State University \\
  \texttt{yj.choi@asu.edu} \\
}

\newsavebox\Circuit
\sbox\Circuit{\begin{tikzpicture}
        \node[circle,draw,minimum width=4pt,inner sep=0] (n1) at (0,.7) {};
        \node[circle,draw,minimum width=4pt,inner sep=0] (n2) at (0,0) {};
        \node[circle,draw,minimum width=4pt,inner sep=0] (n3) at (.7,.7) {};
        \node[circle,draw,minimum width=4pt,inner sep=0] (n4) at (.7,0) {};
        \node[circle,draw,minimum width=4pt,inner sep=0] (n5) at (1.2,0.35) {};
        \foreach \u/\v in {n1/n3,n1/n4,n2/n3,n2/n4,n3/n5,n4/n5} {
         \draw (\u) edge (\v);
        }
\end{tikzpicture}}

\begin{document}

\maketitle

\begin{abstract}
 Circuits based on sum-product structure have become a ubiquitous representation to compactly encode knowledge, from Boolean functions to probability distributions. By imposing constraints on the structure of such circuits, certain inference queries become tractable, such as model counting and most probable configuration. Recent works have explored analyzing probabilistic and causal inference queries as compositions of basic \ops{} to derive tractability conditions. In this paper, we take an \emph{algebraic} perspective for \emph{compositional inference}, and show that a large class of queries---including marginal MAP, probabilistic answer set programming inference, and causal backdoor adjustment---correspond to a combination of basic \ops{} over semirings: aggregation, product, and elementwise mapping. Using this framework, we uncover simple and general sufficient conditions for tractable composition of these \ops{}, in terms of circuit properties (e.g., marginal determinism, compatibility) and conditions on the elementwise mappings. Applying our analysis, we derive novel tractability conditions for many such compositional queries. 
 Our results unify tractability conditions for existing problems on circuits, while providing a blueprint for analysing novel compositional inference queries.
 
\end{abstract}

\section{Introduction} \label{sec:intro}

Circuit-based representations, such as Boolean circuits, decision diagrams, and arithmetic circuits, are of central importance in many areas of AI and machine learning. For example, a primary means of performing inference in many models, from Bayesian networks \citep{darwiche2003differential,Chavira08} to probabilistic programs \cite{problog,problog-inference,holtzen2020scaling,saad2021sppl}, is to convert them into equivalent circuits; this is commonly known as \emph{knowledge compilation}. \chg{Inference via knowledge compilation has also been used for many applications in neuro-symbolic AI, such as constrained generation \citep{AhmedNeurIPS22,ZhangICML23} and neural logic programming \citep{manhaeve2018deepproblog,huang2021scallop}.}
Circuits can also be \emph{learned} as probabilistic generative models directly from data \citep{gens2013learning,rahman2014cutset,peharz2020random,LiuICLR23}, in which context they are known as probabilistic circuits \citep{probcirc}. \chg{Compared with neural generative models, probabilistic circuits enjoy tractable evaluation of inference queries such as marginal probabilities, which has been used for tasks such as fair machine learning \citep{choi2021group} and causal reasoning \citep{zevcevic2021interventional,wang2022tractable,wang2023compositional}.}

The key feature of circuits is that they enable one to precisely characterize %
\emph{tractability conditions} (structural properties of the circuit)
under which a given \emph{inference query} can be computed exactly and efficiently.
One can then %
enforce these circuit properties when compiling or learning a model to enable tractable inference. 
For many basic inference queries, such as computing a marginal probability, 
tractability conditions are well understood \chg{\citep{vergari2021atlas,BroadrickUAI24}}. However, for more complex queries, the situation is less clear, and the exercise of deriving algorithms and tractability conditions for a given query has usually been carried out in an instance-specific manner requiring significant effort.

\chg{In Figure \ref{fig:atlas}, we illustrate two such queries. The marginal MAP (MMAP) \cite{marginalmap} query takes a probabilistic circuit $p$ and some evidence $\bm{e}$ and asks for the most likely assignment of a subset of variables. The success probability inference in probabilistic logic programming \citep{plog,smproblog} takes a circuit representation $\phi$ of a logic program, a weight function $\weightfn$ and some query $\bm{q}$, and computes the probability of the query under the program's semantics (MaxEnt, in the example). At first glance, these seem like very different queries, involving different types of input circuits (logical and probabilistic), and different types of computations. However, they share similar \emph{algebraic structure}: logical and probabilistic circuits can be interpreted as circuits defined over different \emph{semirings}, while maximization and summation can be viewed as \emph{aggregation} over different semirings.
In this paper, inspired by the compositional atlas for probabilistic circuits \citep{vergari2021atlas}, we take a \emph{compositional} approach to algebraic inference problems, breaking them down into a series of basic \ops{}: aggregation, product, and elementwise mapping. %
For example, the MMAP and probabilistic logic programming queries involve multiple interleaved aggregations and products, along with one elementwise mapping each. Given a circuit algorithm (and associated tractability condition) for each basic \op{}, we can reuse these algorithms to construct algorithms for arbitrary compositions. The key challenge is then to check if each intermediate circuit satisfies the requisite tractability conditions. %
}

\chg{Our contributions can be summarized as follows. We introduce a compositional inference framework for \emph{algebraic} circuits (Section \ref{sec:compositional}) over arbitrary semirings, generalizing existing results on logical \citep{darwiche2002knowledge} and probabilistic \citep{vergari2021atlas} circuits. %
In particular, we provide a language for specifying inference queries involving \emph{different} semirings as a composition of basic \ops{} (Section \ref{sec:op_definition}). We then prove sufficient conditions for the tractability of each basic \op{} (Section \ref{sec:op_tractability}) and novel conditions for composing such \ops{} (Section \ref{sec:composition_tractability}). We apply our compositional framework to a number of inference problems (Section \ref{sec:case_studies}), showing how our compositional approach leads to more systematic derivation of tractability conditions and algorithms, and in some cases improved complexity analysis. In particular, we discover a tractability hierarchy for inference queries captured under the 2AMC framework \citep{asp2mc}, and reduce the complexity of causal backdoor/frontdoor adjustment on probabilistic circuits \chg{\citep{pearl1995do,wang2023compositional}} from quadratic/cubic to linear/quadratic respectively. %
}

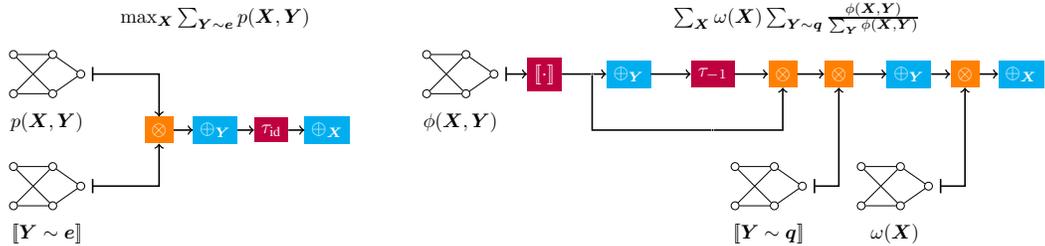
\begin{figure}
    \centering
    \resizebox{\textwidth}{!}{
\begin{tikzpicture}[
        every node/.style={transform shape},
        circuit/.style={node contents=\usebox{\Circuit}},
        map/.style={fill=purple,text=white},
        prod/.style={fill=orange,text=white},
        sum/.style={fill=cyan,text=white}
        ]
        \node at (3,3) {$\max_{\mathbf{X}} \sum_{\mathbf{Y} \sim \mathbf{e}} p(\mathbf{X},\mathbf{Y})$};
        \node[label=below:{$p(\mathbf{X},\mathbf{Y})$}] (A) at (0,2) {\usebox{\Circuit}};
        \node[label=below:{$[\![\mathbf{Y} \sim \mathbf{e}]\!]$}] (B) at (0,0) {\usebox{\Circuit}};
	\node[prod] (C) at (2,1) {$\otimes$};
	\node[sum] (D) at (3,1) {$\oplus_{\mathbf{Y}}$};
	\node[map] (E) at (4,1) {$\tau_{\text{id}}$};
	\node[sum] (F) at (5,1) {$\oplus_{\mathbf{X}}$};

         \draw[|->,thick] (A) -| (C);	
         \draw[|->,thick] (B) -| (C);
         \draw[->,thick] (C) edge (D);
         \draw[->,thick] (D) edge (E);
         \draw[->,thick] (E) edge (F);         	
    \end{tikzpicture}
    \hspace{1cm}
\begin{tikzpicture}[
        every node/.style={transform shape},
        circuit/.style={node contents=\usebox{\Circuit}},
        map/.style={fill=purple,text=white},
        prod/.style={fill=orange,text=white},
        sum/.style={fill=cyan,text=white}
        ]        
        \node at (6,3) {$\sum_{\mathbf{X}} \omega(\mathbf{X}) \sum_{\mathbf{Y} \sim \mathbf{q}} \frac{\phi(\mathbf{X},\mathbf{Y})}{\sum_{\mathbf{Y} }\phi(\mathbf{X},\mathbf{Y})}$};        
        \node[label=below:{$\phi(\mathbf{X},\mathbf{Y})$}] (A) at (0,2) {\usebox{\Circuit}};
        \node[label=below:{$[\![\mathbf{Y} \sim \mathbf{q}]\!]$}] (e) at (5.5,0) {\usebox{\Circuit}};        
	\node[map] (B) at (1.5,2) {$[\![ \cdot ]\!]$};
	\node[sum] (C) at (3,2) {$\oplus_{\mathbf{Y}}$};
	\node[map] (D) at (4.5,2) {$\tau_{-1}$};
	\node[circle,draw=none,inner sep=0,minimum size=1pt,fill=black] (BC) at (2.35,2) {};
	\node[prod] (E) at (5.75,2) {$\otimes$};
	\node[prod] (EE) at (6.75,2) {$\otimes$};
	\node[circle,draw=none,inner sep=0,minimum size=1pt,fill=black] (BE) at (4.5,1) {};
	\node[sum] (F) at (8,2) {$\oplus_{\mathbf{Y}}$};
        \node[label=below:{$\omega(\mathbf{X})$}] (G) at (7.75,0) {\usebox{\Circuit}};
	\node[prod] (H) at (9,2) {$\otimes$};
	\node[sum] (I) at (10,2) {$\oplus_{\mathbf{X}}$};

         \draw[|->,thick] (A) edge (B);	
         \draw[thick] (B) edge (BC);
         \draw[->,thick] (BC) edge (C);
         \draw[->,thick] (C) edge (D);
         \draw[->,thick] (D) edge (E);
         \draw[thick] (BC) |- (BE);
         \draw[->,thick] (BE) -| (E);
         \draw[->,thick] (E) edge (EE);
         \draw[|->,thick] (e) -| (EE);         
         \draw[->,thick] (EE) edge (F);         
         \draw[->,thick] (F) edge (H);
         \draw[|->,thick] (G) -| (H);
         \draw[->,thick] (H) edge (I);
    \end{tikzpicture}        
    }
    \caption{Example applications of our compositional inference framework for \textit{(Left)} MMAP and \textit{(Right)} Success Probability in Prob.\ Logic Programing under the Stable Model semantics (MaxEnt).}
    \label{fig:atlas}
\end{figure}

\section{Preliminaries} \label{sec:prelim}

\paragraph{Notation} We use capital letters (e.g., $X, Y$) to denote variables and lowercase for assignments (values) of those variables (e.g., $x, y$). We use boldface to denote sets of variables/assignments (e.g., $\bm{X}, \bm{y}$) and write $\assign(\vars)$ for the set of all assignments to $\vars$. Given a variable assignment $\bm{v}$ of $\bm{V}$, and a subset of variables $\bm{W} \subseteq \bm{V}$, we write $\bm{v}_{\bm{W}}$ to denote the assignment of $\bm{W}$ corresponding to $\bm{v}$.

\paragraph{Semirings}

In this paper, we consider inference problems over commutative \emph{semirings}. Semirings are sets closed w.r.t.\ \ops{} of addition ($\oplus$) and multiplication ($\otimes$) that satisfy certain properties:

\begin{definition}[Commutative Semiring]
    A commutative semiring $\semiring$ is a tuple $(\domain,\oplus,\otimes,0_{\semiring},1_{\semiring})$, where $\oplus$ and $\otimes$ are associative and commutative binary \ops{} on \chg{a set} $S$ \chg{(called the \emph{domain})} such that $\otimes$ distributes over $\oplus$ (i.e., $a \otimes (b \oplus c) = (a \otimes b) \oplus (a \otimes c)$ for all $a, b, c \in \domain$); $0_{\semiring} \in \domain$ is the additive identity (i.e., $0_{\semiring} \oplus a = a$ for all $a \in \domain$) and annihilates $S$ through multiplication (i.e., $0_{\semiring} \otimes a = 0$ for all $a \in \domain$); and  $1_{\semiring}\in\domain$ is the multiplicative identity (i.e., $1_{\semiring} \otimes a = a$ for all $a \in \domain$).
\end{definition}

For example, the probability semiring $\probsemiring = (\mathbb{R}_{\geq 0}, +, \cdot, 0, 1)$ employs standard addition and multiplication ($\oplus = +$ and $\otimes = \cdot$) over the non-negative reals, the $(\max, \cdot)$ semiring $\mathcal{M} =(\mathbb{R}_{\geq 0}, \max, \cdot, 0, 1)$ replaces addition with maximization, while the Boolean semiring $\boolsemiring = (\{\bot, \top \}, \vee, \wedge, \bot, \top)$ employs disjunction and conjunction \ops{} ($\oplus = \vee$ and $\otimes = \wedge$) over truth values.

\paragraph{Algebraic Circuits}

We now define the concept of an algebraic circuit, which are computational graph-based representations of functions taking values in an arbitrary semiring.

\begin{definition}[Algebraic Circuit]
    Given a semiring $\semiring = (\domain, \oplus, \otimes, \0_{\semiring}, \1_{\semiring})$, an algebraic circuit $\circuit$ 
    over variables $\vars$ 
    is a rooted directed acyclic graph (DAG), whose nodes $\node$ have the following syntax:
    \begin{equation*}
        \node ::= l \mid +_{i=1}^{k} \node_i \mid \times_{i=1}^{k} \node_i \, ,
    \end{equation*}
    where $\node_i \in \circuit$ are circuit nodes, $k \in \mathbb{N}^{>0}$ and $l: \assign(\varsubset) \to \domain$ is a function over 
    a (possibly empty) subset $\varsubset \subseteq \vars$ of variables\chg{, called its \emph{scope}. }
    That is, each circuit node may be an input ($l$), sum ($+$), or a product ($\times$). 
    \chg{The \emph{scope} of any internal node is defined to be $\scope(\alpha) := \cup_{i=1}^{k} \scope(\alpha_i)$.}
    Each node $\node$ represents a function %
    $\nodefn_{\node}$ taking values in $\domain$, defined recursively by:
    $\nodefn_{\node}(\varsubsetval) ::=l(\varsubsetval)$ if $\node  = l$, 
    $\nodefn_{\node}(\varsubsetval) ::=\oplus_{i=1}^{k} \nodefn_{\node_i}(\varsubsetval)$ if $\node = +_{i=1}^{k} \node_i$, and $\nodefn_{\node}(\varsubsetval) ::=\otimes_{i=1}^{k} \nodefn_{\node_i}(\varsubsetval)$ if $\times_{i=1}^{k} \node_i$\chg{, where $\bm{W}$ is the scope of $\node$.} %
    The function $p_{\circuit}$ represented by the circuit is defined to be the function of the root node. The size $|\circuit|$ of a circuit is defined to be the number of edges in the DAG.
\end{definition}

\begin{figure}[t]
    \centering
    \begin{subfigure}[t]{0.42\linewidth}
    \centering
    \scalebox{0.95}{
    \begin{tikzpicture}[semithick]
        \tikzstyle{sum}=[draw,circle,inner sep=1.6pt, cyan, ultra thick]
        \tikzstyle{prod}=[draw,circle,inner sep=1.6pt,orange, ultra thick]
        \tikzstyle{lit}=[draw, circle,inner sep=2pt,black,ultra thick]
        \begin{scope}
            \node[sum] (sum1) at (0,-0.5) {$\bm{\vee}$};
            \node[prod] (prod1) at (-.6,-1.25) {$\bm{\wedge}$}; 
            \node[prod] (prod2) at (.6,-1.25) {$\bm{\wedge}$}; 
            \node[lit] (litX) at (0,-2) {$X$}; %
            \node[lit] (litY1) at (-1.2,-2) {$Y$}; %
            \node[lit] (litY0) at (1.2,-2) {$\neg Y$}; %
            \foreach \s/\t in {sum1/prod1,sum1/prod2,prod1/litX,prod1/litY1,prod2/litX,prod2/litY0} {
                \draw[->,>=latex] (\s) edge (\t);
            }
        \end{scope}
    \end{tikzpicture}
    }
    \caption{A Boolean circuit that is smooth, decomposable, deterministic, but not $\{X\}$-deterministic.}
    \label{fig:circuit1}
    \end{subfigure}\hfill
    \begin{subfigure}[t]{0.56\linewidth}
    \centering
    \scalebox{0.95}{
    \begin{tikzpicture}[semithick]
        \tikzstyle{sum}=[draw,circle,inner sep=1pt,cyan, ultra thick]
        \tikzstyle{prod}=[draw,circle,inner sep=1pt,ultra thick,orange]
        \tikzstyle{lit}=[draw,circle,inner sep=1pt,ultra thick, black]
        \begin{scope}
            \node[prod] (prod1) at (0,-1.25) {$\bm{\times}$};
            \node[sum] (sum1) at (-2,-1.75) {$\bm{+}$}; 
            \node[sum] (sum2) at (2,-1.75) {$\bm{+}$}; 
            \node[prod] (prod2) at (-3,-2.25) {$\bm{\times}$};
            \node[prod] (prod3) at (-1,-2.25) {$\bm{\times}$};
            \node[prod] (prod4) at (1,-2.25) {$\bm{\times}$};
            \node[prod] (prod5) at (3,-2.25) {$\bm{\times}$};
            \node[lit] (X11) at (-3.5,-3.25) {\scriptsize$\llbracket{X_1}\rrbracket$}; 
            \node[lit] (Y11) at (-2.5,-3.25) {\scriptsize$\llbracket{Y_1}\rrbracket$}; 
            \node[lit] (X10) at (-1.5,-3.25) {\scriptsize$\llbracket{\neg X_1}\rrbracket$}; 
            \node[lit] (Y10) at (-.5,-3.25) {\scriptsize$\llbracket{\neg Y_1}\rrbracket$}; 
            \node[lit] (X21) at (.5,-3.25) {\scriptsize$\llbracket{X_2}\rrbracket$}; 
            \node[lit] (Y21) at (1.5,-3.25) {\scriptsize$\llbracket{Y_2}\rrbracket$}; 
            \node[lit] (X20) at (2.5,-3.25) {\scriptsize$\llbracket{\neg X_2}\rrbracket$}; 
            \node[lit] (Y20) at (3.5,-3.25) {\scriptsize$\llbracket{\neg Y_2}\rrbracket$}; 
            \foreach \s/\t in {prod1/sum1,prod1/sum2,sum1/prod2,sum1/prod3,prod2/X11,prod2/Y11,prod3/X10,prod3/Y10,sum2/prod4,sum2/prod5,prod4/X21,prod4/Y21,prod5/X20,prod5/Y20} {
                \draw[->,>=latex] (\s) edge (\t);
            }
        \end{scope}        
    \end{tikzpicture}
    }
    \caption{A probabilistic circuit that is smooth, decomposable, and $\{X_1, X_2\}$-deterministic. $\llbracket \cdot \rrbracket$ maps $\top$ to $1$ and $\bot$ to 0.} 
    \label{fig:circuit2}
    \end{subfigure}
    \label{fig:circuit_comparison}
    \caption{Examples of Algebraic Circuits. We use \protect \begin{tikzpicture}\protect \draw[black, ultra thick]  circle(1ex);\end{tikzpicture}, \protect \begin{tikzpicture}\protect \draw[cyan, ultra thick]  circle(1ex);\end{tikzpicture}, \protect \tikz \protect \draw[orange, ultra thick] circle(1ex);  to represent input, sum and product nodes respectively.}
\end{figure}

For simplicity, we will restrict to circuits 
with binary products (i.e. $k = 2$ for products); 
this can be enforced with at most a \chg{linear} increase in size. Prominent examples of algebraic circuits include negation normal forms (NNF) and binary decision diagrams \citep{amarilli2024circus}---which are over the Boolean semiring and represent Boolean functions---and probabilistic circuits \citep{probcirc}---which are over the probabilistic semiring and represent probability distributions.\footnote{Probabilistic circuits are sometimes written with weights on the edges; this can easily be translated to our formalism by replacing the child \chg{of a weighted edge} with a product of itself and an input function with empty scope corrresponding to the weight \citep{shpilka2010arithmetic,rooshenas2014learning}. }
By imposing simple restrictions on the circuit, which we call \emph{circuit properties}, various inference queries that are computationally hard in general become tractable. %
\chg{In particular, smoothness and decomposability ensure tractable marginal inference:}

\begin{definition}[Smoothness, Decomposability]
    A circuit is \emph{smooth} if for every sum node $\node = +_i \node_i$, its children have the same scope: $\forall i,j,\; \scope(\node_i) = \scope(\node_j)$. A circuit is \emph{decomposable} if for every product node $\node = \node_1 \times \node_2$, its children have disjoint scopes: $\scope(\node_1) \cap \scope(\node_2) = \emptyset$.  
\end{definition}

\chg{Aside from the scopes of circuit nodes, we can also specify properties relating to their \emph{supports} \citep{probcirc}: }

\begin{definition}[$\bm{X}$-Support]
    Given a partition $(\propertyvars, \othervars)$ of variables $\vars$ and a node $\alpha$ in circuit $\circuit$, the $\propertyvars$-support of $\alpha$ is the projection of its support on $\propertyvars$:
    \begin{equation*}\support_{\propertyvars}(\alpha) = \{ \propertyvarsval \in \assign(\propertyvars \cap \scope(\node)): \exists{\othervarsval \in \assign(\scope(\node) \setminus \propertyvars)} \text{ s.t. }%
    p_{\node}(\propertyvarsval, \othervarsval) \neq 0_{\semiring}
    \} .
    \end{equation*}
\end{definition}

\begin{definition}[$\bm{X}$-Determinism]
    Given a circuit $\circuit$ and a partition $(\firstvars, \lastvars)$ of $\vars$, we say that $\circuit$ is $\firstvars$-deterministic if for all sum nodes $\alpha = +_{i=1}^{k} \alpha_i$, either:
        (i) $\scope(\alpha) \cap \firstvars = \emptyset$; 
        or (ii) $\support_{\firstvars}(\alpha_i) \cap \support_{\firstvars}(\alpha_j) = \emptyset$ for all $i\neq j$.
\end{definition}

\chg{$\bm{X}$-determinism refers to a family of properties indexed by sets $\bm{X}$.} In particular $\bm{V}$-determinism is usually referred to simply as determinism.
Note that, as defined, scope and support, and thus these circuit properties, apply to any semiring: the scope only depends on the variable decomposition of the circuit, while the support only refers to scope and the  semiring additive identity $0_{\semiring}$. Figure \ref{fig:circuit1} shows a simple example of a smooth, decomposable, and deterministic circuit that is not $X$-deterministic, while Figure \ref{fig:circuit2} shows a smooth, decomposable, and $\{X_1, X_2\}$-deterministic circuit.

\section{Compositional Inference: A Unifying Approach} \label{sec:compositional}

Many inference problems can be written as \emph{compositions of basic \ops{}}\chg{, which take as input one or more functions and output another function. For example, the marginal MAP query on probability distributions $\max_{\bm{x}} \sum_{\bm{y}} p(\bm{x}, \bm{y})$ is a composition of the $\sum$ and $\max$ \ops. Similarly, for Boolean functions $\phi, \psi$, the query $\sum_{\bm{x}} \exists \bm{y}. \, \phi(\bm{x}, \bm{y}) \wedge \psi(\bm{x}, \bm{y})$ composes the $\sum$, $\exists$ and $\wedge$ \ops. Although these queries appear to involve four different \ops{}, three of them $(\sum, \max, \exists)$ can be viewed as an \emph{aggregation} operation over \emph{different} semirings. Thus, we begin this section by consolidating to a simple set of three \ops{} applicable to functions taking values in some semiring: namely, aggregation, product, and elementwise mapping (Section \ref{sec:op_definition}).}

Equipped with this language for specifying compositional inference queries, we then move on to analyzing their tractability when the input functions are given as circuits. The thesis of this paper is that \chg{algebraic structure is} often the right level of abstraction to 
derive \chg{useful sufficient (and sometimes necessary)} conditions
for tractability. 
\chg{We firstly show \emph{tractability conditions} of each of the basic \ops{} (Section \ref{sec:op_tractability}), before deriving \emph{composability conditions} showing how circuit properties are maintained through \ops{} (Section \ref{sec:composition_tractability}). This enables us to systematically derive conditions for the input circuits that enable efficient computation of a compositional inference query.}
Algorithms and detailed proofs of all theorems can be found in Appendix~\ref{appx:proofs}.

\subsection{Basic \Ops{}} \label{sec:op_definition}

\paragraph{Aggregation} 
Given a function $f \!:\! \assign(\vars) \to \domain$, \textit{aggregating $f$ over $\varsubset\subseteq\vars$} returns the function $f' \!:\! \assign(\varrest) \to \domain$ for $\varrest = \vars\setminus\varsubset$ defined by 
$
    f'(\varrestval) := \bigoplus_{\varsubsetval} f(\varrestval, \varsubsetval).
$

For example, aggregation corresponds to forgetting variables $\varsubset$ in the Boolean semiring,  marginalizing out $\varsubset$ in the probability semiring, and maximizing over assignments in the $(\max, \cdot)$ semiring. %
Next, some queries, such as divergence measures between probability distributions, take two functions as input, and many others involve combining two or more intermediate results, as is the case in probabilistic answer set programming inference and causal backdoor/frontdoor queries. We define the product \op{} to encapsulate such ``combination'' of functions in general.

\paragraph{Product} 
\chg{Given two functions $f\!:\! \assign(\varsubset)\to\domain$ and $f'\!:\! \assign(\varsubset')\to\domain$, the \textit{product of $f$ and $f'$} is a function $f'':\assign(\vars)\to\domain$, where $\vars\!=\! \varsubset \cup \varsubset'$, defined by
$
    f''(\varsval) := f(\varsval_{\varsubset}) \otimes f'(\varsval_{\varsubset'}).
$
}

For example, a product corresponds to the conjoin \op{} $\wedge$ in the Boolean semiring, and standard multiplication $\cdot$ in the probability semiring. 
Lastly, we introduce the \textit{elementwise mapping} \op, \chg{ defined by a mapping $\mappingfn$ from a semiring to a (possibly different) semiring. When applied to a function $f$, it returns the function composition $\mappingfn \circ f$.}
This is the key piece that distinguishes our framework from prior analysis of sum-of-product queries over specific semirings, allowing us to express queries such as causal inference and probabilistic logic programming inference under the same framework.

\paragraph{Elementwise Mapping} 
Given a function $f \!:\! \assign(\vars)\to\domain$ and a mapping $\mappingfn \!:\! \domain \to \domain'$ from semiring $\semiring$ to $\semiring'$ satisfying $\mappingfn(0_{\semiring}) = 0_{\semiring'}$, an \textit{elementwise mapping of $f$ by $\mappingfn$} results in a function $f' : \assign(\vars)\to\domain'$ defined by
$
    f'(\varsval) := \mappingfn(f(\varsval)).
$\footnote{In a slight abuse of notation, we will write $\mappingfn: \semiring \to \semiring'$ to indicate that $\mappingfn$ maps between the respective sets.}

In practice, we use elementwise mappings as an abstraction predominantly for two purposes. The first is for switching between semirings, while the second is to map between elements of the same semiring. For the former, one of the most important elementwise mappings we will consider is the \emph{support mapping}, which maps between any two semirings as follows.

\begin{definition}[Support Mapping] \label{defn:support_mapping}
Given a source semiring $\semiring$ and a target semiring $\semiring'$, \chg{the support mapping $\supportmapping{\cdot}{\semiring}{\semiring'}$ is defined as: $\supportmapping{a}{\semiring}{\semiring'} = 0_{\semiring'}$ if $a = 0_{\semiring}$;  $\supportmapping{a}{\semiring}{\semiring'} = 1_{\semiring'}$ otherwise.} 
\end{definition}

\chg{In particular we will often use the source semiring $\semiring = \boolsemiring$, in which case the support mapping maps $\bot$ to the $0_{\semiring'}$ and $\top$ to the $1_{\semiring'}$ in the target semiring. This is useful for encoding a logical function for inference in another semiring, e.g. probabilistic inference in the probabilistic semiring. %
}

\chg{\begin{example}[Marginal MAP] Suppose that we are given a Boolean formula $\phi(\bm{X}, \bm{Y})$ and a weight function $w: \assign(\bm{X} \cup \bm{Y}) \to \mathbb{R}_{\geq 0}$. The marginal MAP query for variables $\bm{X}$ is defined by 
\begin{equation*}
\textnormal{MMAP}(\phi, \weightfn) = \max_{\bm{x}} \sum_{\bm{y}} \phi(\bm{x}, \bm{y}) \cdot \weightfn(\bm{x}, \bm{y}) \, ,
\end{equation*} where we interpret $\top$ as $1$ and $\bot$ as 0. We can break this down into a compositional query as follows: 
\begin{equation*} \bigoplus_{\bm{x}} \mappingfn_{\text{id}, \probsemiring \to \maxtimessemiring} \left[ \bigoplus_{\bm{y}} \supportmapping{\phi(\bm{x}, \bm{y})}{\boolsemiring}{\probsemiring} \otimes \weightfn(\bm{x}, \bm{y}) \right] \, .
\end{equation*} The support mapping ensures $\phi$ and $\weightfn$ are both functions over the probabilistic semiring, so that we can apply the product operation. Notice also the inclusion of an identity mapping $\mappingfn_{\text{id}, \probsemiring \to \maxtimessemiring}$ from the probability to the $(\max, \cdot)$ semiring defined by $\mappingfn_{\text{id}, \probsemiring \to \maxtimessemiring}(x) = x$ for all $x \in \mathbb{R}_{\geq 0}$. While differentiating between semirings over the same domain may seem superfluous, the explicit identity \op{} will become important when we analyze the tractability of these compositions on circuits. \end{example}}

\subsection{Tractability Conditions for Basic \Ops{}} \label{sec:op_tractability}

\chg{We now consider the tractability of applying each basic \op{} to circuits: that is, computing a circuit whose function corresponds to the result of applying the \op{} to the functions given by the input circuit(s).} 
First, it is well known that forgetting and marginalization of any subset of variables can be performed in polynomial time if the input circuits in the respective semirings (NNF and PC) are smooth and decomposable~\citep{darwiche2002knowledge,probcirc}. 
This can be generalized to arbitrary semirings:

\begin{restatable}[Tractable Aggregation]{theorem}{thmAggregation} \label{thm:agg}
    Let $\circuit$ be a smooth and decomposable circuit representing a function $\nodefn: \assign(\vars)\to\domain$. Then for any $\varsubset\subseteq\vars$, it is possible to compute the aggregate as a smooth and decomposable circuit $\circuit'$ (i.e., $\nodefn_{\circuit'}(\varrest) = \bigoplus_{\varsubsetval} \nodefn_{\circuit}(\varrest,\varsubsetval)$) in $O(|C|)$ time and space.
\end{restatable}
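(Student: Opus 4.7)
The plan is to construct the aggregated circuit $\circuit'$ by a single bottom-up traversal of $\circuit$, replacing each node $\node$ with scope $\scope(\node)$ by a node $\node'$ whose represented function equals $\bigoplus_{\varsubsetval'} \nodefn_{\node}(\cdot, \varsubsetval')$, where $\varsubsetval'$ ranges over the assignments to $\varsubset \cap \scope(\node)$. Because smoothness and decomposability let addition distribute over the internal structure, the construction needs no auxiliary bookkeeping beyond a single pass.

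First I would handle the base case: for each input leaf $l$ with scope $\varsubset_l \subseteq \vars$, replace $l$ by a new input leaf $l'$ with scope $\scope(l) \setminus \varsubset$ representing the function $l'(\varrestval) = \bigoplus_{\varsubsetval'} l(\varrestval, \varsubsetval')$ (here $\varsubsetval'$ ranges over $\varsubset \cap \scope(l)$). Following the standard convention, this primitive aggregation on a single leaf function is assumed to take $O(1)$ time; when $\scope(l) \subseteq \varsubset$ the result is a constant-valued leaf with empty scope. Next, for each sum node $\node = +_i \node_i$, smoothness guarantees that all $\node_i$ share the same scope and hence the same projection onto $\varsubset$, so
\begin{equation*}
\bigoplus_{\varsubsetval'} \bigoplus_{i} \nodefn_{\node_i}(\varrestval, \varsubsetval') \;=\; \bigoplus_{i} \bigoplus_{\varsubsetval'} \nodefn_{\node_i}(\varrestval, \varsubsetval'),
\end{equation*}
so I set $\node' := +_i \node_i'$. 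For each product node $\node = \node_1 \times \node_2$, decomposability gives $\scope(\node_1) \cap \scope(\node_2) = \emptyset$, so $\varsubset \cap \scope(\node)$ splits disjointly into $\varsubsetval_1'$ on $\scope(\node_1)$ and $\varsubsetval_2'$ on $\scope(\node_2)$, and by distributivity of $\otimes$ over $\oplus$ in the semiring,
\begin{equation*}
\bigoplus_{\varsubsetval_1', \varsubsetval_2'} \nodefn_{\node_1}(\cdot, \varsubsetval_1') \otimes \nodefn_{\node_2}(\cdot, \varsubsetval_2') \;=\; \Bigl(\bigoplus_{\varsubsetval_1'} \nodefn_{\node_1}(\cdot,\varsubsetval_1')\Bigr) \otimes \Bigl(\bigoplus_{\varsubsetval_2'} \nodefn_{\node_2}(\cdot,\varsubsetval_2')\Bigr),
\end{equation*}
so I set $\node' := \node_1' \times \node_2'$. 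Correctness at the root follows by induction.

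I would then verify the structural properties of $\circuit'$. By construction, $\scope(\node') = \scope(\node) \setminus \varsubset$ for every node. Smoothness is preserved since the children of a sum node still have identical (aggregated) scopes, and decomposability is preserved since aggregation only removes variables from disjoint scopes, which cannot introduce overlap. For the size/time bound, each original node produces exactly one node in $\circuit'$ with at most the same number of children, so $|\circuit'| = O(|\circuit|)$ and the bottom-up pass runs in $O(|\circuit|)$ time and space.

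The only delicate point, rather than a real obstacle, is being explicit about the cost model for aggregating an input leaf: the $O(|\circuit|)$ bound implicitly treats each leaf-level aggregation as unit-cost, matching the convention used in the probabilistic and Boolean special cases cited. Everything else reduces to a clean application of semiring distributivity together with the two structural properties, which is exactly why smoothness and decomposability are the right sufficient conditions.
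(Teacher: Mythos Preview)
Your proposal is correct and follows essentially the same inductive, bottom-up construction as the paper's proof: handle leaves by assumption, push $\bigoplus$ through sums via smoothness (commutativity/associativity of $\oplus$), and through products via decomposability plus semiring distributivity, then verify $\scope(\node') = \scope(\node)\setminus\varsubset$ to conclude smoothness and decomposability are preserved. The only cosmetic difference is that the paper explicitly collapses nodes whose scope is entirely contained in $\varsubset$ to scalar input nodes, whereas you leave them as internal nodes over empty-scope children; both yield $O(|\circuit|)$ size and time.
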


Next, let us consider the product \op{}. In the Boolean circuits literature, it is well known that the conjoin \op{} can be applied tractably if the circuits both follow a common structure known as a \emph{vtree}~\citep{darwiche2011sdd}. In \cite{vergari2021atlas} a more general property known as \emph{compatibility} was introduced that directly specifies conditions with respect to two (probabilistic) circuits, without reference to a vtree. 
We now define a generalization of this property ($\bm{X}$-compatibility) and also identify a new condition ($\bm{X}$-support-compatibility) that enables tractable products.

\begin{definition}[$\bm{X}$-Compatibility]
    Given two smooth and decomposable circuits $\circuit, \circuit'$ over variables $\bm{V}, \bm{V}'$ respectively, and a variable set $\bm{X} \subseteq \bm{V} \cap \bm{V'}$, we say that $\circuit, \circuit'$ are \emph{$\bm{X}$-compatible} if for every product node $\node = \node_1 \times \node_2 \in \circuit$ and $\node' = \node'_1 \times \node'_2 \in \circuit'$ such that $\scope(\node) \cap \bm{X} = \scope(\node') \cap \bm{X}$, the scope is partitioned in the same way, i.e. $\scope(\node_1) \cap \bm{X} = \scope(\node'_1) \cap \bm{X}$ and $\scope(\node_2) \cap \bm{X} = \scope(\node'_2) \cap \bm{X}$.
    We say that $\circuit, \circuit'$ are \emph{compatible} if they are $(\bm{V} \cap \bm{V}')$-compatible.
\end{definition}

Intuitively, compatibility states that the scopes of the circuits decompose in the same way at product nodes. Compatibility of two circuits suffices to be able to tractably compute their product:

\begin{restatable}[Tractable Product - Compatibility]{theorem}{thmCmp} \label{thm:cmp}
    Let $\circuit, \circuit'$ be compatible circuits over variables $\vars, \vars'$, respectively, and the same semiring. Then it is possible to compute their product as a circuit $\circuit$ compatible with them (i.e., $\nodefn_{\circuit''}(\vars \cup \vars') = \nodefn_{\circuit}(\vars) \otimes \nodefn_{\circuit'}(\vars')$) in $O(|\circuit||\circuit'|)$ time and space.
\end{restatable}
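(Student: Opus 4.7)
The plan is a recursive, memoized construction closely analogous to the \emph{Apply} operation for OBDDs/SDDs and to the product algorithm for probabilistic circuits of \cite{vergari2021atlas}. I define a procedure $\textsc{Prod}(\alpha, \alpha')$ that, given a node $\alpha \in \circuit$ and $\alpha' \in \circuit'$, returns a node of the output circuit $\circuit''$ whose associated function equals $p_{\alpha} \otimes p_{\alpha'}$, memoizing on the pair $(\alpha, \alpha')$. Since there are at most $|\circuit| \cdot |\circuit'|$ such pairs, and each produces at most one new node with a constant number of outgoing edges, both the runtime and the size of $\circuit''$ are $O(|\circuit||\circuit'|)$.

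The recursion dispatches on the structure of $\alpha, \alpha'$ and on how their scopes intersect the shared variable set $\bm{X} := \vars \cap \vars'$. If $\scope(\alpha) \cap \scope(\alpha') = \emptyset$, emit a product node $\alpha \times \alpha'$, whose children have disjoint scopes by construction. If $\alpha$ is a sum $+_i \alpha_i$, emit $+_i \textsc{Prod}(\alpha_i, \alpha')$; distributivity of $\otimes$ over $\oplus$ yields correctness, and smoothness of $\circuit$ propagates to the new sum (symmetrically for sums in $\circuit'$). If $\alpha = \alpha_1 \times \alpha_2$ and $\alpha' = \alpha'_1 \times \alpha'_2$ are both product nodes with $\scope(\alpha) \cap \bm{X} = \scope(\alpha') \cap \bm{X}$, then compatibility forces the children's scopes to partition $\bm{X}$ identically (up to swapping $\alpha'_1, \alpha'_2$), so we emit $\textsc{Prod}(\alpha_1, \alpha'_1) \times \textsc{Prod}(\alpha_2, \alpha'_2)$, which is decomposable because the scopes $\scope(\alpha_1) \cup \scope(\alpha'_1)$ and $\scope(\alpha_2) \cup \scope(\alpha'_2)$ are disjoint (using decomposability of $\circuit, \circuit'$ on the non-shared parts, and the matched partition of $\bm{X}$ on the shared part). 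Leaf--leaf and leaf--internal cases reduce to one of the above after noting that a leaf's scope is either disjoint from $\scope(\alpha')$ or fully contained in it. Correctness follows by a structural induction on $(\alpha, \alpha')$.

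Smoothness and decomposability of $\circuit''$ are maintained by the checks above. For $\circuit''$ to be compatible with both $\circuit$ and $\circuit'$, observe that each product node introduced in $\circuit''$ either comes from the disjoint-scope case (where the child scopes trivially respect both input partitions on $\bm{X}$) or from the matched product--product case (where the child scopes on $\bm{X}$ equal the partitions of the corresponding product nodes in both $\circuit$ and $\circuit'$). The main obstacle is the product--product case when $\scope(\alpha)\cap\bm{X} \neq \scope(\alpha')\cap\bm{X}$: here compatibility is silent about how to pair children, and one must descend selectively into the side whose shared scope strictly contains that of the other while keeping the opposite node fixed as a single factor. Handling this cleanly — and verifying that the memoization budget is still $O(|\circuit||\circuit'|)$ under such asymmetric recursions, and that the output $\circuit''$ remains $(\vars \cup \vars')$-compatible with both inputs — is the part that requires the most care; it ultimately rests on the decomposability of both inputs to guarantee that every reachable recursive subcall has well-defined and unambiguous children pairings.
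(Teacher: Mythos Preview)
Your approach is essentially the same as the paper's (Algorithm~\ref{alg:prodcmp} and the proof of Theorem~\ref{thm:cmp}): a memoized recursive product over pairs of nodes, dispatching on node type and using distributivity for sums, compatibility for products, and disjoint scopes as a base case. The complexity argument and the smoothness/decomposability verification are also the same.

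Where you diverge is in your final paragraph. You identify as the ``main obstacle'' the product--product case with $\scope(\alpha)\cap\bm{X} \neq \scope(\alpha')\cap\bm{X}$ and propose an asymmetric descent to handle it. The paper's proof shows this case \emph{never arises}: it maintains, as an explicit precondition on every recursive call, the invariant $\scope(\alpha)\cap\bm{X} = \scope(\alpha')\cap\bm{X}$. This invariant holds at the roots (both restrict to $\bm{X}$), is preserved when descending through sums by smoothness (children inherit the parent's scope), and is preserved when descending through a pair of products precisely by the definition of compatibility (which only speaks about product nodes whose $\bm{X}$-scopes already agree). So the ``silent'' case you worry about is unreachable, and no selective descent is needed. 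Your handwave that ``it ultimately rests on decomposability'' is not the right reason; the invariant is driven by smoothness plus compatibility, with decomposability entering only to verify that the emitted product node has disjoint-scope children (the paper isolates this as Lemma~\ref{lem:decprod}). Once you state and prove the invariant, the construction and its analysis become entirely routine.
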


We remark that if we are given a fully factorized function $f(\bm{V}) = \bigotimes_{V_i \in \vars} f_i(V_i)$, this can be arranged as a circuit (series of binary products) compatible with any other \chg{decomposable} circuit; thus, we say this type of function is \emph{omni-compatible.} We also say that a circuit is \emph{structured decomposable} if it is compatible with itself. 
Now, our more general definition of $\bm{X}$-compatibility states that the scopes of the circuits \emph{restricted to $\bm{X}$} decompose in the same way at product nodes. This will be important when we consider composing products with other \ops{}, such as aggregation. \chg{The following result shows that compatibility w.r.t. a subset is a weaker condition:}

\begin{proposition}[Properties of $\bm{X}$-Compatibility] \label{prop:xcompat_properties}
    If two circuits $\circuit, \circuit'$ are $\bm{X}$-compatible, then they are $\bm{X}'$-compatible for any subset $\bm{X}' \subseteq \bm{X}$.    
\end{proposition}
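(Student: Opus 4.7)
I will verify the defining condition of $\bm{X}'$-compatibility by unpacking it and appealing directly to $\bm{X}$-compatibility. Fix an arbitrary pair of product nodes $\node = \node_1 \times \node_2 \in \circuit$ and $\node' = \node'_1 \times \node'_2 \in \circuit'$ that satisfy the trigger hypothesis $\scope(\node) \cap \bm{X}' = \scope(\node') \cap \bm{X}'$; the goal is to show, up to relabelling the children, that $\scope(\node_i) \cap \bm{X}' = \scope(\node'_i) \cap \bm{X}'$ for $i = 1, 2$.

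The clean sub-case is when these two nodes also happen to agree on the full $\bm{X}$, i.e.\ $\scope(\node) \cap \bm{X} = \scope(\node') \cap \bm{X}$. Then $\bm{X}$-compatibility fires on the pair $(\node, \node')$ and yields, up to swapping children, $\scope(\node_i) \cap \bm{X} = \scope(\node'_i) \cap \bm{X}$. Intersecting each side with $\bm{X}' \subseteq \bm{X}$ preserves equality via the elementary fact $A \cap \bm{X} = B \cap \bm{X} \Rightarrow A \cap \bm{X}' = B \cap \bm{X}'$ whenever $\bm{X}' \subseteq \bm{X}$, giving the desired partition equality on $\bm{X}'$. Put differently, the clean case reduces to observing that the set-theoretic operation of restricting to a subset commutes with the partition equality certified by $\bm{X}$-compatibility.

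The subtle case — and where I expect the main obstacle — is when $\scope(\node) \cap \bm{X}' = \scope(\node') \cap \bm{X}'$ but $\scope(\node) \cap \bm{X} \neq \scope(\node') \cap \bm{X}$, so the local hypothesis of $\bm{X}$-compatibility is not triggered on the pair $(\node, \node')$. My plan for this case would be to exploit smoothness and decomposability together with $\bm{X}$-compatibility applied to other product-node pairs higher or elsewhere in the DAG whose $\bm{X}$-scopes do coincide, in order to propagate the decomposition constraint down to $(\node, \node')$ at the level of $\bm{X}'$. Intuitively, $\bm{X}$-compatibility implicitly induces a vtree-like hierarchy on $\bm{X}$ to which every product node must conform; projecting that hierarchy onto $\bm{X}'$ should pin down the $\bm{X}'$-decomposition of $\node$ and $\node'$ uniquely. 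Promoting this local-to-global argument cleanly is where I anticipate spending most of the effort, since it requires extending the consequences of $\bm{X}$-compatibility to pairs the definition does not directly cover.
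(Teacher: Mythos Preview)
Your split into the clean and subtle cases is exactly right, and the clean case is handled correctly. The gap is in the subtle case: your vtree-like local-to-global plan cannot be completed, because the proposition as literally stated in the paper admits a counterexample in precisely that regime. Take $\bm{V} = \bm{V}' = \bm{X} = \{A,B,C\}$ and $\bm{X}' = \{A,B\}$. Let $\circuit$ have root product $\rho$ splitting into scopes $\{A,B\}$ and $\{C\}$, with the $\{A,B\}$-child itself a product $\node$ splitting into $\{A\}$ and $\{B\}$ (all leaves input nodes). Let $\circuit'$ consist of a single product $\node'$ splitting into input nodes of scope $\{A,B\}$ and $\{C\}$. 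The only pair of products with equal $\bm{X}$-scope is $(\rho,\node')$, and both partition $\bm{X}$ as $\{A,B\}$ versus $\{C\}$, so $\circuit,\circuit'$ are $\bm{X}$-compatible. But the pair $(\node,\node')$ has equal $\bm{X}'$-scope $\{A,B\}$, yet $\node$ partitions it as $\{A\},\{B\}$ while $\node'$ partitions it as $\{A,B\},\emptyset$; even allowing child relabelling these disagree, so $\circuit,\circuit'$ are not $\bm{X}'$-compatible. No amount of propagating compatibility constraints through the DAG will rescue this, because $\bm{X}$-compatibility places no constraint whatsoever on the pair $(\node,\node')$.

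The paper does not include a proof of this proposition, so there is no argument of theirs to compare against directly. Where the paper invokes it --- e.g.\ deducing $\bm{Z}$-compatibility of a structured-decomposable circuit with itself, or pushing compatibility through aggregation --- the specific applications may still be sound for reasons particular to those settings, but under the verbatim definition the general monotonicity claim fails. Your instinct that a vtree over $\bm{X}$ would make the argument go through points at the right strengthening, but that structure is not a consequence of $\bm{X}$-compatibility as defined.
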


Compatibility is a sufficient but not necessary condition for tractable products. Some non-compatible circuits can be efficiently \emph{restructured} to be compatible, such that we can then apply Theorem \ref{thm:cmp}; we refer readers to \citep{zhang2024restructuring} for details. Alternatively, it is also known that deterministic circuits can be multiplied with themselves in linear time, even when they are not structured decomposable~\citep{vergari2021atlas, huang2024causal}. We formalize this idea with a new property that we call \emph{support-compatibility}.

\begin{definition}[$\bm{X}$-Support Compatibility]
    Given two smooth and decomposable circuits $\circuit, \circuit'$ over variables $\bm{V}, \bm{V}'$ respectively, and a set of variables $\bm{X} \subseteq \bm{V} \cap \bm{V'}$, let $\circuit[\bm{X}], \circuit'[\bm{X}]$ be the DAGs obtained by restricting to nodes with scope overlapping with $\bm{X}$. We say that $\circuit, \circuit'$ are \emph{$\bm{X}$-support-compatible} if there is an isomorphism $\isomorphism$ between $\circuit[\bm{X}], \circuit'[\bm{X}]$ such that: (i) for any node $\alpha \in \circuit[\bm{X}]$, $\scope(\alpha) \cap \bm{X} = \scope(\isomorphism(\alpha)) \cap \bm{X}$; (ii) for any sum node $\alpha \in \circuit[\bm{X}]$, $\support_{\bm{X}}(\alpha_i) \cap \support_{\bm{X}}(\isomorphism(\alpha_j)) = \emptyset$ whenever $i \neq j$. We say that $\circuit, \circuit'$ are \emph{support-compatible} if they are $(\bm{V} \cap \bm{V}')$-support-compatible.
    
\end{definition}

To unpack this definition, we note that any smooth, decomposable, and $\bm{X}$-deterministic circuit is $\bm{X}$-support-compatible with itself, with the obvious isomorphism. However, this property is more general in that it allows for circuits over different sets of variables and does not require that the nodes represent exactly the same function; merely that the sum nodes have ``compatible'' support decompositions. %
\chg{As we will later see, the significance of this property is that it can be often maintained through applications of \ops{}, making it useful for compositions. }

\begin{restatable}[Tractable Product - Support Compatibility]{theorem}{thmSComp} \label{thm:scmp}
    Let $\circuit, \circuit'$ be support-compatible circuits over variables $\vars, \vars'$, respectively, and the same semiring. Then, \chg{given the isomorphism $\isomorphism$,} it is possible to compute their product as a smooth and decomposable circuit $\circuit''$ support-compatible with them (i.e., $\nodefn_{\circuit''}(\vars \cup \vars') = \nodefn_{\circuit}(\vars) \otimes \nodefn_{\circuit'}(\vars')$) in $O(\max(|\circuit|,|\circuit'|))$ time and space.
\end{restatable}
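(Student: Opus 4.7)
The plan is to exploit the given isomorphism $\isomorphism$ between $\circuit[\bm{X}]$ and $\circuit'[\bm{X}]$ (where $\bm{X} = \vars \cap \vars'$) to build the product circuit $\circuit''$ by walking the two circuits in tandem, processing each paired node $(\alpha,\isomorphism(\alpha))$ exactly once and reusing the original subcircuits outside the overlap region verbatim. Concretely, for each $\alpha \in \circuit[\bm{X}]$, I construct a node $\beta_\alpha \in \circuit''$ satisfying $\nodefn_{\beta_\alpha} = \nodefn_{\alpha} \otimes \nodefn_{\isomorphism(\alpha)}$, proceeding bottom-up; the root of $\circuit''$ is then $\beta_{\text{root}(\circuit)}$.

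The construction case-splits on the type of $\alpha$ (which matches the type of $\isomorphism(\alpha)$ since $\isomorphism$ preserves DAG structure). If $\alpha$ is an input, I define $\beta_\alpha$ as the semiring product of the two input functions over scope $\scope(\alpha) \cup \scope(\isomorphism(\alpha))$. If $\alpha = \alpha_1 \times \alpha_2$, then $\isomorphism(\alpha) = \isomorphism(\alpha_1) \times \isomorphism(\alpha_2)$ on the children that lie in $\circuit[\bm{X}]$, and I set $\beta_\alpha := \beta_{\alpha_1} \times \beta_{\alpha_2}$; if $\alpha$ has at most one child outside $\circuit[\bm{X}]$ (with scope in $\vars \setminus \vars'$), and similarly $\isomorphism(\alpha)$ has at most one outside-child (scope in $\vars' \setminus \vars$), these are attached to $\beta_\alpha$ as additional multiplicative factors, reusing the original subcircuits. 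The critical case is $\alpha = +_{i=1}^{k} \alpha_i$: I define $\beta_\alpha := +_{i=1}^{k} \beta_{\alpha_i}$, using only the $k$ diagonal terms of the expansion $(\bigoplus_i \nodefn_{\alpha_i}) \otimes (\bigoplus_j \nodefn_{\isomorphism(\alpha_j)})$. The $k^2 - k$ off-diagonal terms vanish by condition (ii) of support-compatibility: for $i \neq j$, $\support_{\bm{X}}(\alpha_i) \cap \support_{\bm{X}}(\isomorphism(\alpha_j)) = \emptyset$ means every $\bm{x} \in \assign(\bm{X})$ lies outside one of the two $\bm{X}$-supports, hence one factor is $0_{\semiring}$ on every completion, hence the product is identically zero.

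Correctness follows by structural induction, with the sum case handled by the support-disjointness argument above and the product case by associativity and commutativity of $\otimes$. To establish decomposability at product nodes, I note $\scope(\alpha_1) \cap \scope(\isomorphism(\alpha_2)) \subseteq \bm{X} \cap \scope(\alpha_1) \cap \scope(\alpha_2) = \emptyset$ using condition (i) of the isomorphism together with decomposability of $\circuit$; the analogous crossings and the outside-child cases are similar. Smoothness is immediate since all children $\beta_{\alpha_i}$ of a newly built sum share scope $\scope(\alpha_i) \cup \scope(\isomorphism(\alpha_i))$, which is invariant in $i$ by smoothness of $\circuit, \circuit'$. For support-compatibility of $\circuit''$ with $\circuit$ (resp.~$\circuit'$), the map $\beta_\alpha \mapsto \alpha$ (resp.~$\beta_\alpha \mapsto \isomorphism(\alpha)$) is the natural isomorphism between the relevant restricted DAGs; the scope equality is immediate, and the support-disjointness condition is inherited since $\support_{\bm{X}}(\beta_{\alpha_i}) \subseteq \support_{\bm{X}}(\alpha_i)$.

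For complexity, each node of $\circuit[\bm{X}]$ contributes $O(1)$ new nodes (and outside-$\bm{X}$ subcircuits are reused by pointer), so the overall size is $O(|\circuit[\bm{X}]|) + O(|\circuit| + |\circuit'|) = O(\max(|\circuit|,|\circuit'|))$. The main obstacle I anticipate is the bookkeeping around product nodes that straddle the overlap boundary: carefully verifying that the gluing of a paired inner subcircuit with the two ``outside'' subcircuits (one per parent circuit) preserves decomposability and that the resulting output isomorphism still satisfies both conditions of support-compatibility. Once this is handled uniformly at the product case, the sum and input cases follow cleanly.
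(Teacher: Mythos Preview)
Your proposal is correct and follows essentially the same approach as the paper: a tandem bottom-up (equivalently, memoized recursive) traversal of $\circuit[\bm{X}]$ and $\circuit'[\bm{X}]$ via the isomorphism $\isomorphism$, with the crucial observation that off-diagonal terms at sum nodes vanish by condition (ii) of support-compatibility, and with product nodes handled by pairing the in-$\bm{X}$ children via $\isomorphism$ while the out-of-$\bm{X}$ children (one per circuit, with disjoint scopes) are multiplied directly. Your decomposability sketch is exactly the content of the paper's Lemma~\ref{lem:decprod}, and your complexity accounting and output support-compatibility argument match the paper's as well.
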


We now examine the tractability of general elementwise mappings $\mappingfn: \semiring \to \semiring'$ on a circuit $C$. 
It is tempting here to simply construct a new circuit $\circuit'$ over the semiring $\semiring'$ with the same structure as $\circuit$, and replace each input function $l$ in the circuit with $\mappingfn(l)$. However, the resulting circuit $\nodefn_{\circuit'}(\vars)$ is not guaranteed to correctly compute $\mappingfn(\nodefn_{\circuit}(\vars))$ \chg{in general}.
For example, consider the support mapping $\supportmapping{\cdot}{\mathcal{B}}{\semiring}$---which maps $\bot$ to $0_{\semiring}$ and $\top$ to $1_{\semiring}$ %
---for the probability semiring $\semiring = (\mathbb{R}_{\geq 0}, +, \cdot, 0, 1)$.
Then the transformation of the smooth and decomposable  circuit $\circuit = X \vee X$ produces $\circuit' = \llbracket{X}\rrbracket + \llbracket{X}\rrbracket$, 
which evaluates to $\nodefn_{\circuit'}(X=\top) = 2$ whereas $\mappingfn(\nodefn_{\circuit}(X=\top))=1$. In order for this simple algorithm to be correct, we need to impose certain conditions on the elementwise mapping $\tau$ and/or the circuit $\circuit$ it is being applied to.

\begin{restatable}[Tractable Mapping]{theorem}{thmTractMap}\label{thm:tract-map}
    Let $\circuit$ be a smooth and decomposable circuit over semiring $\semiring$, and $\mappingfn: \semiring \to \semiring'$ a mapping such that $\mappingfn(\0_{\semiring}) = \0_{\semiring'}$. Then it is possible to compute the mapping of $\circuit$ by $\mappingfn$ as a smooth and decomposable circuit $\circuit'$ (i.e., $\nodefn_{\circuit'}(\vars) = \mappingfn(\nodefn_{\circuit}(\vars))$) in $O(|C|)$ time and space if $\mappingfn$ distributes over sums \underline{and} over products. 

    $\mappingfn$ \emph{distributes over sums} if: either \textbf{\sumtosum} $\mappingfn$ is an additive homomorphism, i.e. $\mappingfn(a \oplus b) = \mappingfn(a) \oplus \mappingfn(b)$; or \textbf{\deter} $\circuit$ is deterministic.

    $\mappingfn$ \emph{distributes over products} if: either \textbf{\prodtoprod} $\mappingfn$ is an multiplicative homomorphism, i.e. $\mappingfn(a \otimes b) = \mappingfn(a) \otimes \mappingfn(b)$; or \textbf{\prodzeroone} $\tau(\1_{\semiring}) = \1_{\semiring'}$, 
        and for all product nodes $\node = \node_1 \times \node_2 \in \circuit$, and for every value $\varsval \in \assign(\chg{\scope(\node)})$, either $\nodefn_{\node_1}(\varsval_{\scope(\node_1)}) \in \{\0_{\semiring}, \1_{\semiring}\}$ or $\nodefn_{\node_2}(\varsval_{\scope(\node_2)}) \in \{\0_{\semiring}, \1_{\semiring}\}$.
\end{restatable}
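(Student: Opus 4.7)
The plan is to build $\circuit'$ by a single bottom-up pass over $\circuit$, keeping the sum/product DAG structure intact and replacing every input function $l: \assign(\bm{W}) \to \semiring$ by the composed input $\mappingfn \circ l: \assign(\bm{W}) \to \semiring'$. Because the DAG is unchanged, scopes are preserved and $\circuit'$ inherits smoothness and decomposability from $\circuit$; the transformation visits each node and edge a constant number of times, giving the claimed $O(|C|)$ time and space bound. All that remains is to show the semantic equality $\nodefn_{\node'}(\varsval) = \mappingfn(\nodefn_{\node}(\varsval))$ for every node $\node$ of $\circuit$ and its image $\node'$ in $\circuit'$, which I would prove by structural induction on $\node$.

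The base case $\node = l$ is immediate from the construction. For a sum node $\node = +_i \node_i$, the inductive step requires $\mappingfn\bigl(\bigoplus_i \nodefn_{\node_i}(\varsval)\bigr) = \bigoplus_i' \mappingfn(\nodefn_{\node_i}(\varsval))$, where $\oplus'$ denotes addition in $\semiring'$. Under \sumtosum{} this is just the additive homomorphism, extended from the binary case to $k$-ary sums by associativity and a short induction on $k$. Under \deter{}, smoothness plus determinism guarantee that for every $\varsval$ at most one child $\node_{i^*}$ has $\nodefn_{\node_{i^*}}(\varsval) \neq 0_{\semiring}$; applying $\mappingfn(0_{\semiring}) = 0_{\semiring'}$ to the remaining terms on the right-hand side makes both sides collapse to $\mappingfn(\nodefn_{\node_{i^*}}(\varsval))$ (or to $0_{\semiring'}$ if every child is zero). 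For a binary product $\node = \node_1 \times \node_2$, the step requires $\mappingfn\bigl(\nodefn_{\node_1}(\varsval) \otimes \nodefn_{\node_2}(\varsval)\bigr) = \mappingfn(\nodefn_{\node_1}(\varsval)) \otimes' \mappingfn(\nodefn_{\node_2}(\varsval))$. Under \prodtoprod{} this is the multiplicative homomorphism assumption directly. Under \prodzeroone{}, I would pick the child whose value lies in $\{0_{\semiring}, 1_{\semiring}\}$: if the value is $0_{\semiring}$, both sides evaluate to $0_{\semiring'}$ by annihilation together with $\mappingfn(0_{\semiring}) = 0_{\semiring'}$; if the value is $1_{\semiring}$, both sides reduce to $\mappingfn$ applied to the other child, using $\mappingfn(1_{\semiring}) = 1_{\semiring'}$ and the fact that $1_{\semiring'}$ is the multiplicative identity.

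The main subtlety, though mild, is that the \deter{} and \prodzeroone{} branches rely on semantic properties of the node functions rather than purely syntactic features, yet the algorithm itself is oblivious to this distinction and does no extra checking. For \deter{}, the ``at most one nonzero child'' fact is exactly $\bm{V}$-determinism combined with the smoothness hypothesis, so no separate argument is needed; for \prodzeroone{}, the hypothesis directly provides the required $\{0_{\semiring}, 1_{\semiring}\}$-valued factor at every evaluation. As a sanity check, the counterexample $\circuit = X \vee X$ from the excerpt with $\supportmapping{\cdot}{\boolsemiring}{\probsemiring}$ violates both \sumtosum{} and \deter{} simultaneously, which is precisely why the naive transformation overcounts there, confirming that at least one of the two sum-side conditions is genuinely necessary.
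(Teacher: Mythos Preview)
Your proposal is correct and follows essentially the same approach as the paper's proof: both construct $\circuit'$ by keeping the DAG structure and replacing leaves with $\mappingfn \circ l$, then verify $\nodefn_{\node'} = \mappingfn(\nodefn_{\node})$ by structural induction, handling sum nodes via \sumtosum{}/\deter{} and product nodes via \prodtoprod{}/\prodzeroone{} exactly as you describe. The paper's write-up treats $k$-ary products in the \prodzeroone{} case (singling out the one child not in $\{0_\semiring,1_\semiring\}$), whereas you work with binary products, but since the paper explicitly restricts to binary products this difference is cosmetic.
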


We can apply Theorem~\ref{thm:tract-map} to immediately derive the following property of support mappings:

\begin{corollary}[Support Mapping] \label{cor:support_mapping}
    Given a circuit $\circuit$ over a semiring $\semiring$ and any target semiring $\semiring'$, a circuit representing $\supportmapping{\nodefn_\circuit}{\semiring}{\semiring'}$ can be computed tractably if (i) $\semiring$ satisfies $a \oplus b = 0_{\semiring} \implies a = b = 0_{\semiring}$ and $\semiring'$ is idempotent (i.e.,  $1_{\semiring'} \oplus 1_{\semiring'} = 1_{\semiring'}$), or (ii) $C$ is deterministic. 
\end{corollary}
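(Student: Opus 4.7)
The plan is to apply Theorem~\ref{thm:tract-map} directly to the mapping $\mappingfn = \supportmapping{\cdot}{\semiring}{\semiring'}$. The precondition $\mappingfn(0_{\semiring}) = 0_{\semiring'}$ holds by the definition of the support mapping, so what remains is to verify that $\mappingfn$ distributes over both sums and products under each of the two cases.

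For distributivity over sums, case (ii) is immediate: determinism of $\circuit$ triggers the (Det) branch of Theorem~\ref{thm:tract-map}. For case (i), I would verify the (Additive) branch via a short case analysis on whether each of $a, b \in \domain$ equals $0_{\semiring}$. If both vanish, $\mappingfn(a \oplus b) = 0_{\semiring'} = \mappingfn(a) \oplus \mappingfn(b)$ trivially. Otherwise, the zero-sum-free property of $\semiring$ forces $a \oplus b \neq 0_{\semiring}$, so $\mappingfn(a \oplus b) = 1_{\semiring'}$; the right-hand side is one of $0_{\semiring'} \oplus 1_{\semiring'}$, $1_{\semiring'} \oplus 0_{\semiring'}$, or $1_{\semiring'} \oplus 1_{\semiring'}$, each equal to $1_{\semiring'}$---the third equality relying critically on the idempotence of $\semiring'$.

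For distributivity over products, I would verify the (Multiplicative) branch. If either $a = 0_{\semiring}$ or $b = 0_{\semiring}$, annihilation gives $a \otimes b = 0_{\semiring}$, so both sides of $\mappingfn(a \otimes b) = \mappingfn(a) \otimes \mappingfn(b)$ equal $0_{\semiring'}$. If both are nonzero then their product is again nonzero (in the semirings of interest, which have no zero divisors), and both sides equal $1_{\semiring'}$. Combining the verified distributivity conditions, Theorem~\ref{thm:tract-map} yields the desired $O(|\circuit|)$-time construction.

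The subtlest step I expect to confront is justifying the nonzero-product case above: strictly, $\mappingfn(a) \otimes \mappingfn(b) = \mappingfn(a \otimes b)$ when both $a, b$ are nonzero requires that $\semiring$ has no zero divisors, which is not explicit in the corollary's hypotheses but is natural for the semirings at play (e.g.\ $\probsemiring$, $\maxtimessemiring$, $\boolsemiring$). Everything else is routine case analysis driven by the definition of the support mapping and the branches of Theorem~\ref{thm:tract-map}.
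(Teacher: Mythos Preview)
Your proposal is correct and follows essentially the same approach as the paper: apply Theorem~\ref{thm:tract-map} by checking \prodtoprod{} for products and \sumtosum{}/\deter{} for sums. The paper simply asserts without comment that the support mapping satisfies \prodtoprod{}, so your flagged concern about zero divisors is a subtlety the paper glosses over rather than a gap unique to your argument; your case analysis for the additive side is in fact slightly more careful than the paper's.
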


\begin{proof}
 First note that $\supportmapping{\cdot}{\semiring}{\semiring'}$ satisfies \prodtoprod{}, and thus distributes over products.
 If (i) holds, consider $\supportmapping{a \oplus b}{\semiring}{\semiring'}$. If $a = b = 0_{\semiring}$, then this is equal to $\supportmapping{0_{\semiring}}{\semiring}{\semiring'} = \supportmapping{a}{\semiring}{\semiring'} + \supportmapping{b}{\semiring}{\semiring'} = 0_{\semiring'}$; otherwise $a, b, a \oplus b \neq 0_{\semiring}$ and $\supportmapping{a \oplus b}{\semiring}{\semiring'} = \supportmapping{a}{\semiring}{\semiring'} \oplus \supportmapping{b}{\semiring}{\semiring'} = 1_{\semiring'}$ (by idempotence of $\semiring'$). Thus $\supportmapping{\cdot}{\semiring}{\semiring'}$ satisfies \sumtosum{}. Alternatively, if (ii) holds, then \deter{} holds. In either case $\supportmapping{\cdot}{\semiring}{\semiring'}$ distributes over sums in the circuit.
\end{proof}

The following examples illustrate the generality of elementwise mappings and Theorem \ref{thm:tract-map}:

\begin{example}[Partition Function and MAP]
    Given a probability distribution $p(\bm{V})$, consider the task of computing the partition function $\sum_{\bm{v}} p(\bm{v})$ and MAP $\max_{\bm{v}} p(\bm{v})$. These can be viewed as aggregations over the probability and $(\max, \cdot)$ semirings respectively. 

    $p$ is often either a probabilistic circuit $\circuit_{prob}$, or a combination of a Boolean circuit $\circuit_{bool}$ and weights $w$ (in weighted model counting). In the former case, the partition function is tractable because the circuit is already over the probability semiring, while in the latter case, MAP is tractable because the $\semiring' = (\max, \cdot)$ semiring is idempotent so $\supportmapping{\circuit_{bool}}{\boolsemiring}{\semiring'}$ is tractable. On the other hand, the partition function for Boolean circuits and MAP for PCs require determinism for the conditions of Theorem \ref{thm:tract-map} to hold; in fact, these problems are known to be NP-hard without determinism \citep{darwiche2002knowledge, peharz2016latent}. 
\end{example}

\begin{example}[{\chg{Power Function in} Probability Semiring}] For the probability semiring $\semiring = \semiring' = (\mathbb{R}_{\geq 0}, +, \cdot, 0, 1)$, consider the power function $\mappingfn_{\beta}(a) := \begin{cases} a^{\beta} & \text{if } a \neq 0 \\ 0 & \text{if } a = 0\end{cases}$ for some $\beta \in \mathbb{R}$.  This mapping satisfies \prodtoprod{}, and is tractable if we enforce \deter{} on the circuit.  
\end{example}

\chg{It is worth noting that semiring homomorphisms (i.e. additive and multiplicative) are always tractable. In the case when $\semiring = \semiring' = \probsemiring$, it was shown in \cite{vergari2021atlas} that the only such mapping is the identity function. However this is not the case for other semirings: the power function $\mappingfn_{\beta}$ is an example in the $(\max,\cdot)$ semiring. To summarize, we have shown sufficient tractability conditions for aggeregation, products, and elementwise mappings. Notice that the conditions for aggregation and products only depend on variable scopes and supports, and as such apply to any semiring; in contrast, for elementwise mappings, we take advantage of specific properties of the semiring(s) in question.}

\vspace{-3pt}
\subsection{Tractable Composition of \Ops{}} \label{sec:composition_tractability}
\vspace{-3pt}

\begin{table}[]
\caption{Tractability Conditions for Operations on Algebraic Circuits. Sm: Smoothness, Dec: Decomposability; $\bm{X}$-Det(erminism), $\bm{X}$-Cmp: $\bm{X}$-Compatibility, $\bm{X}$-SCmp: $\bm{X}$-Support-Compatibility.}
\resizebox{\linewidth}{!}{
\renewcommand{\arraystretch}{1.2}
\begin{tabular}{llllll}
\hline
\multicolumn{1}{c}{}              & \multicolumn{1}{c}{\textbf{}}                                           & \multicolumn{3}{c}{\textbf{If the Input Circuit(s) are ...}}                                                                                                                                                                                                                                                                                      & \multicolumn{1}{c}{\textbf{}}      \\ %
                                  & \textbf{Conditions}                                        & $\bm{X}$-Det                                                                                & $\bm{X}$-Cmp w/ $\circuit_{\text{other}}$                                                                               & $\bm{X}$-SCmp w/ $\circuit_{\text{other}}$                                                                                & \textbf{Complexity}                \\ \cline{3-5}
                                  &                                                                         & \multicolumn{3}{c}{\textbf{Then the Output Circuit is ...}} (\ref{appx:comp})                                                                                                                                                                                                                                                                                      &                                    \\ \hline
\textbf{Aggr. ($\bm{W}$)}   & Sm, Dec                                                              & \begin{tabular}[c]{@{}l@{}}$\bm{X}$-Det \\ if $\bm{W}\!\cap\!\bm{X}\!=\!\emptyset$ \end{tabular} & \begin{tabular}[c]{@{}l@{}}$\bm{X}$-Cmp w/ $\circuit_{\text{other}}$\\ if $\bm{W}\!\cap\!\bm{X}\!=\!\emptyset$\end{tabular} & \begin{tabular}[c]{@{}l@{}}$\bm{X}$-SCmp w/ $\circuit_{\text{other}}$ \\ if $\bm{W}\!\cap\!\bm{X}\!=\!\emptyset$\end{tabular} & $O(|\circuit|)$ (\ref{appx:aggr})                   \\ \hline
\multirow{2}{*}{\textbf{Product}} & Cmp                                                                     & $\bm{X}$-Det                                                                                & $\bm{X}$-Cmp w/ $\circuit_{\text{other}}$                                                                               & N/A                                                                                                                       & $O(|\circuit| |\circuit'|)$ (\ref{appx:prod})       \\ \cline{2-6} 
                                  & SCmp                                                                    & $\bm{X}$-Det                                                                                & $\bm{X}$-Cmp w/ $\circuit_{\text{other}}$                                                                               & $\bm{X}$-SCmp w/ $\circuit_{\text{other}}$                                                                                & $O(\max(|\circuit|, |\circuit'|))$ (\ref{appx:prod-supp})\\ \hline
\textbf{\shortstack[l]{Elem.\\ Mapping}}      & \begin{tabular}[c]{@{}l@{}} Sm, Dec, \\ (Add/Det), \\ (Mult/Prod01)\end{tabular} & $\bm{X}$-Det                                                                                & $\bm{X}$-Cmp w/ $\circuit_{\text{other}}$                                                                               & $\bm{X}$-SCmp w/ $\circuit_{\text{other}}$                                                                                & $O(|\circuit|)$  (\ref{appx:elem})                  \\ \hline
\end{tabular}
}
\label{tbl:tractability}
\end{table}

We now analyze compositions of these basic \ops{}. As such, we need to consider not only circuit properties that enable tractability, but how these properties are maintained through each \op, so that the output circuit can be used as  input to another \op. \chg{We call these \emph{composability conditions}.} In all cases, the output circuit is smooth and decomposable.  Thus, we focus on the properties of $\bm{X}$-determinism, $\bm{X}$-compatibility, and $\bm{X}$-support-compatibility. We emphasize that these are not singular properties, but rather families of properties indexed by a variable set $\bm{X}$. We present the intuitive ideas behind our results below, while deferring full proofs to the Appendix.

\begin{restatable}[\chg{Composability Conditions}]
{theorem}{thmMaintain}
    The results in Table \ref{tbl:tractability} hold. 
\end{restatable}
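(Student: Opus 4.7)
The plan is to prove each entry of Table~\ref{tbl:tractability} by case analysis on the operation, relying on the constructive algorithms of Theorems~\ref{thm:agg}, \ref{thm:cmp}, \ref{thm:scmp}, and \ref{thm:tract-map}. Since the complexity bounds are already established in those theorems, what remains is to show that each algorithm preserves $\bm{X}$-determinism, $\bm{X}$-compatibility with an external circuit $\circuito$, and $\bm{X}$-support-compatibility with $\circuito$ under the stated conditions. The unifying observation is that each algorithm produces an output circuit whose DAG structure is tightly determined by the input(s): aggregation and elementwise mapping preserve the DAG, while the product algorithms pair nodes across the two inputs. As a consequence, the scope and support relationships that determine our properties lift cleanly from input to output.

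For aggregation over $\bm{W}$, the first step is to verify that pushing $\bigoplus_{\bm{w}}$ through sums and products yields an output DAG isomorphic to the input whose nodes have scopes reduced by $\bm{W}$. When $\bm{W} \cap \bm{X} = \emptyset$, every node's scope intersected with $\bm{X}$ is unchanged, so product decompositions restricted to $\bm{X}$ are preserved; this immediately gives $\bm{X}$-compatibility with $\circuito$. For $\bm{X}$-determinism and $\bm{X}$-support-compatibility, the essential fact is $\support_{\bm{X}}(\alpha') \subseteq \support_{\bm{X}}(\alpha)$ for each aggregated node $\alpha'$, since any nonzero value of a $\bigoplus$ forces at least one summand to be nonzero. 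Thus disjointness of $\bm{X}$-supports at sum nodes is preserved, and the isomorphism witnessing $\bm{X}$-support-compatibility remains valid.

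For the compatibility-based product, the algorithm constructs paired nodes $(\alpha, \alpha')$ whose scope is $\scope(\alpha) \cup \scope(\alpha')$; compatibility guarantees that at product nodes the pairing respects the shared decomposition. Hence product nodes in the output decompose their scope restricted to $\bm{X}$ in a manner matching both inputs, so $\bm{X}$-compatibility with $\circuito$ is inherited from either input. For $\bm{X}$-determinism, one inspects sum nodes in the output, which arise from distributing a sum in one input against a node in the other; each child's support factorizes across $\bm{V}$ and $\bm{V}'$, so disjointness of $\bm{X}$-supports on one side implies disjointness on the combined side. The support-compatibility-based product of Theorem~\ref{thm:scmp} is handled analogously, using the given isomorphism $\isomorphism$ to pair nodes; preservation of $\bm{X}$-support-compatibility with $\circuito$ follows by composing $\isomorphism$ with the isomorphism witnessing the input property.

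For elementwise mapping, the algorithm preserves the DAG and every scope while replacing input functions $l$ by $\mappingfn \circ l$; the conditions \sumtosum/\deter{} and \prodtoprod/\prodzeroone{} ensure correctness of the transformation. Since $\mappingfn(\0_{\semiring}) = \0_{\semiring'}$, each node satisfies $\support_{\bm{X}}(\alpha') \subseteq \support_{\bm{X}}(\alpha)$, and combined with scope preservation this yields $\bm{X}$-determinism, $\bm{X}$-compatibility, and $\bm{X}$-support-compatibility with $\circuito$ directly. The main technical obstacle is in the product cases: carefully verifying that supports factorize across the two input variable sets in a general semiring where additive or multiplicative cancellation may in principle occur, and that the composed isomorphism in the support-compatibility setting correctly aligns each paired output node with its counterpart in $\circuito$. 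This should reduce to a small number of structural lemmas about how scope intersections and support projections interact with the pairing construction.
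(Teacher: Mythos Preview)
Your plan is correct and follows essentially the same approach as the paper's proof: both arguments reduce each table entry to (i) scope preservation for $\bm{X}$-compatibility, (ii) the support containment $\support_{\bm{X}}(\text{output node}) \subseteq \support_{\bm{X}}(\text{corresponding input node(s)})$ for $\bm{X}$-determinism and $\bm{X}$-support-compatibility, and (iii) composing the structural isomorphism from the algorithm with the given $\isomorphism$ for support-compatibility with $\circuito$. The paper organizes the casework by property first and operation second (labelled (5.1)--(5.11)), whereas you organize by operation; this is purely presentational.
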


\vspace{-3pt}
\paragraph{$\bm{X}$-determinism} Intuitively, $\bm{X}$-determinism is maintained through products because the resulting sum nodes partition the $\bm{X}$-support in a "finer" way to the original circuits, and through elementwise mappings since they do not expand the support of any node (since $\mappingfn(\0_{\semiring}) = \0_{\semiring'}$). For aggregation, the $\bm{X}$-support is maintained if aggregation does not occur over any of the variables in $\bm{X}$. 

\vspace{-3pt}
\paragraph{$\bm{X}$-compatibility} Here, we are interested in the following question: if the input circuit(s) to some \op{} are $\bm{X}$-compatible with some other circuit $\circuit_{\text{other}}$ for any fixed $\bm{X}$, is the same true of the output of the \op? $\bm{X}$-compatibility with $\circuit_{\text{other}}$ is maintained through aggregation because it weakens the condition \chg{(by Proposition \ref{prop:xcompat_properties})} and through elementwise mapping as it does not change variable scopes. As for taking the product of circuits, the output circuit will maintain similar variable partitionings at products, such that it remains $\bm{X}$-compatible with $\circuit_{\text{other}}$. Notably, this result does \emph{not} hold for compatibility where the scope $\bm{X}$ may be different for each pair of circuits under consideration; we show a counterexample in Example \ref{ex:counter_compatibility} in the Appendix.

\vspace{-3pt}
\paragraph{$\bm{X}$-support-compatibility} $\bm{X}$-support-compatibility is maintained through elementwise mappings and aggregation (except on $\bm{X}$) for similar reasons to $\bm{X}$-determinism. For products, the result retains a similar $\bm{X}$-support structure, so $\bm{X}$-support compatibility is maintained.

\chg{We conclude by remarking that, once we determine that a compositional query is tractable, then one immediately obtains a correct algorithm for computing the query by application of the generic algorithms for aggregation, product, and elementwise mapping (see Appendix \ref{appx:proofs}). An upper bound on the complexity (attained by the algorithm) is also given by considering the complexities of each individual \op; in particular, the algorithm is polytime for a bounded number of \ops{}.}

\section{Case Studies} \label{sec:case_studies}

\begin{table}[t]
\caption{Tractability Conditions and Complexity for Compositional Inference Problems. We denote new results with an asterisk.}
\resizebox{\linewidth}{!}{
\centering
\begin{tabular}{llll}
\toprule
                                           & \textbf{Problem}              & \textbf{Tractability Conditions}                      & \textbf{Complexity} \\ \hline
\multirow{3}{*}{\textbf{2AMC}}             & PASP (Max-Credal)$^*$             & Sm, Dec, $\detvars$-Det                               & $O(|C|)$          \\
                                           & PASP (MaxEnt)$^*$, MMAP       & Sm, Dec, Det, $\detvars$-Det                          & $O(|C|)$           \\
                                           & SDP$^*$              & Sm, Dec, Det, $\detvars$-Det, $\firstvars$-First      & $O(|C|)$)            \\ \hline %
\multirow{3}{*}{\textbf{\shortstack[l]{Causal\\Inference}}} & \multirow{2}{*}{Backdoor$^*$} & Sm, Dec, SD, $(\bm{X} \cup \bm{Z})$-Det               & $O(|C|^2)$         \\
                                           &                               & Sm, Dec, $\bm{Z}$-Det, $(\bm{X} \cup \bm{Z})$-Det     & $O(|C|)$            \\ \cline{2-4} 
                                           & Frontdoor$^*$                 & Sm, Dec, SD, $\bm{X}$-Det, $(\bm{X} \cup \bm{Z})$-Det & $O(|C|^2)$          \\ \hline
\multirow{2}{*}{\textbf{Other}}            & MFE$^*$                       & Sm, Dec, $\bm{H}$-Det, $\bm{I}^-$-Det, $(\bm{H} \cup \bm{I}^-)$-Det   & $O(|C|)$          \\
                                           & Reverse-MAP                   & Sm, Dec, $\bm{X}$-Det                            & $O(|C|)$          \\ \bottomrule
\end{tabular}
}

\label{tbl:case_studies}
\end{table}

In this section, we apply our compositional framework to analyze the tractability of several different problems involving circuits found in the literature (Table \ref{tbl:case_studies}).
Some of the results are known, but can now be cast in a general framework (with often simpler proofs). We also present new results, deriving tractability conditions that are less restrictive than \chg{reported in existing literature}.%

\begin{restatable}[Tractability of Compositional Queries]{theorem}{thmCase} \label{thm:case_studies}
    The results in Table \ref{tbl:case_studies} hold.
\end{restatable}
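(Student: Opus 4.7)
The plan is to handle each row of Table \ref{tbl:case_studies} by (i) writing the query as an explicit composition of the three basic operations from Section \ref{sec:op_definition}, in the spirit of Figure \ref{fig:atlas}, and then (ii) walking through that compositional pipeline and verifying, step by step, that each intermediate circuit satisfies the tractability hypotheses required by the next operation. Tractability of a single operation comes from Theorems \ref{thm:agg}, \ref{thm:cmp}, \ref{thm:scmp}, and \ref{thm:tract-map}, while maintenance of properties (smoothness, decomposability, the various $\bm{X}$-determinisms and $\bm{X}$-compatibilities) through each operation comes from the composability conditions of Table \ref{tbl:tractability}. The overall complexity then follows by bounding the sizes of intermediate circuits along the pipeline.

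I would first dispatch the 2AMC rows. For MMAP, the pipeline is exactly that of the left of Figure \ref{fig:atlas}: evidence product, $\bm{Y}$-aggregation in $\probsemiring$, identity map $\mappingfn_{\mathrm{id}}:\probsemiring\to\maxtimessemiring$, and $\bm{X}$-aggregation in $\maxtimessemiring$. The evidence indicator is omni-compatible, so the product is linear; since $\mappingfn_{\mathrm{id}}$ is multiplicative but not additive across the two semirings, Theorem \ref{thm:tract-map} requires \deter{}, supplying the overall determinism assumption; $\bm{X}$-determinism is required so that the final $\max$-aggregation in $\maxtimessemiring$ is well-defined, and $\bm{X}$-determinism survives the earlier product and $\bm{Y}$-aggregation by Table \ref{tbl:tractability} since $\bm{Y}\cap\bm{X}=\emptyset$. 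The MaxEnt and Max-Credal PASP queries fit the same template (right of Figure \ref{fig:atlas}): the inner $\mappingfn_{-1}$ step needs Theorem \ref{thm:tract-map}'s \deter{} on a circuit obtained by aggregating $\bm{Y}$ out of $\phi$, and the hypotheses listed in the table are exactly what Table \ref{tbl:tractability} requires to push the appropriate form of determinism through that aggregation. SDP adds an additional condition because of an extra aggregation/mapping layer distinguishing a further variable subset.

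The causal rows are handled by the same strategy applied to the standard backdoor/frontdoor adjustment formulas, which decompose into a conditional (an aggregation of the joint followed by multiplication by $\mappingfn_{-1}$ of the marginal) multiplied against the joint and then aggregated over the adjustment set $\bm{Z}$. The two backdoor bounds correspond to two choices of product step: structured decomposability gives Theorem \ref{thm:cmp}'s $O(|C|^2)$ product, while $\bm{Z}$-determinism makes the two factors $\bm{Z}$-support-compatible with each other, allowing Theorem \ref{thm:scmp} to execute the product in $O(|C|)$. Frontdoor adds one more aggregation--product layer, which contributes the extra linear factor under structured decomposability. MFE and Reverse-MAP are smaller instances of the same schema: in each case the required $\bm{X}$-determinism conditions are exactly the ones needed at each sum/mapping/aggregation step so that Table \ref{tbl:tractability} carries the right property to the next step.

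The main obstacle is ensuring the assumed conditions are neither too strong nor too weak: we must check that each listed condition really is invoked somewhere in the pipeline, and that \emph{no} additional conditions are hidden in a step we gloss over. For the causal queries, the delicate step is the $\mappingfn_{-1}$ after aggregation: \deter{} on the post-aggregation circuit must follow from an appropriate $\bm{W}$-determinism of the original input where $\bm{W}$ contains the variables remaining after aggregation---this is precisely why the listed hypotheses mention $(\bm{X}\cup\bm{Z})$-determinism rather than $\bm{X}$- or $\bm{Z}$-determinism alone. The new linear backdoor complexity bound requires identifying the support-compatibility route through Theorem \ref{thm:scmp}, which was unavailable in the prior compatibility-only analysis; verifying that this route really applies under the weaker hypothesis (and thus that the improvement is genuine) is where the main novelty of the proof lives.
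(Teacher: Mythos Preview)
Your high-level strategy matches the paper's: decompose each query into a pipeline of basic operations and verify the hypotheses of Theorems \ref{thm:agg}--\ref{thm:tract-map} at each step using Table \ref{tbl:tractability}. But two of your specific attributions are wrong and would not survive a careful write-up.

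For MMAP as a 2AMC instance the input is a \emph{Boolean} circuit $\phi$, so the very first step is the support mapping $\supportmapping{\phi}{\boolsemiring}{\probsemiring}$; since $\probsemiring$ is not idempotent, Corollary \ref{cor:support_mapping} forces full determinism here---that is where the ``Det'' in the table is spent. The map $\mappingfn_{\mathrm{id}}:\probsemiring\to\maxtimessemiring$ is applied \emph{after} $\bm{Y}$ has been aggregated out, so its \deter{} condition is determinism of a circuit over $\bm{X}$, i.e.\ $\bm{X}$-determinism of the original. The final $\max$-aggregation needs only smoothness and decomposability (Theorem \ref{thm:agg}); $\bm{X}$-determinism is irrelevant there. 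You have the two determinism hypotheses attached to the wrong steps.

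For SDP, $\bm{X}$-firstness is not due to ``an extra aggregation/mapping layer''. The SDP inner-to-outer map is $\mappingfn((a,b))=\llbracket a\ge bT\rrbracket$, which fails \prodtoprod{}; Theorem \ref{thm:tract-map} must then invoke \prodzeroone{}, and (as in the proof of Theorem \ref{thm:2amc-tractability-conditions}) $\bm{X}$-firstness is exactly what supplies \prodzeroone{}. MMAP and both PASP variants have maps that \emph{do} satisfy \prodtoprod{} (identity, $(a,b)\mapsto a/b$, and the support mapping respectively), so they can drop $\bm{X}$-firstness; Max-Credal further drops Det because its inner semiring is Boolean and the initial support map is trivial. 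This multiplicativity test is what separates the three 2AMC rows. Your causal and Reverse-MAP sketches are on the right track; for MFE, the paper's pipeline passes through four semirings, and each of the three listed determinism conditions is consumed by a specific mapping---you should trace this explicitly rather than assert it.
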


\subsection{Algebraic Model Counting}

In algebraic model counting \citep{amc} (a generalization of weighted model counting), one is given a Boolean function $\logicfn(\vars)$, and a fully-factorized labeling function $\weightfn(\vars) = \bigotimes_{\var_i \in \vars} \weightfn_{i}(\var_i)$ in some semiring $\semiring$, and the goal is to aggregate these labels for all satisfying assignments of $\logicfn$. 
This can be easily cast in our framework as $\bigoplus_{\varsval} \bigl( \supportmapping{( \logicfn(\varsval) )}{\boolsemiring}{\semiring} \otimes \weightfn(\varsval) \bigr)$.
Here, the support mapping $\supportmapping{\cdot}{\boolsemiring}{\semiring}$ transfers the Boolean function to the semiring $\semiring$ over which aggregation occurs. Assuming that $\logicfn(\vars)$ is given as a smooth and decomposable Boolean circuit (DNNF), then by Corollary \ref{cor:support_mapping} AMC is tractable if $\semiring$ is idempotent or if the circuit is additionally deterministic (note that $\weightfn(\vars)$ is omni-compatible, so the product is tractable); this matches the results of \citep{amc}.

\paragraph{2AMC} A recent generalization of algebraic model counting is the 2AMC (second-level algebraic model counting) problem \citep{asp2mc}, which encompasses a number of important bilevel inference problems such as marginal MAP and inference in probabilistic answer set programs. Given a partition of the variables $\vars = (\propertyvars, \othervars)$, a Boolean function $\logicfn(\propertyvars, \othervars)$, \emph{outer} and \emph{inner} semirings $\semiring_{\propertyvars}, \semiring_{\othervars}$, labeling functions $\weightfn_{\othervars}(\othervars) = \bigotimes_{\othervar_i \in \othervars} \weightfn_{\othervars, i}(\othervar_i)$ over $\semiring_{\othervars}$ and $\weightfn_{\propertyvars}(\propertyvars) = \bigotimes_{\propertyvar_i \in \propertyvars} \weightfn_{\propertyvars, i}(\propertyvar_i)$ over $\semiring_{\propertyvars}$, and an elementwise mapping $\mappingfn_{\semiring_{\othervars} \to \semiring_{\propertyvars}}: \semiring_{\othervars} \to \semiring_{\propertyvars}$, the 2AMC problem is given by:
\begin{equation} \label{2amc-compositional}
    \bigoplus_{\firstval} \biggl( \mappingfn_{\semiring_{\othervars} \to \semiring_{\propertyvars}} \biggl( \bigoplus_{\lastval} \supportmapping{\logicfn(\firstval,\lastval)}{\boolsemiring}{\semiring_{\othervars}} \otimes \weightfn(\lastval) \biggr) \otimes \weightfn'(\firstval) \biggr) 
\end{equation}
To tackle this type of bilevel inference problem, \cite{asp2mc} identified a circuit property called $\bm{X}$-firstness.%

\begin{definition}[$\firstvars$-Firstness]
    Suppose $\circuit$ is a circuit over variables $\vars$ and $(\firstvars, \lastvars)$ a partition of $\vars$. We say that a node $\alpha \in\circuit$ is \emph{$\firstvars$-only} if $\scope(\alpha) \subseteq \firstvars$, \emph{$\lastvars$-only} if $\scope(\alpha) \subseteq \lastvars$, and \emph{mixed} otherwise.
    Then we say that $\circuit$ is \emph{$\firstvars$-first} if for all product nodes $\node = \node_1 \times \node_2$, we have that either:
        (i) each $\alpha_i$ is $\firstvars$-only or $\lastvars$-only;
        (ii) or exactly one $\alpha_i$ is mixed, and the other is $\firstvars$-only.
\end{definition}

It was stated in \cite{asp2mc} that smoothness, decomposability, determinism, and $\firstvars$-firstness suffice to ensure tractable computation of 2AMC problems, by simply evaluating the circuit in the given semirings (caching values if necessary). We now show that this is neither sufficient nor necessary in general.
To build intuition, consider the simple NNF circuit $\phi(X, Y) = (X \wedge Y) \vee (X \wedge \neg Y)$.
Note that $\phi$ trivially satisfies $X$-firstness and is smooth, decomposable, and deterministic.
Let  $\semiring$ be the probability semiring, $\semiring'$ be the $(\max, \cdot)$-semiring, labeling functions be $\weightfn(y) = \weightfn(\neg y) = 1$, $\weightfn'(x) = \weightfn'(\neg x) = 1$, and the mapping function be the identity $\mappingfn(a) = a$. Then, noting that the labels are the multiplicative identity $1$, the 2AMC value is $\max_{X} \mappingfn( \sum_{Y} \supportmapping{\phi(X, Y)}{\boolsemiring}{\semiring} ) = \max \bigl( \mappingfn(\supportmapping{\phi(x, y)}{\boolsemiring}{\semiring} + \supportmapping{\phi(x, \neg y)}{\boolsemiring}{\semiring}), \mappingfn(\supportmapping{\phi(\neg x, y)}{\boolsemiring}{\semiring} + \supportmapping{\phi(\neg x, \neg y)}{\boolsemiring}{\semiring}) \bigr) = \max \bigl(\mappingfn(1 + 1), \mappingfn(0) \bigr) = 2$. On the other hand, the algorithm of \cite{asp2mc} returns the value $\text{2AMC} = 1$, as shown in Figure \ref{fig:2amc}.
This is not just a flaw in the specific evaluation algorithm, but rather a provable intractability of the problem given these properties: %
\begin{restatable}[Hardness of 2AMC with $\bm{X}$-firstness]{theorem}{thmAMChard}\label{thm:2amc-hard-xfirst}
2AMC is \#P-hard, even for circuits that are smooth, decomposable, deterministic, and $\firstvars$-first, and a constant-time elementwise mapping.
\end{restatable}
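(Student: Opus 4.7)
The plan is to reduce from a known \#P-hard problem, such as the numerical MMAP value on smooth, decomposable, and deterministic circuits, or a counting variant like \#MAJSAT. First, I would observe that MMAP and similar queries are special cases of 2AMC: MMAP uses outer semiring $\maxtimessemiring$, inner semiring $\probsemiring$, and the identity mapping $\mappingfn : \probsemiring \to \maxtimessemiring$ (constant-time), while \#MAJSAT uses outer $\probsemiring$, inner $\probsemiring$, and a threshold mapping. So it will suffice to re-encode a hard MMAP (or \#MAJSAT) instance as a 2AMC instance whose logical circuit is smooth, decomposable, deterministic, \emph{and} additionally $\firstvars$-first.

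For this re-encoding, I would use a direct syntactic construction. Given the hard instance, I would build
\[
    \phi(\bm{X}, \bm{Y}, S) \;=\; \bigvee_{i=1}^{m} \bigl(p_i(\bm{X}) \wedge (S = i) \wedge q_i(\bm{Y})\bigr),
\]
where $p_i$ depends only on $\bm{X}$, $q_i$ depends only on $\bm{Y}$, and $S$ is a fresh selector variable placed in the inner variable set. Each product then factors as an $\bm{X}$-only piece $p_i$ and a $\bm{Y}$-only piece $(S = i) \wedge q_i$, yielding $\bm{X}$-firstness and decomposability by construction; the mutually exclusive values of $S$ make the top-level disjunction deterministic (and in fact $\bm{Y}$-deterministic, not $\bm{X}$-deterministic, which is precisely the gap exhibited in the preceding example); smoothness follows by padding scopes with trivial pass-through factors. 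Evaluating 2AMC on $\phi$ then yields an aggregation of the form $\bigoplus_{\bm{x}} \mappingfn\bigl(\sum_i p_i(\bm{x}) \cdot w_i\bigr) \otimes w'(\bm{x})$ with $w_i = \#\{\bm{y} : q_i(\bm{y})\}$, which is exactly the kind of "weighted max / threshold over an exponential family of $\bm{X}$-assignments" that underlies the known hardness of MMAP/MAJSAT-type problems.

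The main obstacle will be choosing $p_i, q_i$ so as to precisely encode a genuinely \#P-hard problem (not merely an NP-hard decision problem), while arranging the $q_i$ so that their model counts $w_i$ take the desired values \emph{and} admit polynomial-size smooth, decomposable, deterministic representations. For the $p_i$ I would use small clauses or conjunctions from a hard reduction over $\bm{X}$; for the $q_i$ I would use poly-size deterministic "binary counter"-style circuits over disjoint subsets of $\bm{Y}$, whose model counts can be set to arbitrary polynomially bounded integers, with the disjointness ensuring compatibility and easy smoothing. Paired with either the identity mapping into $\maxtimessemiring$ (for MMAP-style hardness) or a threshold $\mappingfn$ into $\probsemiring$ (for \#MAJSAT-style \#P-hardness), this gives a polynomial-time reduction producing a 2AMC instance whose circuit is Sm, Dec, Det, and $\firstvars$-first. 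Verifying the four circuit properties of $\phi$ is then immediate from its syntactic form, completing the reduction.
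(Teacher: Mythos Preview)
Your selector-based flat-DNF construction is essentially what the paper does: it takes a DNF $\phi = \bigvee_i \phi_i$ over $\bm{X}$, tags each term with a binary encoding of $i$ in fresh $\bm{Y}$-variables (playing the role of your $S$), appends a further literal $Y_{l+1}$, and smooths. The resulting circuit is trivially smooth, decomposable, deterministic (via the mutually exclusive tags), and $\bm{X}$-first (every $\wedge$-child is a literal). The paper then reduces directly from \#DNF: the inner semiring is $(\mathbb{N}^2,+,\cdot)$ with mapping $\tau(a,b)=a/b$ (setting $0/0:=0$), and the labeling is chosen so that the inner aggregate at a fixed $\bm{x}$ is $(k,k)$ when $\bm{x}$ satisfies $k$ terms of $\phi$. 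Thus $\tau$ collapses the inner value to $\llbracket \phi(\bm{x})\rrbracket$, and the outer probability-semiring sum is exactly the model count of $\phi$. No $q_i$ counter gadgets are needed, and the source problem is a textbook \#P-hard one.

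One genuine gap in your plan concerns the MMAP route. With the identity mapping into $(\max,\cdot)$, the 2AMC value on your circuit is $\max_{\bm{x}} \sum_{i} p_i(\bm{x})\,w_i$, where $m$ is polynomial and each $w_i=\#\{\bm{y}:q_i(\bm{y})\}$ is, by your own stipulation, polynomially bounded. That optimum therefore lies in a polynomial-size integer range and can be computed in $\mathrm{FP}^{\mathrm{NP}}$ by binary search over thresholds; it is not \#P-hard unless the polynomial hierarchy collapses. Your threshold-mapping alternative \emph{does} yield \#P-hardness (e.g.\ take $p_i(\bm{x})=x_i$ and reduce from counting subset sums exceeding $T$), so that branch of the plan is sound, though still heavier than the paper's \#DNF argument.
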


Analyzing using our compositional framework, the issue is that the tractability conditions for $\mappingfn$ do not hold; whilst the Boolean circuit is deterministic, this is not true once $Y$ is aggregated. In fact, we show that also enforcing $\firstvars$-determinism suffices to tractably compute arbitrary 2AMC instances.
\begin{restatable}[Tractability Conditions for 2AMC]{theorem}{thmAMCtract} \label{thm:2amc-tractability-conditions}
    Every 2AMC instance is tractable in $O(|\circuit|)$ time for Boolean circuits that are smooth, decomposable, deterministic, $\firstvars$-first, and $\firstvars$-deterministic.
\end{restatable}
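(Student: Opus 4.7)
The plan is to apply our compositional framework by decomposing the 2AMC query (Equation~\ref{2amc-compositional}) into six sequential basic operations and verifying at each step (i) that the operation is tractable, and (ii) that the resulting intermediate circuit satisfies the properties needed for the next step. The operations are: (1) apply the support mapping $\supportmapping{\cdot}{\boolsemiring}{\semiring_{\othervars}}$ to $\logicfn$; (2) multiply by $\weightfn(\lastvars)$; (3) aggregate over $\lastvars$; (4) apply $\mappingfn$; (5) multiply by $\weightfn'(\firstvars)$; (6) aggregate over $\firstvars$.

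Steps (1)--(3) are relatively straightforward. Corollary~\ref{cor:support_mapping} yields a tractable support mapping thanks to determinism of $\logicfn$, and since the support mapping preserves the zero/nonzero dichotomy and leaves the circuit structure intact, all of smoothness, decomposability, determinism, $\firstvars$-determinism, and $\firstvars$-firstness are preserved. The product with the fully-factorized, omni-compatible $\weightfn(\lastvars)$ (Theorem~\ref{thm:cmp}) merely attaches $\lastvars$-only weight factors at the corresponding input nodes, so it disturbs neither $\firstvars$-supports of sums nor the $\firstvars$-decomposition at products. Aggregation over $\lastvars$ (Theorem~\ref{thm:agg}) preserves $\firstvars$-determinism by the composability table since $\lastvars \cap \firstvars = \emptyset$; because the aggregated circuit has scope contained in $\firstvars$, $\firstvars$-determinism in fact collapses to full determinism for this circuit.

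The main obstacle is step (4): to apply Theorem~\ref{thm:tract-map} we need $\mappingfn$ to distribute over both sums and products of the post-aggregation circuit. Distribution over sums follows from (Det) via the determinism just established. For (Prod 0/1), I would argue by induction that every $\firstvars$-only subtree of the post-aggregation circuit evaluates only to $\{0_{\semiring_{\othervars}}, 1_{\semiring_{\othervars}}\}$: the base case is immediate since support-mapped literals lie in $\{0, 1\}$; for $\firstvars$-only sum nodes, $\firstvars$-determinism forces at most one child to be nonzero at any $\firstvars$-assignment, so the sum equals that child and by induction lies in $\{0, 1\}$; and $\firstvars$-only products preserve the property since $\{0, 1\}$ is closed under $\otimes$. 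Now, $\firstvars$-firstness guarantees that every product node of the pre-aggregation circuit has at least one $\firstvars$-only child, and this child is left unchanged by aggregating out $\lastvars$. Hence at every product node of the post-aggregation circuit, one child takes values only in $\{0, 1\}$, so (Prod 0/1) holds and Theorem~\ref{thm:tract-map} applies.

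Steps (5) and (6) are again routine applications of Theorems~\ref{thm:cmp} and~\ref{thm:agg}, respectively, on a circuit over $\semiring_{\propertyvars}$ with scope contained in $\firstvars$. Since each of the six steps runs in $O(|\circuit|)$ time, the overall complexity is $O(|\circuit|)$.
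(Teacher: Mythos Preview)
Your proposal is correct and follows essentially the same six-step decomposition as the paper's proof (support mapping, product with $\weightfn$, aggregate $\lastvars$, apply $\mappingfn$, product with $\weightfn'$, aggregate $\firstvars$), with the same key insight that $\firstvars$-firstness together with $\firstvars$-determinism yields \prodzeroone{} for the critical mapping step. One small imprecision: your claim that ``every product node of the pre-aggregation circuit has at least one $\firstvars$-only child'' fails for product nodes with two $\lastvars$-only children (allowed by case (i) of $\firstvars$-firstness), but since such nodes collapse to scalars after aggregating $\lastvars$, your conclusion about the post-aggregation circuit still holds.
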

\begin{proof}[Proof sketch]
 The key point to notice is that the elementwise mapping relative to the transformation of inner to outer semiring operates over an aggregation of an $\firstvars$-first and $\firstvars$-deterministic circuit, obtained by the product of a Boolean function (mapped to the inner semiring by a support mapping) and a weight function of $\lastvars$.
 Hence, it satisfies \deter{} and \prodzeroone{}: all of the $\firstvars$-only children of a product node are 0/1 valued (in the inner semiring).
\end{proof}

\begin{figure}[t]
    \centering
    \begin{subfigure}[t]{0.33\linewidth}
    \centering
    \scalebox{0.9}{    
    \begin{tikzpicture}[semithick]
        \tikzstyle{sum}=[draw,circle,inner sep=2pt,ultra thick, cyan]
        \tikzstyle{prod}=[draw,circle,inner sep=2pt,ultra thick, orange]
        \tikzstyle{lit}=[draw, circle, inner sep=2pt, ultra thick, black]
        \begin{scope}
            \node[sum] (sum1) at (0,-0.5) {$\bm{\vee}$};
            \node[prod] (prod1) at (-.6,-1.25) {$\bm{\wedge}$}; 
            \node[prod] (prod2) at (.6,-1.25) {$\bm{\wedge}$}; 
            \node[lit] (litX) at (0,-2) {$X$}; %
            \node[lit] (litY1) at (-1.2,-2) {$Y$}; %
            \node[lit] (litY0) at (1.2,-2) {$\neg Y$}; %
            \foreach \s/\t in {sum1/prod1,sum1/prod2,prod1/litX,prod1/litY1,prod2/litX,prod2/litY0} {
                \draw[->,>=latex] (\s) edge (\t);
            }
        \end{scope}
    \end{tikzpicture}}
    \caption{Boolean circuit $\phi(X, Y)$}
    \label{fig:2amc_nnf}
    \end{subfigure}\hfill
    \begin{subfigure}[t]{0.33\linewidth}
    \centering
    \scalebox{0.9}{
    \begin{tikzpicture}[semithick]
        \tikzstyle{sum}=[draw,circle,inner sep=1.2pt,ultra thick, cyan]
        \tikzstyle{prod}=[draw,circle,inner sep=1.2pt,ultra thick, orange]
        \tikzstyle{lit}=[draw, circle,inner sep=2pt,ultra thick, black]
        \begin{scope}
            \node[sum] (sum1) at (0,-0.5) {$+$};
            \node[prod] (prod1) at (-.6,-1.25) {$\times$}; 
            \node[prod] (prod2) at (.6,-1.25) {$\times$}; 
            \node[lit] (litX) at (0,-2) {$X$}; %
            \node[lit] (litY1) at (-1.2,-2) {$1$}; %
            \node[lit] (litY0) at (1.2,-2) {$1$}; %
            \foreach \s/\t in {sum1/prod1,sum1/prod2,prod1/litX,prod1/litY1,prod2/litX,prod2/litY0} {
                \draw[->,>=latex] (\s) edge (\t);
            }
        \end{scope}
    \end{tikzpicture}}
    \caption{Inner semiring evaluation}
    \label{fig:2amc_inner}
    \end{subfigure}
    \begin{subfigure}[t]{0.33\linewidth}
    \centering
    \scalebox{0.9}{
    \begin{tikzpicture}[semithick]
        \tikzstyle{sum}=[draw,circle,inner sep=1pt,ultra thick, cyan]
        \tikzstyle{prod}=[draw,circle,inner sep=1.2pt,ultra thick,orange]
        \tikzstyle{lit}=[draw, circle,inner sep=2pt,ultra thick, black]
        \begin{scope}
            \node[sum] (sum1) at (0,-0.5) {\tiny $\max$};
            \node[prod] (prod1) at (-.6,-1.25) {$\times$}; 
            \node[prod] (prod2) at (.6,-1.25) {$\times$}; 
            \node[lit] (litX) at (0,-2) {$1$}; %
            \node[lit] (litY1) at (-1.2,-2) {$1$}; %
            \node[lit] (litY0) at (1.2,-2) {$1$}; %
            \foreach \s/\t in {sum1/prod1,sum1/prod2,prod1/litX,prod1/litY1,prod2/litX,prod2/litY0} {
                \draw[->,>=latex] (\s) edge (\t);
            }
        \end{scope}
    \end{tikzpicture}}
    \caption{Outer semiring evaluation}
    \label{fig:2amc_outer}
    \end{subfigure}
    \caption{Failure case of 2AMC algorithm on smooth, decomposable, X-first circuit.}
    \label{fig:2amc}
\end{figure}
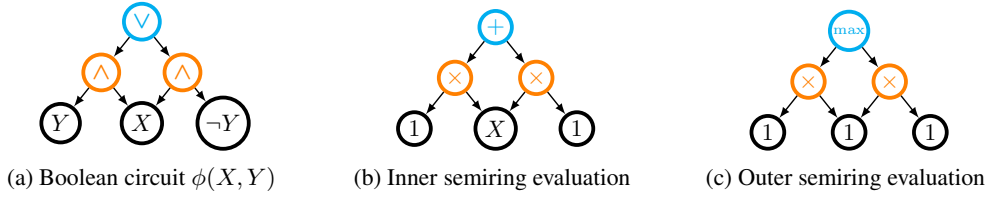

For specific instances of 2AMC, depending on the semirings $\semiring, \semiring'$ and mapping function $\tau$, we also find that it is possible to remove the requirement of $\bm{X}$-firstness or ($\bm{V}$)-determinism, as we summarize in Table \ref{tbl:case_studies}. 
One might thus wonder if there is a difference in terms of compactness between requiring $\bm{X}$-determinism and $\bm{X}$-firstness, as opposed to $\bm{X}$-determinism alone. For example, for sentential decision diagrams (SDD) \citep{darwiche2011sdd}, a popular knowledge compilation target, these notions coincide: a SDD is $\bm{X}$-deterministic iff it is $\bm{X}$-first (in which context this property is known as $\bm{X}$-constrainedness \citep{oztok2016solving,derkinderen2020algebraic}). However, as shown in Figure \ref{fig:circuit2}, there exist $\bm{X}$-deterministic but not $\bm{X}$-first circuits.
We now show that $\bm{X}$-deterministic circuits can be exponentially more succinct than $\bm{X}$-deterministic circuits that are additionally $\bm{X}$-first, as the size of $\bm{X}$ grows.\footnote{\chg{If the size of $\bm{X}$ is fixed, a circuit can always be rearranged to be $\bm{X}$-first with at most a $2^{|\bm{X}|}$ blowup.}}

\begin{restatable}[Exponential Separation]{theorem}{thmSuccinct} \label{thm:succinct}
    Given sets of variables $\bm{X} = \{X_1, ..., X_n\}, \bm{Y} = \{Y_1, ..., Y_n\}$, there exists a smooth, decomposable and $\bm{X}$-deterministic circuit $\circuit$ of size $poly(n)$ such that the smallest smooth, decomposable, and $\bm{X}$-first circuit $\circuit'$ such that $\nodefn_{\circuit} \equiv \nodefn_{\circuit'}$ has size $2^{\Omega(n)}$.%
\end{restatable}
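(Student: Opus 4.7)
The plan is to take the Boolean equality indicator $f(\bm{X}, \bm{Y}) = \bigwedge_{i=1}^n (X_i \leftrightarrow Y_i)$ as the separating example, with $\bm{X} = \{X_1, \ldots, X_n\}$ and $\bm{Y} = \{Y_1, \ldots, Y_n\}$. The upper bound is the easy direction: the factorization $f = \bigwedge_{i=1}^n \bigl[(X_i \wedge Y_i) \vee (\neg X_i \wedge \neg Y_i)\bigr]$, binarized into a product tree, gives a smooth and decomposable circuit of size $O(n)$, and each inner sum has children with disjoint $\bm{X}$-supports $\{X_i=1\}$ and $\{X_i=0\}$, so it is $\{X_i\}$-deterministic and hence $\bm{X}$-deterministic.

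For the lower bound, I would first prove a structural lemma: any smooth, decomposable, $\bm{X}$-first circuit $C'$ whose root is mixed admits a rectangle decomposition $p_{C'}(\bm{X}, \bm{Y}) = \bigvee_{\ell \in \mathcal{L}} g_\ell(\bm{X}) \wedge h_\ell(\bm{Y})$, where $\mathcal{L}$ indexes the ``XY-product'' nodes of $C'$ (products whose two children are pure $\bm{X}$-only and pure $\bm{Y}$-only, respectively), so that $|\mathcal{L}| \leq |C'|$. The driving observation is that $\bm{X}$-firstness forces $\bm{Y}$ to enter the mixed fragment of $C'$ only through such XY-products: mixed sums preserve scope by smoothness, and mixed MX-products only attach a pure $\bm{X}$-only sibling, so every mixed node reachable from the root shares the same $\bm{Y}$-scope $\bm{Y}$, forcing each $h_\ell$ to have scope $\bm{Y}$. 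I would then build the decomposition bottom-up: at an XY-product $\beta \times \gamma$, set $g_\ell = p_\beta$ and $h_\ell = p_\gamma$; at an MX-product $\beta \times \gamma$, inductively set $g_{\alpha,\ell} = g_{\beta,\ell} \wedge p_\gamma$ (decomposability ensures the $\bm{X}$-scopes combine cleanly); at a mixed sum $+_i \alpha_i$, use smoothness to set $g_{\alpha,\ell} = \bigvee_i g_{\alpha_i,\ell}$.

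Given the lemma, the lower bound is a standard rectangle-cover argument. Each term $g_\ell(\bm{X}) \wedge h_\ell(\bm{Y})$ defines a combinatorial rectangle $\mathrm{supp}(g_\ell) \times \mathrm{supp}(h_\ell) \subseteq \{0,1\}^n \times \{0,1\}^n$, whose union must exactly equal the set of $1$-entries of $f$, namely the diagonal $\{(\bm{x}, \bm{x}) : \bm{x} \in \{0,1\}^n\}$. Any rectangle contained in the diagonal is either empty or a singleton $\{(\bm{z}, \bm{z})\}$, so covering all $2^n$ diagonal entries forces $|\mathcal{L}| \geq 2^n$, and hence $|C'| \geq 2^n = 2^{\Omega(n)}$.

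The main obstacle will be the clean proof of the structural lemma: because $C'$ is a DAG, a single XY-product can be reached through many different paths of mixed sums and MX-products, so the inductive accumulation of the $g_{\alpha,\ell}$ must be performed per node and then aggregated correctly. A secondary point is to rule out pathological inputs: I would assume, as is standard in knowledge compilation, that each input function has scope contained in $\bm{X}$ or in $\bm{Y}$, since otherwise a single mixed input leaf could trivially compute $f$ and no nontrivial lower bound could be stated.
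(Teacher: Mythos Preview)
Your proposal is correct and essentially parallel in spirit to the paper's proof, but it takes a genuinely different technical route, so a short comparison is warranted.

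The paper also uses the equality indicator as the separating function, but dressed up as a hidden Markov model over the probability semiring: it takes $p(\bm{X},\bm{Y})$ to be an HMM with deterministic emissions $p(Y_i\mid X_i)=\mathds{1}_{Y_i=X_i}$, so that $p(\bm{x},\bm{Y})=p(\bm{x})\,\mathds{1}_{\bm{Y}=\bm{x}}$. Its structural lemma is a \emph{linear} analogue of yours: for every fixed $\bm{x}$, $p_{C'}(\bm{x},\bm{Y})$ is an $\mathbb{R}_{\geq 0}$-linear combination of the functions $p_{\alpha_k}(\bm{Y})$ ranging over the nodes $\alpha_k$ with scope exactly $\bm{Y}$; the lower bound then follows because the indicators $\mathds{1}_{\bm{Y}=\bm{x}}$ (suitably scaled) span $\mathbb{R}^{2^n}$, forcing at least $2^n$ such nodes. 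Your argument instead works over the Boolean semiring and collapses $\bm{X}$-firstness directly into a rectangle cover $p_{C'}=\bigvee_\ell g_\ell(\bm{X})\wedge h_\ell(\bm{Y})$, then invokes the standard fooling-set fact that the diagonal needs $2^n$ monochromatic rectangles. Your version is more elementary and closer to the familiar DNNF lower-bound machinery from communication complexity; the paper's version is native to the probability semiring and shows the separation already for weighted circuits. Both implicitly assume input nodes are pure (scope in $\bm{X}$ or in $\bm{Y}$), which you flag explicitly and the paper leaves tacit. Your handling of the DAG issue---indexing $\mathcal{L}$ globally by XY-product nodes and accumulating $g_{\alpha,\ell}$ per node, with $g_{\alpha,\ell}=\bot$ for unreachable $\ell$---is the right fix and mirrors how the paper's linear-combination lemma is proved by bottom-up induction.
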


Thus, to summarize, some instances of 2AMC can be solved efficiently when $\phi$ is smooth, decomposable and $\bm{X}$-deterministic.
A larger number of instances can be solved when additionally, $\phi$ is deterministic; and all 2AMC problems are tractable if we also impose $\bm{X}$-firstness.

\subsection{Causal Inference} \label{sec:causal}

In causal inference, one is often interested in computing \emph{interventional distributions}, denoted using the $do(\cdot)$ operator, as a function of the observed distribution $p$. This function depends on the causal graph linking the variables, and can be derived using the~do-calculus~\citep{pearl1995do}. For example, the well-known \emph{backdoor} and \emph{frontdoor} graphs induce the following formulae: 
\begin{restatable}{equation}{eqBackdoor}
    p(\bm{y}|do(\bm{x})) = \sum_{\bm{z}} p(\bm{z}) p(\bm{y}|\bm{x},\bm{z}) ,
\end{restatable}
\begin{restatable}{equation}{eqFrontdoor}
    p(\bm{y}|do(\bm{x})) \!=\! \sum_{\bm{z}} p(\bm{z}|\bm{x}) \sum_{\bm{x'}} p(\bm{x'}) p(\bm{y}|\bm{x'},\bm{z}).
\end{restatable}
Assuming that the observed joint distribution $p(\bm{X}, \bm{Y}, \bm{Z})$ is given as a probabilistic circuit $\circuit$, we consider the problem of obtaining a probabilistic circuit $\circuit'$ over variables $\bm{X} \cup \bm{Y}$ representing $p(\bm{Y}|do(\bm{X}))$.
Tractability conditions for the backdoor/frontdoor cases were derived by \cite{wang2023compositional}, with quadratic/cubic complexity respectively. However, we observe that in some cases we can avoid the requirement of structured decomposability and/or obtain reduced complexity relative to their findings.

In the backdoor case, it is known that structured decomposability and $(\bm{X} \cup \bm{Z})$-determinism suffices for a quadratic time algorithm. This can be seen by decomposing into a compositional query:
\begin{equation}
    \bigoplus_{\bm{z}} \Bigl( \Bigl(\bigoplus_{\bm{x}, \bm{y}} p(\bm{v}) \Bigr) \otimes p(\bm{v}) \otimes \mappingfn_{\inv} \Bigl(\bigoplus_{\bm{y}} p(\bm{v}) \Bigr) \Bigr) .
\end{equation}
where $\bm{V} = (\bm{X}, \bm{Y}, \bm{Z})$, and $\tau_{-1}(a) = \begin{cases}
    a^{-1} & \text{if } a \neq 0 \\
    0 & \text{if } a = 0
\end{cases}$. Assuming $(\bm{X} \cup \bm{Z})$-determinism and structured decomposability, then $\mappingfn_{\inv} \bigl(\bigoplus_{\bm{y}} p(\bm{V}) \bigr)$ is tractable by \deter{} and \prodtoprod{}, the product $p(\bm{V}) \otimes \mappingfn_{\inv} \bigl(\bigoplus_{\bm{y}} p(\bm{V}) \bigr)$ by support-compatibility, and the final product by compatibility.
However, if we additionally have $\bm{Z}$-determinism, then the final product becomes tractable by support compatibility. This has linear rather than quadratic complexity, and does not require the circuit to be structured decomposable.
In the frontdoor case, \cite{wang2023compositional} showed that $\bm{X}$-determinism, $(\bm{X} \cup \bm{Z})$-determinism, and structured decomposability suffices for cubic complexity. However, we note that under such conditions, the inner product $p(\bm{X'}) \otimes p(\bm{Y}|\bm{X'},\bm{Z})$ is tractable by support-compatibility. As such, the complexity of this query is actually quadratic rather than cubic as previously shown. We summarize our findings in Table \ref{tbl:case_studies} and refer the reader to the Appendix for full proofs.

\vspace{-1pt}
\section{Related Work}
\vspace{-5pt}

Our work builds upon the observation that many inference problems can be characterized as a composition of basic \ops{}. Prior works have considered compositional inference for circuits in the Boolean \cite{darwiche2002knowledge} and probabilistic semirings \cite{vergari2021atlas,wang2023compositional}, deriving tractability conditions for \ops{} specific to these semirings. \chg{Aside from generalizing to arbitrary semirings, we also introduce extended composability conditions that enable interleaving of aggregation, products, and mappings.} Meanwhile, algebraic model counting \cite{amc} deals (implicitly) with mappings from the Boolean semiring to an arbitrary semiring, but does not consider compositional queries. Closest to our work, \cite{asp2mc} consider a generalization of algebraic model counting that allows for an additional semiring translation; however, this still assumes input Boolean circuits and has incomplete tractability characterizations. Our framework resolves these limitations, permitting arbitrary compositional queries over %
semirings.

Many works have considered (unbounded) sums-of-products queries on arbitrary semirings \citep{dechter1999bucket, bacchus2009solving, abo2016faq, eiter2021complexity}, encompassing many important problems such as constraint satisfaction problems \citep{bistarelli1997semiring}, graphical model inference \citep{zhang1994simple}, and database queries \citep{yannakakis1981algorithms}\chg{, which are often computationally hard in the worst-case.} %
\chg{Algorithms for such queries often utilize compact intermediate representations and/or assume compact input representations, such as circuits \citep{mateescu2008and,darwiche2011sdd,olteanu2016factorized,amarilli2024tractable}.}
Our framework focuses on queries where the number of \ops{} is bounded, \chg{and characterizes conditions under which inference is tractable in polynomial time}. It also includes elementwise mappings as a key additional abstraction \chg{that can be used to express queries involving more than sums and products.} %

\vspace{-5pt}
\section{Conclusion} \label{sec:conclusion}
\vspace{-5pt}

In summary, we have introduced a framework for analysing compositional inference problems on circuits, based on algebraic structure. In doing so, we were able to derive new tractability conditions and simplified algorithms for a number of existing problems, including 2AMC and causal inference. Our framework focuses on simple and composable \emph{sufficient} tractability conditions for aggregations, products and elementwise mappings \ops{}; a limitation of this generality is these conditions may not be necessary for specific queries on specific semirings.
Our work motivates the development of knowledge compilation and learning algorithms that target the requisite circuit properties, such as $\bm{X}$-determinism. Finally, while we focus on exact inference, for many problems (e.g. marginal MAP) approximate algorithms exist and are of significant interest; an interesting direction for future work is to investigate if these can be also be generalized using the compositional approach.

\section*{Acknowledgements}

We thank Antonio Vergari for helpful discussions, and acknowledge him for proposing an early version of support compatibility and Theorem 3, and for pointing out a potential reduction in complexity for the causal inference queries.
This work was done in part while the authors were visiting the Simons Institute
for the Theory of Computing.
This work was funded in part by the DARPA
ANSR program under award FA8750-23-2-0004, the DARPA PTG Program under award
HR00112220005, and NSF grant \#IIS-1943641. 
DM received generous support from the IBM Corporation, the Center for Artificial Intelligence at University of São Paulo (C4AI-USP), the São Paulo Research Foundation (FAPESP grants \#2019/07665-4 and 2022/02937-9), the Brazilian National Research Council (CNPq grant no.\ 305136/2022-4) and CAPES (Finance Code 001).
YC was partially supported by a gift from Cisco University Research Program.

\bibliographystyle{plainnat}
\bibliography{references}

\newpage

\appendix

\section{Algorithms and Proofs}\label{appx:proofs}

\begin{algorithm}[t]
    \KwInput{Smooth and decomposable algebraic circuit $\circuit(\vars)$; node $\node \in \circuit$; Subset of variables $\varsubset \subseteq \scope(\node)$}
    \KwOutput{Node encoding $\bigoplus_{\varsubset} \nodefn_{\node}(\vars)$}
    \uIf{$\node$ \emph{is input node} \label{algline:marg_inputbegin}} { 
            \Return{$\emph{\marginput}(\node; \varsubset)$} \label{algline:marg_inputend}
        }
    \uElseIf{$\node$ \emph{is product or sum node and} $\scope(\node) = \varsubset$ \label{algline:marg_inwbegin}} {
            \Return{$ \emph{\newnode}(\otimes_{i=1}^{k} \nodefn_{\emph{\marg}(\node_i; \varsubset \cap \scope(\node_i))})$ \emph{\textbf{if} $\node$ is product \textbf{else}} $ \emph{\newnode}(\oplus_{i=1}^{k} \nodefn_{\emph{\marg}(\node_i; \varsubset)})$} \label{algline:marg_inwend}
        }
    \uElseIf{$\node$ \emph{is product or sum node and} $\varsubset \subset \scope(\node)$ \label{algline:marg_notinwbegin}}
    {
        \Return{$\times_{i=1}^{k} \emph{\marg}(\node_i; \varsubset \cap \scope(\node_i))$ \emph{\textbf{if} $\node$ is product \textbf{else}} $+_{i=1}^{k} \emph{\marg}(\node_i; \varsubset)$ \label{algline:marg_notinwend}}
    }
    \caption{\marg}
    \label{alg:marg}
\end{algorithm}

\begin{algorithm}[t]
    \KwInput{Compatible algebraic circuits $\circuit(\vars), \circuit'(\vars')$; nodes $\node \in \circuit, \node' \in \circuit'$ s.t. $\scope(\node) \cap (\vars \cap \vars')  = \scope(\node') \cap (\vars \cap \vars')$} %
    \KwOutput{Node encoding $\nodefn_{\circuit}(\vars) \otimes \nodefn_{\circuit'}(\vars')$}
    \uIf{$\scope(\node) \cap \scope(\node') = \emptyset$\label{algline:cmp_disjointstart}} { 
        \Return{$\node \times \node'$} \label{algline:cmp_disjointend}
    }
    \chg{\uElseIf{$\node$ \emph{is a product or input node and} $\node' = +_{j=1}^{k'}$ \emph{is a sum node}}{
        \Return{$+_{j=1}^{k'} \emph{\prodcmp}(\node, \node'_j)$}
    }}
    \uElseIf{$\node, \node'$ \emph{are input nodes}\label{algline:cmp_inputstart}} {
        \Return{\emph{\texttt{PROD-INPUT}}($\node, \node'$)\label{algline:cmp_inputend}}
    }
    \uElseIf{$\node = \node_1 \times \node_2, \node' = \node'_1 \times \node'_2$ \emph{are product nodes} \label{algline:cmp_productstart}} {
        \Return{$\emph{\prodcmp}(\node_1, \node'_1) \times \emph{\prodcmp}(\node_2, \node'_2)$ \label{algline:cmp_productend}}
    }
    \uElseIf{$\node = +_{i=1}^{k} \node_i, \node' = +_{j=1}^{k'} \node'_j $ \emph{are sum nodes} \label{algline:cmp_sumstart}} {
        \Return{$+_{i=1}^{k} +_{j=1}^{k'} \emph{\prodcmp}(\node_i, \node'_j)$} \label{algline:cmp_sumend}
    }
    \caption{\prodcmp}
    \label{alg:prodcmp}
\end{algorithm}

\begin{algorithm}[t]
    \KwInput{Support-compatible algebraic circuits $\circuit(\vars), \circuit'(\vars')$; nodes $\node \in \circuit, \node' \in \circuit'$ s.t. $\isomorphism(\node) = \node'$}
    \KwOutput{Circuit encoding $\nodefn_{\circuit}(\vars) \otimes \nodefn_{\circuit'}(\vars')$}
    \uIf{$\scope(\node) \cap \scope(\node') = \emptyset$ \label{algline:scmp_disjointstart}} {
        \Return{$\node \times \node'$} \label{algline:scmp_disjointend}
    }
    \uElseIf{$\node, \node'$ \emph{are input nodes}\label{algline:scmp_inputsbegin}} { 
        \Return{\emph{\texttt{PROD-INPUT}}($\node, \node'$)} \label{algline:scmp_inputsend}
    }
    \uElseIf{$\node = \node_1 \times \node_2, \node' = \node'_1 \times \node'_2$ \emph{are product nodes} \label{algline:scmp_prodbegin}} {
        \Return{$\emph{\prodscmp}(\node_1, \node'_1) \times \emph{\prodscmp}(\node_2, \node'_2)$} \label{algline:scmp_prodend}
    }
    \uElseIf{$\node = +_{i=1}^{k} \node_i, \node' = +_{i=1}^{k} \node'_i $ \emph{are sum nodes} \label{algline:scmp_sumbegin}} {
        \Return{$+_{i=1}^{k}  \emph{\prodscmp}(\node_i, \node'_i)$} \label{algline:scmp_sumend}
    }
    \caption{\prodscmp}
    \label{alg:prodscmp}
\end{algorithm}

\begin{algorithm}[t]
    \KwInput{Smooth and decomposable algebraic circuit $\circuit(\vars)$ over semiring $\semiring$; Node $\node \in \circuit$; Mapping function $\mappingfn: \semiring \to \semiring'$}
    \KwOutput{Node encoding $\mappingfn(\nodefn_{\circuit}(\vars))$}
    \uIf{$\node$ \emph{is input node}} {
            \Return{$\emph{\mappingalginput}(\node; \mappingfn)$}
        }
    \uElseIf{$\node$ \emph{is product or sum node}} {
            \Return{$\otimes_{i=1}^{k} \emph{\mappingalg}(\node_i; \mappingfn)$ \emph{\textbf{if} $\node$ is product \textbf{else}} $\oplus_{i=1}^{k} \emph{\mappingalg}(\node_i; \mappingfn)$}
        }
    \caption{\mappingalg}
    \label{alg:mapping}
\end{algorithm}

In Algorithms \ref{alg:marg}-\ref{alg:mapping} we present algorithms for the aggregation, product (with compatibility), product (with support-compatiblity), and elementwise mapping \ops{} respectively (the initial call is to the root of the circuit(s)). In the following, we present proofs that the algorithms soundly compute smooth and decomposable output circuits for the respective \ops{}.

\subsection{Tractable Aggregation}\label{appx:aggr}

\thmAggregation*
\begin{proof}
    We prove this inductively, starting from the input nodes of the circuit. Our claim is that for each node $\alpha \in \circuit$, $\marg(\node; \bm{W})$ (Algorithm \ref{alg:marg}) returns a node $\alpha'$ with scope $\scope(\alpha') = \scope(\alpha) \setminus \bm{W}$ such that $\nodefn_{\alpha'}(\scope(\node')) = \bigoplus_{\bm{w}}  \nodefn_{\node}(\scope(\node))$,  and is decomposable (if product) and smooth (if sum).

    If $\node$ is an input node (Lines \ref{algline:marg_inputbegin}-\ref{algline:marg_inputend}), then this is possible by assumption; we denote this with $\marginput$ in the algorithm. Note that if $\scope(\node) = \bm{W}$, then this is just a scalar/constant (i.e. input node with empty scope). 

    If $\node$ is a product node $\node_1 \times \node_2$, then by decomposability, $\bm{W} \cap \scope(\node_1)$ and $\bm{W} \cap \scope(\node_2)$ partition $\bm{W}$. Thus we have that:
    \begin{align*}
        \bigoplus_{\bm{w}} \nodefn_{\node}(\scope(\node)) &= \bigoplus_{\bm{w}} \bigl( \nodefn_{\node_1}(\scope(\node_1)) \otimes \nodefn_{\node_2}(\scope(\node_2)) \bigr)\\
        &= \bigoplus_{\bm{w} \cap \scope(\node_1)} \bigoplus_{\bm{w} \cap \scope(\node_2)} \bigl( \nodefn_{\node_1}(\scope(\node_1)) \otimes \nodefn_{\node_2}(\scope(\node_2)) \bigr) \\
        &=\left(\bigoplus_{\bm{w} \cap \scope(\node_1)} \nodefn_{\node_1}(\scope(\node_1)) \right) \otimes \left(\bigoplus_{\bm{w} \cap \scope(\node_2)} \nodefn_{\node_2}(\scope(\node_2)) \right) \\
        &= \nodefn_{\marg(\node_1; \bm{W} \cap \scope(\node_1))}(\scope(\node_1) \setminus \bm{W}) \otimes \nodefn_{\marg(\node_2; \bm{W} \cap \scope(\node_2))}(\scope(\node_2) \setminus \bm{W})
    \end{align*}
    The second equality follows by the partition (and associativity of the addition and multiplication), while the third follows by distributivity of multiplication over addition. In the case where $\scope(\node) = \bm{W}$ (Lines \ref{algline:marg_inwbegin}-\ref{algline:marg_inwend}), then $\nodefn_{\marg(\node_i; \bm{W} \cap \scope(\node_i))}(\scope(\node_i))$ is just a scalar for each $i$, so we can directly perform this computation, returning a new scalar node $\node'$. Otherwise (Lines \ref{algline:marg_notinwbegin}-\ref{algline:marg_notinwend}), we construct a new product node 
    $\node' = \node'_1 \times \node'_2 = \marg(\node_1; \bm{W} \cap \scope(\node_1)) \times \marg(\node_2; \bm{W} \cap \scope(\node_2))$. 
    By the inductive hypothesis, $\node'_i$ has scope $\scope(\node'_i) = \scope(\node_i) \setminus \bm{W}$, so $\node'$ is clearly decomposable and has scope 
    $\scope(\node') = (\scope(\node_1) \setminus \bm{W}) \cup  (\scope(\node_2) \setminus \bm{W}) = \scope(\node) \setminus \bm{W}$.

    If $\node = +_{i=1}^{k} \node_i$ is a sum node, then we note that by smoothness, $\scope(\node_i) = \scope(\node)$ for all $i$. Thus we have that:
    \begin{align*}
        \bigoplus_{\bm{w}} \nodefn_{\node}(\scope(\node)) &= \bigoplus_{\bm{w}} \bigoplus_{i=1}^{k} \nodefn_{\node_i}(\scope(\node)) \\
        &= \bigoplus_{i=1}^{k} \bigoplus_{\bm{w}}  \nodefn_{\node_i}(\scope(\node)) \\
        &= \bigoplus_{i=1}^{k} \bigoplus_{\bm{w}}  \nodefn_{\node_i}(\scope(\node_i)) \\
        &= \bigoplus_{i=1}^{k} \nodefn_{\marg(\node_i; \bm{W})}(\scope(\node_i))
    \end{align*}
    In the case where $\scope(\node) = \bm{W}$ (Lines \ref{algline:marg_inwbegin}-\ref{algline:marg_inwend}), then $\nodefn_{\marg(\node_i; \bm{W})}(\scope(\node_i))$ is just a scalar, so we can directly perform this computation, returning a new scalar node $\node'$. 
    Otherwise (Lines \ref{algline:marg_notinwbegin}-\ref{algline:marg_notinwend}), we construct  a new sum node $\node' = +_{i=1}^{k} \node'_i = +_{i=1}^{k} \marg(\node_i; \bm{W})$. By the inductive hypothesis, each $\node'_i$ has scope $\scope(\node_i) \setminus \bm{W} = \scope(\node) \setminus \bm{W}$, so $\node'$ is smooth and also has scope $\scope(\node) \setminus \bm{W}$.
\end{proof}

\subsection{Tractable Product}

\subsubsection{Tractable Product with Compatibility}\label{appx:prod}

\thmCmp*

\begin{proof}
    We prove this inductively bottom up, for nodes $\alpha \in \circuit, \alpha' \in \circuit$ such that $\scope(\node) \cap (\vars \cap \vars')  = \scope(\node') \cap (\vars \cap \vars')$. %
    Our claim is that $\prodscmp(\node, \node')$ (Algorithm \ref{alg:prodcmp}) returns a node $\node''$ such that $\nodefn_{\node''} = \nodefn_{\node} \otimes \nodefn_{\node'}$, has scope $\scope(\node'') = \scope(\node) \cup \scope(\node')$, and is decomposable (if product) and smooth (if sum).

    If $\scope(\alpha) \cap \scope(\alpha') = \emptyset$ (i.e. $\scope(\node) \cap (\vars \cap \vars')  = \scope(\node') \cap (\vars \cap \vars')$ is empty), then the algorithm (Lines \ref{algline:cmp_disjointstart}-\ref{algline:cmp_disjointend}) simply constructs a new product node $\alpha'' = \alpha \times \alpha'$. By definition, $\nodefn_{\node''} = \nodefn_{\node} \otimes \nodefn_{\node'}$, has scope $\scope(\node'') = \scope(\node) \cup \scope(\node')$, and $\node''$ is decomposable.

    If $\node, \node'$ are input nodes, then we can construct a new input node $\node''$ satisfying the requisite properties (Lines \ref{algline:cmp_inputstart}-\ref{algline:cmp_inputend}).

    \chg{If $\node$ is an input or product node and $\node' = +_{j=1}^{k'} \node'_j$ is a sum node, then the algorithm constructs a new sum node $\node'' = +_{j=1}^{k'} \prodcmp(\node, \node'_j)$. This computes the correct function as $\nodefn_{\node''} = \oplus_{j=1}^{k'} \left(\nodefn_{\node} \otimes \nodefn_{\node'_j} \right) = \nodefn_{\node} \otimes \left(\oplus_{j=1}^{k'} \nodefn_{\node'_j}\right)= \nodefn_{\node} \otimes \nodefn_{\node'}$. Each child has scope $\scope(\node) \cup \scope(\node'_j) = \scope(\node) \cup \scope(\node')$, so smoothness is retained.}

    If $\node = \node_1 \times \node_2, \node' = \node'_1 \times \node'_2$ are product nodes such that $\scope(\node) \cap (\vars \cap \vars')  = \scope(\node') \cap (\vars \cap \vars')$ is non-empty, then writing $\bm{X} := \vars \cap \vars'$, by compatibility we also have $\scope(\node_1) \cap \bm{X} = \scope(\node'_1) \cap \bm{X}$ and $\scope(\node_2) \cap \bm{X} = \scope(\node'_2) \cap \bm{X}$, so we can apply the inductive hypothesis for $\prodcmp(\node_1, \node'_1)$ and $\prodcmp(\node_2, \node'_2)$. Algorithm \ref{alg:prodcmp} constructs a new product node $\node'' = \prodcmp(\node_1, \node'_1) \times \prodcmp(\node_2, \node'_2)$. To show that this is decomposable, we need the following lemma:
    \begin{lemma}[Decomposability of Product] \label{lem:decprod}
        Suppose $\node \in \circuit, \node' \in \circuit'$ are decomposable product nodes which decompose in the same way over $\bm{X}$, i.e. $\scope(\node_1) \cap \bm{X} = \scope(\node'_1) \cap \bm{X}$ and $\scope(\node_2) \cap \bm{X} = \scope(\node'_2) \cap \bm{X}$. Then $(\scope(\node_1) \cup \scope(\node'_1)) \cap (\scope(\node_2) \cup \scope(\node'_2)) = \emptyset$.
    \end{lemma}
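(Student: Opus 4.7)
The goal is to show that the product node $\node'' = \prodcmp(\node_1,\node'_1) \times \prodcmp(\node_2,\node'_2)$ constructed by the algorithm is decomposable, i.e.\ that the two resulting scopes
$\scope(\node_1) \cup \scope(\node'_1)$ and $\scope(\node_2) \cup \scope(\node'_2)$
are disjoint. My plan is to expand this intersection via distributivity into four pieces and dispatch each one separately.

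First, I would write
\[
(\scope(\node_1) \cup \scope(\node'_1)) \cap (\scope(\node_2) \cup \scope(\node'_2)) = \bigcup_{(i,j) \in \{1,2\}^2} \scope(\node_i) \cap \scope(\node'_j),
\]
where in the ``diagonal'' terms we interpret $\scope(\node_1) \cap \scope(\node_2)$ (from $i{=}1, j{=}2$ with both from the same circuit) as obtained by taking, respectively, the intersection within $\circuit$ and within $\circuit'$. Two of these four terms, namely $\scope(\node_1) \cap \scope(\node_2)$ and $\scope(\node'_1) \cap \scope(\node'_2)$, vanish immediately by decomposability of $\node$ and $\node'$.

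The interesting cases are the cross-circuit terms $\scope(\node_1) \cap \scope(\node'_2)$ and $\scope(\node'_1) \cap \scope(\node_2)$. The key observation is that any variable appearing in both a scope from $\circuit$ and a scope from $\circuit'$ must lie in $\bm{X} = \vars \cap \vars'$. Formally, since $\scope(\node_1) \subseteq \vars$ and $\scope(\node'_2) \subseteq \vars'$,
\[
\scope(\node_1) \cap \scope(\node'_2) = (\scope(\node_1) \cap \bm{X}) \cap (\scope(\node'_2) \cap \bm{X}).
\]
Then, applying the compatibility hypothesis $\scope(\node_1) \cap \bm{X} = \scope(\node'_1) \cap \bm{X}$, this equals $(\scope(\node'_1) \cap \bm{X}) \cap (\scope(\node'_2) \cap \bm{X}) \subseteq \scope(\node'_1) \cap \scope(\node'_2) = \emptyset$ by decomposability of $\node'$. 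The other cross-term is handled symmetrically via $\scope(\node_2) \cap \bm{X} = \scope(\node'_2) \cap \bm{X}$ and decomposability of $\node$.

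This is essentially routine set-theoretic bookkeeping, so I do not anticipate any real obstacle; the only thing to be careful about is to notice that both the left-hand and right-hand compatibility equations are needed to cover the two cross-terms, and to recognize that containment in $\vars \cap \vars'$ is what licenses substituting $\scope(\node_1) \cap \bm{X}$ for $\scope(\node_1)$ inside the cross-circuit intersection.
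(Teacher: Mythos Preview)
Your proposal is correct and follows essentially the same approach as the paper: expand the intersection by distributivity into four terms, kill the two within-circuit terms by decomposability of $\node$ and $\node'$, and for each cross-circuit term use that any shared variable lies in $\bm{X}=\vars\cap\vars'$, substitute via the compatibility hypothesis, and reduce to a within-circuit intersection. One cosmetic point: your displayed formula $\bigcup_{(i,j)\in\{1,2\}^2}\scope(\node_i)\cap\scope(\node'_j)$ does not literally give the correct four terms of $(A\cup B)\cap(C\cup D)$ (the $(1,1)$ and $(2,2)$ entries would be $\scope(\node_1)\cap\scope(\node'_1)$ and $\scope(\node_2)\cap\scope(\node'_2)$, not the within-circuit intersections); you patch this in prose, but it would be cleaner to just write out the four terms explicitly as the paper does.
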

    \begin{proof}
        We have that:
       \begin{align*}
           & (\scope(\node_1) \cup \scope(\node'_1)) \cap (\scope(\node_2) \cup \scope(\node'_2)) \\
           &= (\scope(\node_1) \cap \scope(\node_2)) \cup  (\scope(\node'_1) \cap \scope(\node'_2)) \cup  (\scope(\node_1) \cap \scope(\node'_2)) \cup (\scope(\node_2) \cap \scope(\node'_1))
       \end{align*}
       Note that the first two intersections are empty due to decomposability of $\node, \node'$. For the third intersection $(\scope(\node_1) \cap \scope(\node'_2))$, any variable in this intersection must be in the common variables $\bm{X}$. But we know that $\scope(\node'_2) \cap \bm{X} = \scope(\node_2) \cap \bm{X}$ in both cases above; by decomposability, $(\scope(\node'_2) \cap \bm{X}) \cap (\scope(\node_1) \cap \bm{X}) = \emptyset$. Thus the third intersection is also empty; a similar argument applies for the fourth. 
    \end{proof}
    Applying this Lemma, we see that $\node''$ is decomposable as $\scope(\prodcmp(\node_1, \node'_1)) = (\scope(\node_1) \cup \scope(\node'_1))$ and $\scope(\prodcmp(\node_2, \node'_2)) = (\scope(\node_2) \cup \scope(\node'_2))$. We can also verify that $\nodefn_{\node''} = \nodefn_{\prodcmp(\node_1, \node'_1)} \otimes \nodefn_{\prodcmp(\node_2, \node'_2)}  = \nodefn_{\node_1} \otimes \nodefn_{\node'_1} \otimes \nodefn_{\node_2} \otimes \nodefn_{\node'_2} = \nodefn_{\node} \otimes \nodefn_{\node'}$ by the inductive hypothesis, and associativity of $\otimes$.

    If $\node = +_{i=1}^{k} \node_i$, $\node' = +_{i=1}^{k'} \node'_i$ are sum nodes, then the algorithm produces a new sum node $\node'' = +_{i=1}^{k}  +_{j=1}^{k'} \prodcmp(\node_i, \node'_j)$  (Lines \ref{algline:scmp_sumbegin}-\ref{algline:scmp_sumend}). This computes the correct function as $\nodefn_{\node''} = \oplus_{i=1}^{k}  \oplus_{j=1}^{k'} \prodcmp(\node_i, \node'_j) = \oplus_{i=1}^{k}  \oplus_{j=1}^{k'} \nodefn_{\node_i} \nodefn_{\node'_j} = (\oplus_{i=1}^{k}  \nodefn_{\node_i}) \otimes (\oplus_{j=1}^{k'}  \nodefn_{\node'_j}) = \nodefn_{\node} \otimes \nodefn_{\node'}$. It also retains smoothness.

    The complexity of this algorithm is $O(|\circuit||\circuit'|)$ because we perform recursive calls for pairs of nodes in $\circuit$ and $\circuit'$.
\end{proof}

\subsubsection{Linear-time Product with Support Comptibility}\label{appx:prod-supp}

\thmSComp*
\begin{proof}
    We prove this inductively bottom up, for nodes $\alpha \in \circuit$ such that $\alpha' \in \circuit$ either satisfies $\alpha' = \isomorphism(\alpha)$ or $\scope(\alpha) \cap \scope(\alpha') = \emptyset$. Our claim is that $\prodscmp(\node, \node')$ (Algorithm \ref{alg:prodscmp}) returns a node $\node''$ such that $\nodefn_{\node''} = \nodefn_{\node} \otimes \nodefn_{\node'}$, has scope $\scope(\node'') = \scope(\node) \cup \scope(\node')$, and is decomposable (if product) and smooth (if sum).

    If $\scope(\alpha) \cap \scope(\alpha') = \emptyset$, then the algorithm (Lines \ref{algline:scmp_disjointstart}-\ref{algline:scmp_disjointend}) simply constructs a new product node $\alpha'' = \alpha \times \alpha'$. By definition, $\nodefn_{\node''} = \nodefn_{\node} \otimes \nodefn_{\node'}$, has scope $\scope(\node'') = \scope(\node) \cup \scope(\node')$, and $\node''$ is decomposable.

    If the $\node, \node'$ are input nodes, then we can construct a new input node $\node''$ satisfying the requisite properties (Lines \ref{algline:scmp_inputsbegin}-\ref{algline:scmp_inputsend}).

    If $\node = \node_1 \times \node_2, \node' = \node'_1 \times \node'_2$ are product nodes and $\isomorphism(\node) = \node'$, then the Algorithm (Lines \ref{algline:scmp_prodbegin}-\ref{algline:scmp_prodend}) constructs a product node $\node'' = \prodscmp(\node_1, \node'_1) \times \prodscmp(\node_2, \node'_2)$. Define $\bm{X} = \vars \cup \vars'$. By support compatibility (i.e. $\bm{X}$-support compatibility), $\node, \node'$ are part of the restricted circuits $\circuit[\bm{X}], \circuit'[\bm{X}]$ respectively and so $\scope(\node) \cap \bm{X} \neq \emptyset, \scope(\node') \cap \bm{X} \neq \emptyset$. There are two cases to consider; we first show that in both of these cases, we can apply the inductive hypothesis to $\prodscmp(\node_1, \node'_1)$ and $\prodscmp(\node_2, \node'_2)$.
    \begin{itemize}
        \item Firstly, suppose that both $\node_1$ and $\node_2$ have scope overlapping with $\bm{X}$. Then by the isomorphism, we have $\node'_1 = \isomorphism(\node_1)$, $\node'_2 = \isomorphism(\node_2)$. By the definition of support compatibility, this also means $\scope(\node_1) \cap \bm{X} = \scope(\node'_1) \cap \bm{X}$ and $\scope(\node_2) \cap \bm{X} = \scope(\node'_2) \cap \bm{X}$ and these are both non-empty; thus we can apply the inductive hypothesis for $\prodscmp(\node_1, \node'_1)$ and $\prodscmp(\node_2, \node'_2)$. 
        \item Second, suppose instead that only $\node_1$ has scope overlapping with $\bm{X}$, and so $\scope(\node_2) \cap \bm{X} = \emptyset$. Then $\node'_1 = \isomorphism(\node_1)$ and $\scope(\node_1) \cap \bm{X} = \scope(\node'_1) \cap \bm{X} = \scope(\node) \cap \bm{X} = \scope(\node') \cap \bm{X}$. Since $\scope(\node'_2) = \scope(\node') \setminus \scope(\node'_1)$, it follows that $\scope(\node_2) \cap \bm{X} = (\scope(\node') \cap \bm{X}) \setminus (\scope(\node'_1) \cap \bm{X}) = \emptyset$, i.e. $\node'_2$ also does not have scope overlapping with $\bm{X}$. Since $\bm{X}$ are the shared variables $\bm{V}, \bm{V}'$, it follows that $\scope(\node_2) \cap \scope(\node'_2) = \emptyset$, and so we can apply the inductive hypothesis for $\prodscmp(\node_2, \node_2')$ (and for $\prodscmp(\node_1, \node'_1)$).
    \end{itemize}
   
   By the inductive hypothesis, $\prodscmp(\node_1, \node'_1)$ has scope $\scope(\node_1) \cup \scope(\node'_1)$ and $\prodscmp(\node_2, \node'_2)$ has scope $\scope(\node_2) \cup \scope(\node'_2)$. We can thus apply Lemma \ref{lem:decprod}.
   Thus $\prodscmp(\node_1, \node'_1)$ and $\prodscmp(\node_2, \node'_2)$ have disjoint scopes and $\node''$ is decomposable.
   We can also verify that $\nodefn_{\node''} = \nodefn_{\prodscmp(\node_1, \node'_1)} \otimes \nodefn_{\prodscmp(\node_2, \node'_2)}  = \nodefn_{\node_1} \otimes \nodefn_{\node'_1} \otimes \nodefn_{\node_2} \otimes \nodefn_{\node'_2} = \nodefn_{\node} \otimes \nodefn_{\node'}$ by the inductive hypothesis, and associativity of $\otimes$.

   If $\node = +_{i=1}^{k} \node_i$, $\node' = +_{i=1}^{k'} \node'_i$ are sum nodes and $\isomorphism(\node) = \node'$, then by smoothness, all of the children of $\node$ have the same support and all the children of $\node'$ have the same support; thus all the children are in $\circuit[\bm{X}], \circuit'[\bm{X}]$ respectively, $k = k'$, and $\isomorphism(\node_i) = \node'_i$. By support compatibility, we also that (i) $\scope(\node_i) \cap \bm{X} = \scope(\node'_j) \cap \bm{X}$ for all $i, j$; and (ii) that $\support_{\bm{X}}(\node_i) \cap \support_{\bm{X}}(\node'_j)$ for $i \neq j$. 

   We claim that $\nodefn_{\node_i} \otimes \nodefn_{\node'_j} \equiv 0_{\semiring}$ whenever $i \neq j$. To see this, recall the definition of $\bm{X}$-support: we have that:
   \begin{align*}
       &\support_{\bm{X}}(\node_i) = \{\bm{x} \in \assign(\bm{X} \cap \scope(\node_i)): \exists \bm{y} \in \assign(\scope(\node_i) \setminus \bm{X}) \text{ s.t. } \nodefn_{\node_i}(\bm{x}, \bm{y}) \neq 0_{\semiring} \} \\
       &\support_{\bm{X}}(\node'_j) = \{\bm{x} \in \assign(\bm{X} \cap \scope(\node'_j)): \exists \bm{y} \in \assign(\scope(\node'_j) \setminus \bm{X}) \text{ s.t. } \nodefn_{\node'_j}(\bm{x}, \bm{y}) \neq 0_{\semiring} \}
   \end{align*}
   Since $\bm{X} \cap \scope(\node_i) = \bm{X} \cap \scope(\node'_j)$ and is nonempty, by (ii) we know that there is no assignment of $\bm{X} \cap \scope(\node_i)$ such that $\nodefn_{\node_i}$ and $\nodefn_{\node'_j}$ can be simultaneously not equal to $0_{\semiring}$. Thus there is no assignment of $\bm{X} \cap \scope(\node_i)$ such that $\nodefn_{\node_i} \otimes \nodefn_{\node'_j}$ is not $0_{\semiring}$, since $0_{\semiring}$ is the multiplicative annihilator.
   
   Thus, the product function is given by:
   \begin{align*}
       \nodefn_{\node} \otimes \nodefn_{\node'} 
       &= \bigoplus_{i=1}^{k} \bigoplus_{j=1}^{k} (\nodefn_{\node_i} \otimes \nodefn_{\node'_j}) \\
       &= \bigoplus_{i=1}^{k} (\nodefn_{\node_i} \otimes \nodefn_{\node'_i}) \\
       &= \bigoplus_{i=1}^{k} \prodscmp(\node_i, \node'_i)
   \end{align*}
   The second equality follows by the Lemma and the fact that $0_{\semiring}$ is the additive identity, and the third equality by the inductive hypothesis. Thus $\node'' = +_{i=1}^{k} \prodscmp(\node_i, \node'_i)$ computes the correct function (Lines \ref{algline:scmp_sumbegin}-\ref{algline:scmp_sumend}). We conclude by noting that $\scope(\node'') = \bigcup_{i=1}^{k} (\scope(\node_i)  \cup  \scope(\node_i))  = \bigcup_{i=1}^{k} \scope(\node_i)  \cup \bigcup_{i=1}^{k} \scope(\node_i) = \scope(\node) \cup \scope(\node')$.

   The complexity of this procedure applied to the root nodes is $O(\max(|\circuit|, |\circuit'|)$, as we only perform recursive calls for (i) $\node \in \circuit[\bm{X}]$ and its corresponding node $\node' = \isomorphism(\node)$ and (ii) nodes with non-overlapping scope, upon which the recursion ends; so the overall number of recursive calls is linear in the size of the circuits.

\end{proof}

\subsection{Tractable Elementwise Mapping}\label{appx:elem}
\thmTractMap*

\begin{proof}
    First, let us consider sum nodes. Given any sum node $\alpha = +_{i=1}^{k} \node_i \in \circuit$, we consider computing a circuit representing
    \begin{equation}
        \mappingfn\bigl(\nodefn_{\alpha}(\scope(\node))\bigr) \equiv \mappingfn\Bigl(\bigoplus_{i=1}^{k} \nodefn_{\alpha_i}(\scope(\node))\Bigr)
    \end{equation}
    If \sumtosum{} holds, then we immediately have that $\mappingfn\bigl(\bigoplus_{i=1}^{k} \nodefn_{\alpha_i}(\scope(\node))\bigr) \equiv \bigoplus_{i=1}^{k} \mappingfn(\nodefn_{\alpha_i}(\scope(\node)))$ by associativity of $\oplus$. Alternatively, if \deter{} holds, then given any $\varsval \in \assign((\scope(\node)))$, there is at most one child, say $\alpha_j$, such that $\nodefn_{\alpha_j}(\varsval) \neq 0_{\semiring}$. Then we have that
    \begin{align*}
        \mappingfn\bigl(\oplus_{i=1}^{k} \nodefn_{\alpha_i}(\varsval)\bigr) &= \mappingfn\Bigl( \nodefn_{\alpha_j}(\varsval) \oplus  \Bigl(\bigoplus_{i=1, i\neq j}^{k} \nodefn_{\alpha_i}(\varsval) \Bigr) \Bigr) \\
        &= \mappingfn\Bigl( \nodefn_{\alpha_j}(\varsval) \oplus  \Bigl(\bigoplus_{i=1, i\neq j}^{k} \0_{\semiring} \Bigr) \Bigr) \\
        &= \mappingfn\bigl( \nodefn_{\alpha_j}(\varsval) \bigr) \\
        &= \mappingfn\bigl( \nodefn_{\alpha_j}(\varsval) \bigr) \oplus \Bigl( \bigoplus_{i=1, i\neq j}^{k} \0_{\semiring'} \Bigr) \\
        &= \mappingfn\bigl( \nodefn_{\alpha_j}(\varsval) \bigr) \oplus \Bigl( \bigoplus_{i=1, i\neq j}^{k} \mappingfn(\0_{\semiring}) \Bigr) \\
        &= \mappingfn\bigl( \nodefn_{\alpha_j}(\varsval) \bigr) \oplus \Bigl( \bigoplus_{i=1, i\neq j}^{k} \mappingfn(\nodefn_{\alpha_i}(\varsval)) \Bigr) \\
        &= \bigoplus_{i=1}^{k} \mappingfn(\nodefn_{\alpha_i}(\varsval))
    \end{align*}
    and so again $\mappingfn\Bigl(\bigoplus_{i=1}^{k} \nodefn_{\alpha_i}(\varsval)\Bigr) \equiv \bigoplus_{i=1}^{k} \mappingfn(\nodefn_{\alpha_i}(\varsval))$.
    
    Second, let us consider product nodes. If \prodtoprod{} holds, then we immediately have that $\mappingfn\Bigl(\bigotimes_{i=1}^{k} \nodefn_{\alpha_i}(\scope(\node))\Bigr) \equiv \bigotimes_{i=1}^{k} \mappingfn(\nodefn_{\alpha_i}(\scope(\node)))$ by associativity of $\otimes$. 
    Otherwise, if \prodzeroone{} holds, then given any $\varsval \in \assign(\scope(\node))$, there is at most one child, say $\alpha_j$, such that $\nodefn_{\alpha_j}(\varsval) \not \in \{0_{\semiring}, 1_{\semiring} \}$. Thus, we have that:
    \begin{align*}
        \mappingfn\Bigl(\bigotimes_{i=1}^{k} \nodefn_{\alpha_i}(\varsval)\Bigr) &= \mappingfn\Bigl( \nodefn_{\alpha_j}(\varsval) \otimes  \Bigl(\bigotimes_{i=1, i\neq j}^{k} \nodefn_{\alpha_i}(\varsval) \Bigr) \Bigr) \\
        &= \mappingfn\bigl( \nodefn_{\alpha_j}(\varsval) \bigr) \otimes \mappingfn\Bigl( \bigotimes_{i=1, i\neq j}^{k} \nodefn_{\alpha_i}(\varsval) \Bigr) \\
        &= \mappingfn\bigl( \nodefn_{\alpha_j}(\varsval) \bigr) \otimes \Bigl( \bigotimes_{i=1, i\neq j}^{k} \mappingfn(\nodefn_{\alpha_i}(\varsval) \Bigr) \\
        &= \bigotimes_{i=1}^{k} \mappingfn(\nodefn_{\alpha_i}(\varsval))
    \end{align*}
    The second equality follows because $\Bigl(\bigotimes_{i=1, i\neq j}^{k} \nodefn_{\alpha_i}(\varsval) \Bigr) \in \{0_{\semiring}, 1_{\semiring}\}$, and we have that $\mappingfn(a \otimes 0_{\semiring}) = 0_{\semiring'} = \mappingfn(a)  \otimes \mappingfn(0_{\semiring})$ and $\mappingfn(a \otimes 1_{\semiring}) = 1_{\semiring'} = \mappingfn(a)  \otimes \mappingfn(1_{\semiring})$ for any $a \in \semiring$. The third equality follows as both $\mappingfn\Bigl( \bigotimes_{i=1, i\neq j}^{k} \nodefn_{\alpha_i}(\varsval) \Bigr)$ and $\bigotimes_{i=1, i\neq j}^{k} \mappingfn(\nodefn_{\alpha_i}(\varsval))$ are equal to $1_{\semiring'}$ iff no $\nodefn_{\node_i}(\varsval)$ is $0_\semiring$. Thus, we have that $\mappingfn\Bigl(\bigotimes_{i=1}^{k} \nodefn_{\alpha_i}(\varsval)\Bigr) \equiv \bigotimes_{i=1}^{k} \mappingfn(\nodefn_{\alpha_i}(\varsval))$.
    
    By applying these identities recursively to sum and product nodes, and assuming that $\mappingfn$ can be applied tractably to input nodes, we obtain a circuit $\circuit'$ such that $\nodefn_{\circuit'}(\vars) \equiv \tau(\nodefn_{\circuit}(\vars))$.
    \end{proof}

\subsection{Tractable Composition of \ops{}}\label{appx:comp}

\begin{table}[]
\resizebox{\linewidth}{!}{
\begin{tabular}{llllll}
\hline
\multicolumn{1}{c}{}              & \multicolumn{1}{c}{\textbf{}}                                           & \multicolumn{3}{c}{\textbf{If the Input Circuit(s) are ...}}                                                                                                                                                                                                                                                                                                              & \multicolumn{1}{c}{\textbf{}}      \\ %
                                  & \textbf{Tractability Conditions}                                        & $\bm{X}$-Det                                                                                        & $\bm{X}$-Cmp w/ $\circuit_{\text{other}}$                                                                                       & $\bm{X}$-SCmp w/ $\circuit_{\text{other}}$                                                                                        & \textbf{Complexity}                \\ \cline{3-5}
                                  &                                                                         & \multicolumn{3}{c}{\textbf{Then the Output Circuit is ...}}                                                                                                                                                                                                                                                                                                               &                                    \\ \hline
\textbf{Aggregation ($\bm{W}$)}   & Sm AND Dec                                                              & \begin{tabular}[c]{@{}l@{}}$\bm{X}$-Det \\ if $\bm{W} \cap \bm{X} = \emptyset$\\ (5.1)\end{tabular} & \begin{tabular}[c]{@{}l@{}}$\bm{X}$-Cmp w/ $\circuit_{\text{other}}$\\ if $\bm{W} \cap \bm{X} = \emptyset$\\ (5.5)\end{tabular} & \begin{tabular}[c]{@{}l@{}}$\bm{X}$-SCmp w/ $\circuit_{\text{other}}$ \\ if $\bm{W} \cap \bm{X} = \emptyset$\\ (5.9)\end{tabular} & $O(|\circuit|)$                    \\ \hline
\multirow{2}{*}{\textbf{Product}} & Cmp                                                                     & \begin{tabular}[c]{@{}l@{}}$\bm{X}$-Det\\ (5.2)\end{tabular}                                        & \begin{tabular}[c]{@{}l@{}}$\bm{X}$-Cmp w/ $\circuit_{\text{other}}$\\ (5.6)\end{tabular}                                       & N/A                                                                                                                               & $O(|\circuit| |\circuit'|)$        \\ \cline{2-6} 
                                  & SCmp                                                                    & \begin{tabular}[c]{@{}l@{}}$\bm{X}$-Det\\ (5.3)\end{tabular}                                        & \begin{tabular}[c]{@{}l@{}}$\bm{X}$-Cmp w/ $\circuit_{\text{other}}$\\ (5.7)\end{tabular}                                       & \begin{tabular}[c]{@{}l@{}}$\bm{X}$-SCmp w/ $\circuit_{\text{other}}$\\ (5.10)\end{tabular}                                       & $O(\max(|\circuit|, |\circuit'|))$ \\ \hline
\textbf{Elem. Mapping}      & \begin{tabular}[c]{@{}l@{}}(Sm AND Dec) AND\\ (Add OR Det) AND\\ (Mult OR Prod01)\end{tabular} & \begin{tabular}[c]{@{}l@{}}$\bm{X}$-Det\\ (5.4)\end{tabular}                                        & \begin{tabular}[c]{@{}l@{}}$\bm{X}$-Cmp w/ $\circuit_{\text{other}}$\\ (5.8)\end{tabular}                                       & \begin{tabular}[c]{@{}l@{}}$\bm{X}$-SCmp w/ $\circuit_{\text{other}}$\\ (5.11)\end{tabular}                                       & $O(|\circuit|)$                    \\ \hline
\end{tabular}
}
\caption{Tractability Conditions for Operations on Algebraic Circuits. Sm: Smoothness, Dec: Decomposability; $\bm{X}$-Det(erminism), $\bm{X}$-Cmp: $\bm{X}$-Compatibility, $\bm{X}$-SCmp: $\bm{X}$-Support-Compatibility.}
\label{tbl:tractability_apx}
\end{table}

\thmMaintain*
\begin{proof}
    We look at each property in turn, and show that they are maintained under the aggregation, product, and mapping \ops{} as stated in the Table. For convenience, we reproduce the table in Table \ref{tbl:tractability_apx}, with each result highlighted with a number that is referenced in the proof below.

    \paragraph{X-determinism} Suppose that circuit $\circuit$ is $\bm{X}$-deterministic; that is, for any sum node $\alpha = +_{i=1}^{k} \alpha_i \in \circuit$, either (i) $\scope(\alpha) \cap \bm{X} = \emptyset$, or else (ii) $\support_{\bm{X}}(\alpha_i) \cap \support_{\bm{X}}(\alpha_j) = \emptyset$ for all $i \neq j$.
    
     \textbf{(5.1)} Consider aggregating with respect to a set of variables $\bm{W}$ such that $\bm{W} \cap \bm{X} = \emptyset$. According to Algorithm \ref{alg:marg} and the proof of Theorem \ref{thm:agg}, this produces an output circuit where each node $\node'$ corresponds to some node $\node$ in the original circuit, such that $\nodefn_{\node'} = \bigoplus_{\bm{w} \cap \scope(\node)} \nodefn_{\node}$ and with scope $\scope(\node) \setminus \bm{W}$. In particular, for sum nodes $\alpha = +_{i=1}^{k} \alpha_i \in \circuit$, either $\scope(\node) \subseteq \bm{W}$, in which case $\node'$ is an input node (and $\bm{X}$-determinism is not applicable), or else $\node' = +_{i=1}^{k} \alpha'_i$ is also a sum node, where each $\node'_i$ corresponds to $\node_i$.
    If (i) $\scope(\alpha) \cap \bm{X} = \emptyset$, then $\scope(\node') \cap \bm{X} = \emptyset$ also.

    If (ii) $\support_{\bm{X}}(\alpha_i) \cap \support_{\bm{X}}(\alpha_j) = \emptyset$ for all $i \neq j$, we claim that $\support_{\bm{X}}(\node'_i) \subseteq \support_{\bm{X}}(\node_i)$ for all $i$. To see this, first note that by smoothness, $\scope(\node'_i) = \scope(\node'_j) = \scope(\node')$. Suppose that $\bm{x}_i \in \assign(\bm{X} \cap \scope(\node'))$ satisfies  $\bm{x} \in \support_{\bm{X}}(\alpha'_i)$. Then there exists $\bm{y}_i \in \assign(\scope(\node') \setminus \bm{X})$ such that $\nodefn_{\node'_i}(\bm{x}_i, \bm{y}_i) \neq 0_{\semiring}$. Since $\node'_i$ corresponds to $\node_i$ in the original circuit, we have:
    \begin{align*}
        &\bigoplus_{\bm{w} \in \assign(\bm{W}) \cap \scope(\node)} \nodefn_{\node_i}(\bm{x}_i, \bm{y}_i, \bm{w}_i) = \nodefn_{\node'_i}(\bm{x}, \bm{y}) \neq 0_{\semiring}
    \end{align*}
    This means that there must be some $\bm{w}_i \in \assign(\bm{W}) \cap \scope(\node)$ such that $\nodefn_{\node_i}(\bm{x}, \bm{y}_i, \bm{w}_i) \neq 0_{\semiring}$ (since $0_{\semiring}$ is the additive identity); thus $\bm{x} \in \support_{\bm{X}}(\node_i)$. To finish the proof, note that $\support_{\bm{X}}(\node'_i) \subseteq \support_{\bm{X}}(\node_i)$ and $\support_{\bm{X}}(\node'_l) \subseteq \support_{\bm{X}}(\node_l)$ are disjoint unless $i = l$ (by $\bm{X}$-determinism of $\node$, i.e. $\support_{\bm{X}}(\node_i) \cap \support_{\bm{X}}(\node_l) = \emptyset$ unless $i = l$). Thus (ii) holds for $\node'$. In either case, we have shown that $\node'$ is also $\bm{X}$-deterministic.

    \textbf{(5.2)} Consider taking the product of two compatible circuits $\circuit, \circuit'$ over variables $\bm{V}, \bm{V'}$, outputting a circuit $\circuit''$. According to Algorithm \ref{alg:prodcmp} and the proof of Theorem \ref{thm:cmp}, \chg{every sum node $\node'' \in \circuit''$ corresponds to either the product of (a) an input or product node $\node \in \circuit$ and a sum node $\node' =  +_{j=1}^{k'} \node'_j \in \circuit'$, such that $\node'' = +_{j=1}^{k'} \node''_{j}$ or (b) two sum nodes $\node = +_{i=1}^{k} \node_i \in \circuit$ and $\node' =  +_{j=1}^{k'} \node'_j \in \circuit'$, such that $\node'' = +_{i=1}^{k} +_{j=1}^{k'} \node''_{ij}$.}
    Further, $\node$ and $\node'$ have the same scope over the common variables $\bm{V} \cap \bm{V}'$, i.e. $\scope(\node) \cap (\vars \cap \vars')  = \scope(\node') \cap (\vars \cap \vars')$. 

    Assume that $\circuit$ and $\circuit'$ are both $\bm{X}$-deterministic; then $\bm{X} \subseteq \bm{V} \cap \bm{V}'$. We note that since $\node, \node'$ have the same scope over the common variables, they also have the same scope over $\bm{X}$, i.e. $\scope(\node) \cap \bm{X} = \scope(\node') \cap \bm{X}$. 
    
    \chg{In case (a), $\bm{X}$-determinism of $\node'$ means that either (i) $\scope(\node') \cap \bm{X} = \emptyset$ or (ii) $\support_{\bm{X}}(\node'_{i}) \cap \support_{\bm{X}}(\node'_{j}) = \emptyset$ for all $i \neq j$. If (i), then $\scope(\node'') \cap \bm{X} = (\scope(\node) \cup \scope(\node')) \cap \bm{X} = \emptyset$ also. If (ii), note that $\support_{\bm{X}}(\node''_j) \subseteq \support_{\bm{X}}(\node'_{j})$ for all $j$ as $a \otimes 0_{\semiring} = 0_{\semiring}$ for any semiring $\semiring$ and $a \in \semiring$. Thus $\support_{\bm{X}}(\node''_{i}) \cap \support_{\bm{X}}(\node''_{j}) = \emptyset$ for all $i \neq j$. Thus $\node''$ is $\bm{X}$-deterministic. }
    
    In case (b), since $\node, \node'$ have the same scope over $\bm{X}$, either (i) holds for both $\node, \node'$, or (ii) holds for both. If (i), then $\scope(\node'') \cap \bm{X} = (\scope(\node) \cup \scope(\node')) \cap \bm{X} = \emptyset$ also. If (ii), then for any $i,j$, consider the restricted support $\support_{\bm{X}}(\node''_{ij})$. Noting that $\scope(\node_{i}) \cap \bm{X} = \scope(\node'_j) \cap \bm{X} = \scope(\node''_{ij}) \cap \bm{X}$ by smoothness, we claim that $\support_{\bm{X}}(\node''_{ij}) \subseteq \support_{\bm{X}}(\node_i) \cap \support_{\bm{X}}(\node'_j)$. Suppose that $\bm{x} \in \support_{\bm{X}}(\node''_{ij})$. Then there exists some $\bm{y} \in \scope(\node''_{ij}) \setminus \bm{X}$ such that $\nodefn_{\node''_{ij}}(\bm{x}, \bm{y}) = \nodefn_{\node_i}(\bm{x}, \bm{y}_{\scope(\node_i)) \setminus \bm{X}}) \otimes \nodefn_{\node'_j}(\bm{x}, \bm{y}_{\scope(\node'_j) \setminus \bm{X}}) \neq 0_{\semiring}$. This means that both $\nodefn_{\node_i}(\bm{x}, \bm{y}_{\scope(\node_i)) \setminus \bm{X}}), \nodefn_{\node'_j}(\bm{x}, \bm{y}_{\scope(\node'_j) \setminus \bm{X}})$ cannot be $0_{\semiring}$, and so $\bm{x} \in \support_{\bm{X}}(\node_i)$ and $\bm{x} \in \support_{\bm{X}}(\node'_j)$ also. 
    To finish the proof, we note that $\support_{\bm{X}}(\node''_{ij}) \subseteq \support_{\bm{X}}(\node_i) \cap \support_{\bm{X}}(\node'_j)$ and $\support_{\bm{x}}(\node''_{lm}) \subseteq \support_{\bm{X}}(\node_l) \cap \support_{\bm{X}}(\node'_m)$ are disjoint unless $i = l, j = m$ (by $\bm{X}$-determinism of $\node$ and $\node'$). Thus $\node''$ is $\bm{X}$-deterministic by (ii).

    \textbf{(5.3)} Consider taking the product of two support-compatible circuits $\circuit, \circuit'$ over variables $\bm{V}, \bm{V'}$, outputting a circuit $\circuit''$. According to Algorithm \ref{alg:prodscmp} and the proof of Theorem \ref{thm:scmp}, every sum node $\node'' = +_{i=1}^{k} \node''_{i} \in \circuit''$ corresponds to some sum nodes $\node = +_{i=1}^{k} \node_i \in \circuit$ and $\node' =  +_{i=1}^{k} \node'_i \in \circuit'$ such that $\node' = \isomorphism(\node)$, $\nodefn_{\node''_i} = \nodefn_{\node_i} \otimes \nodefn_{\node'_i}$, and has scope $\scope(\node) \cup \scope(\node')$. Further, $\node$ and $\node'$ have the same scope over the common variables $\bm{V} \cap \bm{V}'$, i.e. $\scope(\node) \cap (\vars \cap \vars')  = \scope(\node') \cap (\vars \cap \vars')$. 

    Assume that $\circuit$ and $\circuit'$ are both $\bm{X}$-deterministic; then $\bm{X} \subseteq \bm{V} \cap \bm{V}'$. We note that since $\node, \node'$ have the same scope over the common variables, they also have the same scope over $\bm{X}$, i.e. $\scope(\node) \cap \bm{X} = \scope(\node') \cap \bm{X}$. Thus, either (i) holds for both $\node, \node'$, or (ii) holds for both. If (i), then $\scope(\node'') \cap \bm{X} = (\scope(\node) \cup \scope(\node')) \cap \bm{X} = \emptyset$ also. If (ii), then for any $i$, consider the restricted support $\support_{\bm{X}}(\node''_{ij})$. Noting that $\scope(\node_{i}) \cap \bm{X} = \scope(\node'_j) \cap \bm{X} = \scope(\node''_{i}) \cap \bm{X}$ by smoothness, we claim that $\support_{\bm{X}}(\node''_{i}) \subseteq \support_{\bm{X}}(\node_i) \cap \support_{\bm{X}}(\node'_i)$. Suppose that $\bm{x} \in \support_{\bm{X}}(\node''_{i})$. Then there exists some $\bm{y} \in \scope(\node''_{i}) \setminus \bm{X}$ such that $\nodefn_{\node''_{i}}(\bm{x}, \bm{y}) = \nodefn_{\node_i}(\bm{x}, \bm{y}_{\scope(\node_i)) \setminus \bm{X}}) \otimes \nodefn_{\node'_i}(\bm{x}, \bm{y}_{\scope(\node'_i) \setminus \bm{X}}) \neq 0_{\semiring}$. This means that both $\nodefn_{\node_i}(\bm{x}, \bm{y}_{\scope(\node_i)) \setminus \bm{X}}), \nodefn_{\node'_i}(\bm{x}, \bm{y}_{\scope(\node'_i) \setminus \bm{X}})$ cannot be $0_{\semiring}$, and so $\bm{x} \in \support_{\bm{X}}(\node_i)$ and $\bm{x} \in \support_{\bm{X}}(\node'_i)$ also. 
    To finish the proof, we note that $\support_{\bm{X}}(\node''_{i}) \subseteq \support_{\bm{X}}(\node_i) \cap \support_{\bm{X}}(\node'_i)$ and $\support_{\bm{x}}(\node''_{l}) \subseteq \support_{\bm{X}}(\node_l) \cap \support_{\bm{X}}(\node'_l)$ are disjoint unless $i = l$ (by $\bm{X}$-determinism of $\node$ and $\node'$). Thus $\node''$ is $\bm{X}$-deterministic by (ii).

    \textbf{(5.4)} Consider applying an elementwise mapping $\mappingfn$ to a circuit $\circuit$, outputting a circuit $\circuit'$. According to Algorithm \ref{alg:mapping} and Theorem \ref{thm:tract-map}, every sum node $\node' = +_{i=1}^{k} \node'_i \in \circuit'$ corresponds to some node $\node = +_{i=1}^{k} \node_i \in \circuit'$, such that $\nodefn_{\node'} = \mappingfn(\nodefn_{\node})$, and further $\nodefn_{\node'_i} = \mappingfn(\nodefn_{\node_i})$ and $\scope(\node'_i) = \scope(\node_i)$ for each $i$. 

    Assume that $\circuit$ is $\bm{X}$-deterministic. If (i) $\scope(\alpha) \cap \bm{X} = \emptyset$, then $\scope(\node') \bm{X} = \emptyset$ also. Otherwise, (ii) $\support_{\bm{X}}(\alpha_i) \cap \support_{\bm{X}}(\alpha_j) = \emptyset$ for all $i \neq j$. 
    We claim that $\support_{\bm{X}}(\node'_i) \subseteq \support_{\bm{X}}(\node_i)$ for each $i$. To see this, recall that elementwise mappings satisfy $\mappingfn(0_{\semiring}) = 0_{\semiring'}$. If $\bm{x} \in \support_{\bm{X}}(\node'_i)$, then there exists $\bm{y}$ s.t. $\nodefn_{\node'_i}(\bm{x}, \bm{y}) \neq 0_{\semiring'}$. Since $\nodefn_{\node'_i}(\bm{x}, \bm{y}) = \mappingfn(\nodefn_{\node_i}(\bm{x}, \bm{y}))$, $\nodefn_{\node_i}(\bm{x}, \bm{y}) \neq 0_{\semiring}$. So $\bm{x} \in \support_{\bm{X}}(\node_i)$. To finish the proof, note that $\support_{\bm{x}}(\node'_{i}) \subseteq \support_{\bm{X}}(\node_i)$ and $\support_{\bm{x}}(\node'_{l}) \subseteq \support_{\bm{X}}(\node_l)$ are disjoint unless $i = l$ (by $\bm{X}$-determinism of $\node$). Thus $\node'$ is $\bm{X}$-deterministic by (ii).

    \paragraph{X-compatibility} Recall that two smooth and decomposable circuits $\circuit, \circuit_{\text{other}}$ over variables $\bm{V}, \bm{V}_{\text{other}}$ are $\bm{X}$-compatible for $\bm{X} \subseteq \bm{V} \cap \bm{V}_{\text{other}}$ if for every product node $\node = \node_1 \times \node_2 \in \circuit$ and  $\node_{\text{other}} = \node_{\text{other}, 1} \times \node_{\text{other}, 2} \in \circuit_{\text{other}}$ such that $\scope(\node) \cap \bm{X} = \scope(\node_{\text{other}}) \cap \bm{X}$, it holds that $\scope(\node_1) \cap \bm{X} = \scope(\node_{\text{other}, 1}) \cap \bm{X}$ and $\scope(\node_2) \cap \bm{X} = \scope(\node_{\text{other}, 2}) \cap \bm{X}$.

    \textbf{(5.5)} Suppose that $\circuit, \circuito$ are $\bm{X}$-compatible. We wish to show that $\circuito, \circuit'$ are $\bm{X}$-compatible where $\circuit'$ is the output circuit from Algorithm \ref{alg:marg} that aggregates $\circuit$ over $\bm{W}$, where $\bm{W} \cap \bm{X} = \emptyset$. 

    Suppose $\node' = \node'_1 \times \node'_2 \in \circuit'$ and  $\node_{\text{other}} = \node_{\text{other}, 1} \times \node_{\text{other}, 2} \in \circuit_{\text{other}}$ are product nodes such that $\scope(\node') \cap \bm{X} = \scope(\node_{\text{other}}) \cap \bm{X}$. Let $\node = \node_1 \times \node_2$ be the corresponding node in $\circuit$ such that $\nodefn_{\node'} = \bigoplus_{\bm{w}} \nodefn_{\node}$. The scope $\scope(\node') = \scope(\node) \setminus \bm{W}$; since $\bm{W} \cap \bm{X} = \emptyset$, we have $\scope(\node) \cap \bm{X} = \scope(\node_{\text{other}}) \cap \bm{X}$ also. Thus, by $\bm{X}$-compatibility of $\circuit, \circuito$, we have that $\scope(\node_1) \cap \bm{X} = \scope(\node_{\text{other}, 1}) \cap \bm{X}$ and $\scope(\node_2) \cap \bm{X} = \scope(\node_{\text{other}, 2}) \cap \bm{X}$. Since $\scope(\node'_1) = \scope(\node_1) \setminus \bm{W}$ and $\scope(\node'_2) = \scope(\node_2) \setminus \bm{W}$, this means that $\scope(\node'_1) \cap \bm{X} = \scope(\node_{\text{other}, 1}) \cap \bm{X}$ and $\scope(\node'_2) \cap \bm{X} = \scope(\node_{\text{other}, 2}) \cap \bm{X}$. Thus $\circuit', \circuito$ are $\bm{X}$-compatible.

    \textbf{(5.6)} Suppose that $\circuit$ over $\vars$ and $\circuit'$ over $\vars'$ are both $\bm{X}$-compatible with $\circuito$. We wish to show that $\circuito, \circuit''$ are $\bm{X}$-compatible where $\circuit''$ is the output circuit from Algorithm \ref{alg:prodcmp} that computes the product of the two compatible (i.e. $(\vars \cup \vars')$-compatible) circuits $\circuit, \circuit'$. 

    Suppose $\node'' = \node''_1 \times \node''_2 \in \circuit''$ is a product node, and $\node_{\text{other}} = \node_{\text{other}, 1} \times \node_{\text{other}, 2} \in \circuito$ such that $\scope(\node'') \cap \bm{X} = \scope(\node_{\text{other}}) \cap \bm{X}$; we need to show that these decompose in the same way over $\bm{X}$. By Algorithm \ref{alg:prodcmp} and the proof of Theorem \ref{thm:cmp}, this was created as the product of nodes $\node = \node_1 \times \node_2 \in \circuit$ and $\node' = \node'_1 \times \node'_2 \in \circuit'$ such that $\scope(\node'') \cap (\bm{V} \cap \bm{V'}) = \scope(\node) \cap (\bm{V} \cap \bm{V'}) = \scope(\node') \cap (\bm{V} \cap \bm{V'}) $ (and similarly for their children). Thus by $(\vars \cup \vars')$-compatibility of $\circuit, \circuit'$, $\node$ and $\node'$ decompose the same way over $(\vars \cup \vars')$, i.e. $\scope(\node_1) \cap (\vars \cup \vars')= \scope(\node'_1) \cap (\vars \cup \vars')$ and $\scope(\node_2) \cap (\vars \cup \vars') = \scope(\node'_2) \cap (\vars \cup \vars')$. Since $\bm{X} \subseteq \vars \cap \vars'$ (by definition of compatibility), this also holds over $\bm{X}$, i.e. $\scope(\node_1) \cap \bm{X} = \scope(\node'_1) \cap \bm{X}$ and $\scope(\node_2) \cap \bm{X} = \scope(\node'_2) \cap \bm{X}$.

    Now, since $\scope(\node''_1) = \scope(\node_1) \cup \scope(\node'_1)$ and $\scope(\node''_2) = \scope(\node_2) \cup \scope(\node'_2)$, we have that:
    \begin{align*}
        &\scope(\node'') \cap \bm{X} = (\scope(\node) \cap \bm{X}) \cup (\scope(\node') \cap \bm{X}) = \scope(\node) \cap \bm{X} \\
        &\scope(\node''_1) \cap \bm{X} = (\scope(\node_1) \cap \bm{X}) \cup (\scope(\node'_1) \cap \bm{X}) = \scope(\node_1) \cap \bm{X} \\
        &\scope(\node''_2) \cap \bm{X} = (\scope(\node_2) \cap \bm{X}) \cup (\scope(\node'_2) \cap \bm{X}) = \scope(\node_2) \cap \bm{X} \\
    \end{align*}
    By compatibility of $\circuit, \circuito$, we have that $\scope(\node_{\text{other}_1}) \cap \bm{X} = \scope(\node_{1}) \cap \bm{X}$ and $\scope(\node_{\text{other}_2}) \cap \bm{X} = \scope(\node_{2}) \cap \bm{X}$. Thus $\scope(\node_{\text{other}_1}) \cap \bm{X} = \scope(\node''_{1}) \cap \bm{X}$ and $\scope(\node_{\text{other}_2}) \cap \bm{X} = \scope(\node''_{2}) \cap \bm{X}$. This shows $\bm{X}$-compatibility of $\circuit'', \circuito$.

    \begin{example}[Counterexample to (5.6) for Compatibility] \label{ex:counter_compatibility}
        While $\bm{X}$-compatibility is maintained through multiplying compatible circuits, the same is not true for compatibility, due to the different variable overlaps between the circuits. \chg{For example, suppose that $\circuit$ over variable sets $\bm{A}, \bm{B}, \bm{C}$ has product nodes with scope decomposing as $\node = \node_1(\bm{A}) \times \node_2(\bm{B} \cup \bm{C})$, and $\circuit'$ over variable sets $\bm{A}, \bm{B}, \bm{D}$ has product nodes with scope decomposing as $\node' = \node'_1(\bm{A}) \times \node'_2(\bm{B} \cup \bm{D})$. Then these circuits are compatible (i.e. $\bm{A} \cup \bm{B}$-compatible), and their product is a circuit with product nodes with scope decomposing as $\node'' = \node'_1(\bm{A}) \times \node'_2(\bm{B} \cup \bm{C} \cup \bm{D})$. Now consider $\circuito$ with product nodes with scope decomposing as $\node_{\text{other}} = \node_{\text{other}}(\bm{C}) \times \node_{\text{other}}(\bm{D})$. This is compatible with $\node$ and $\node'$, but not with $\node''$. }
    \end{example}

    \textbf{(5.7)} This holds by the same argument as (5.6).

    \textbf{(5.8)} The circuit $\circuit'$ obtained by applying an elementwise mapping to $\circuit$ does not change the scopes of any node. Thus, if $\circuit$ is compatible with $\circuito$, then $\circuit'$ is also compatible with $\circuito$.

    \paragraph{X-support-compatibility} Recall that two smooth and decomposable circuits $\circuit, \circuit_{\text{other}}$ over variables $\bm{V}, \bm{V}_{\text{other}}$ are $\bm{X}$-support-compatible for $\bm{X} \subseteq \bm{V} \cap \bm{V}_{\text{other}}$ if there is an isomorphism $\isomorphism$ between the nodes $\circuit[\bm{X}]$ and $\circuito[\bm{X}]$, such that:
    \begin{itemize}
        \item For any node $\alpha \in \circuit[\bm{X}]$, $\scope(\alpha) \cap \bm{X} = \scope(\isomorphism(\alpha)) \cap \bm{X}$;
        \item For all sum nodes $\alpha = +_{i=1}^{k} \alpha_i \in \circuit[\bm{X}]$, we have that $\support_{\bm{X}}(\alpha_i) \cap \support_{\bm{X}}(\isomorphism(\alpha_j)) = \emptyset$ whenever $i \neq j$. 
    \end{itemize}
    
    \textbf{(5.9)} Suppose that $\circuit, \circuito$ are $\bm{X}$-support-compatible; and let $\isomorphism_{\circuito, \circuit}$ be the isomorphism from $\circuito[\bm{X}]$ to $\circuit[\bm{X}]$. We wish to show that $\circuito, \circuit'$ are $\bm{X}$-support-compatible where $\circuit'$ is the output circuit from Algorithm \ref{alg:marg} that aggregates $\circuit$ over $\bm{W}$, where $\bm{W} \cap \bm{X} = \emptyset$. 

    We define the isomorphism as follows. Consider the set of nodes $\circuit'[\bm{X}]$. Since $\bm{W} \cap \bm{X} = \emptyset$, these nodes are not scalars and so are not propagated away by Lines \ref{algline:marg_inwbegin}-\ref{algline:marg_inwend}. Moreover, since the algorithm retains the node types and connectivity of the circuit, there is an isomorphism $\isomorphism_{\circuit, \circuit'}$ between $\circuit[\bm{X}]$ and $\circuit'[\bm{X}]$. There is thus an isomorphism $\isomorphism_{\circuito, \circuit'} := \isomorphism_{\circuit, \circuit'} \circ \isomorphism_{\circuito, \circuit}$ between $\circuito[\bm{X}]$ and $\circuit'[\bm{X}]$. It remains to show the two conditions. 
    
    Given a node $\node_{\text{other}} \in \circuito$, let us write $\node := \isomorphism_{\circuito, \circuit}(\node_{\text{other}})$ and $\node' := \isomorphism_{\circuit, \circuit'}(\node)$. By $\bm{X}$-support compatibility of $\circuito, \circuit$, we have that $\scope(\node_{\text{other}}) \cap \bm{X} = \scope(\node) \cap \bm{X}$. By the proof of Theorem \ref{thm:agg}, we know that $\scope(\node') = \scope(\node) \setminus \bm{W}$. Since $\bm{W} \cap \bm{X} = \emptyset$, this implies that $\scope(\node_{\text{other}}) \cap \bm{X} = \scope(\node') \cap \bm{X}$ as required. For the second part, suppose that these are sum nodes, i.e. $\node_{\text{other}} = +_{i=1}^{k} \node_{\text{other}, i}$, $\node = +_{i=1}^{k} \node_i$ and $\node' = +_{i=1}^{k} \node'_i$. We know by $\bm{X}$-support-compatibility that $\support_{\bm{X}}(\node_{\text{other}, i}) \cap \support_{\bm{X}}(\node_j) = \emptyset$ whenever $i \neq j$. By the same argument as in (5.1), we have that $\support_{\bm{X}}(\node'_i) \subseteq \support_{\bm{X}}(\node_i)$ for all $i$.
    Thus we  can conclude that $\support_{\bm{X}}(\node_{\text{other}, i}) \cap \support_{\bm{X}}(\node'_j) = \emptyset$ whenever $i \neq j$. So $\circuito, \circuit'$ are $\bm{X}$-support-compatible.

    \textbf{(5.10)} Suppose that $\circuit$ over $\vars$ and $\circuit'$ over $\vars'$ are both $\bm{X}$-support-compatible with $\circuito$; write $\isomorphism_{\circuito, \circuit}$ for the isomorphism from $\circuito[\bm{X}]$ to $\circuit$, and $\isomorphism_{\circuito, \circuit'}$ for the isomorphism from $\circuito[\bm{X}]$ to $\circuit'$. We wish to show that $\circuito, \circuit''$ are $\bm{X}$-support-compatible where $\circuit''$ is the output circuit from Algorithm \ref{alg:prodscmp} that computes the product of the two support-compatible (i.e. $(\vars \cup \vars')$-support-compatible) circuits $\circuit, \circuit'$.

    We define the isomorphism as follows. Consider the set of nodes $\circuit''[\bm{X}]$. The algorithm for multiplying $\circuit, \circuit'$ makes use of the isomorphism $\isomorphism_{\circuit, \circuit'}$ between $\circuit[\bm{V} \cap \bm{V}']$ and $\circuit'[\bm{V} \cap \bm{V}']$, with $\circuit''[\bm{V} \cap \bm{V}']$ retaining the same connectivity and node types; thus there is an isomorphism $\isomorphism_{\circuit, \circuit''}$ from $\circuit[\bm{V} \cap \bm{V}']$ to $\circuit''[\bm{V} \cap \bm{V}']$, also. Since $\bm{X} \subseteq (\bm{V} \cap \bm{V}')$, this isomorphism also holds between the circuits restricted to $\bm{X}$. Thus, we define the isomorphism $\isomorphism = \isomorphism_{\circuit, \circuit''} \circ \isomorphism_{\circuito, \circuit}$ between $\circuito[\bm{X}]$ and $\circuit''[\bm{X}]$. It remains to show the two conditions.

    Given a node $\node_{\text{other}} \in \circuito$, let us write $\node := \isomorphism_{\circuito, \circuit}(\node_{\text{other}})$, $\node' = \isomorphism_{\circuit, \circuit'}(\node)$  and $\node'' := \isomorphism_{\circuit, \circuit''}(\node)$. By $\bm{X}$-support-compatibility of $\circuito, \circuit$, we have that $\scope(\node_{\text{other}}) \cap \bm{X} = \scope(\node) \cap \bm{X}$. By support-compatibility of $\circuit, \circuit'$, we have that $\scope(\node) \cap (\vars \cap \vars') = \scope(\node') \cap (\vars \cap \vars')$ and so $\scope(\node) \cap \bm{X} = \scope(\node') \cap \bm{X}$, and both are equal to $\scope(\node'') \cap \bm{X}$ since $\scope(\node'') = \scope(\node) \cup \scope(\node')$ (as in Theorem \ref{thm:scmp}). Thus $\scope(\node_{\text{other}}) \cap \bm{X} = \scope(\node'') \cap \bm{X}$ as required. For the second part, suppose that these are sum nodes, i.e. $\node_{\text{other}} = +_{i=1}^{k} \node_{\text{other}, i}$, $\node = +_{i=1}^{k} \node_i$, $\node' = +_{i=1}^{k} \node'_i$ and $\node' = +_{i=1}^{k} \node''_i$.  We know by $\bm{X}$-support-compatibility that $\support_{\bm{X}}(\node_{\text{other}, i}) \cap \support_{\bm{X}}(\node_j) = \emptyset$ whenever $i \neq j$. By the same argument as in (5.3), we have that $\support_{\bm{X}}(\node'') \subseteq \support_{\bm{X}}(\node) \cap \support_{\bm{X}}(\node')$. Thus we can conclude that $\support_{\bm{X}}(\node_{\text{other}, i}) \cap \support_{\bm{X}}(\node'') = \emptyset$. So $\circuito, \circuit''$ are $\bm{X}$-support-compatible.

    \textbf{(5.11)} Suppose that $\circuit, \circuito$ are $\bm{X}$-support-compatible; and let $\isomorphism_{\circuito, \circuit}$ be the isomorphism from $\circuito[\bm{X}]$ to $\circuit[\bm{X}]$. We wish to show that $\circuito, \circuit'$ are $\bm{X}$-support-compatible where $\circuit'$ is the output circuit from Algorithm \ref{alg:mapping} that applies an elementwise mapping $\mappingfn$ to $\circuit$. Algorithm \ref{alg:mapping} maps each node $\node \in \circuit$ to another node $\node' \in \circuit$, keeping the node type and connectivity; this defines an isomorphism $\isomorphism_{\circuit, \circuit'}$ from $\circuit[\bm{X}]$ to $\circuit'[\bm{X}]$. Thus we have an isomorphism $\isomorphism_{\circuito, \circuit'} := \isomorphism_{\circuit, \circuit'} \circ \isomorphism_{\circuito, \circuit}$. It remains to show the two conditions.

     Given a node $\node_{\text{other}} \in \circuito$, let us write $\node := \isomorphism_{\circuit0, \circuit}(\node_{\text{other}})$ and $\node' := \isomorphism_{\circuit, \circuit'}(\node)$. By $\bm{X}$-support-compatibility of $\circuito, \circuit$, we have that $\scope(\node_{\text{other}}) \cap \bm{X} = \scope(\node) \cap \bm{X}$. The mapping algorithm does not change the scope of the nodes, i.e. $\scope(\node') = \scope(\node)$, so we have that $\scope(\node_{\text{other}}) \cap \bm{X} = \scope(\node') \cap \bm{X}$ as required. For the second part, suppose that these are sum nodes, i.e. $\node_{\text{other}} = +_{i=1}^{k} \node_{\text{other}, i}$, $\node = +_{i=1}^{k} \node_i$ and $\node' = +_{i=1}^{k} \node'_i$. We know by $\bm{X}$-support-compatibility that $\support_{\bm{X}}(\node_{\text{other}, i}) \cap \support_{\bm{X}}(\node_j) = \emptyset$ whenever $i \neq j$. We know by the same argument as in (5.4) that $\support_{\bm{X}}(\node'_i) \subseteq \support_{\bm{X}}(\node_i)$ for all $i$.  Thus we  can conclude that $\support_{\bm{X}}(\node_{\text{other}, i}) \cap \support_{\bm{X}}(\node'_j) = \emptyset$ whenever $i \neq j$. So $\circuito, \circuit'$ are $\bm{X}$-support-compatible.
\end{proof}

\thmAMChard*

\begin{proof}
Take a DNF $\phi$ with terms $\phi_1,\dotsc,\phi_m$ over variables $X_1,\dotsc,X_n$. Let $l = \lceil \log m \rceil + 1$. Let us construct another DNF $\phi'$ with terms $\phi'_1, \dotsc,\phi'_m$ over variables $X_1\dots,X_n$ and $Y_1,\dotsc,Y_{l+1}$ such that each $\phi'_i$ is the conjunction of $\phi_i$, $Y_{l+1}$ and a term over $Y_1,\dotsc,Y_l$ encoding a binary representation of $i$. For example:
\begin{align*}
\phi'_5 = \phi_5 \wedge Y_1 \wedge \neg Y_2 \wedge Y_3 \wedge \neg Y_4 \wedge \dotsb \wedge \neg Y_l \wedge Y_{l + 1} . 
\end{align*}
Now, efficiently manipulate $\phi'$ to make it smooth \citep{darwiche2001smoothing}.
The circuit $\phi'$ is thus smooth, decomposable, deterministic and trivially satisfies X-firstness (since  the children to every $\wedge$-gate are literals).
Take the probability semiring as $\semiring_{\bm{X}}$, and  $\semiring_{\bm{Y}} = (\mathbb{N}^2, +_2, \times_2, (0,0), (1,1))$ and $\tau((n1,n2))=n1/n2$ (define $0/0=0$).
Also, define $\omega(x)=1$, and $\omega'(Y_{l+1}=0)=(0,1)$ and $\omega'(y)=1$ for all other literals. 
Then 2AMC counts the models of $\phi$, which is \#P-hard \citep{valiant}:
\[
2AMC = \sum_{\mathbf{x}} \frac{\sum_{\mathbf{y}: y_{l+1}=1}  \phi'(\mathbf{x},\mathbf{y})}{\sum_{\mathbf{y}} \phi'(\mathbf{x},\mathbf{y})} = \sum_{\mathbf{x}} \phi(\mathbf{x}) ,
\]
where we assume $0/0=0$.
The last equality follows because the circuit is deterministic (hence $\sum_{\mathbf{y}} \phi'(\mathbf{x},\mathbf{y}) = \max_{\mathbf{y}} \phi(\mathbf{x},\mathbf{y}) \leq 1$) and logically equivalent to $\phi$ (i.e., $\forall \bm{x}: \phi(\bm{x}) = 1 \Leftrightarrow \exists \bm{y}: \phi'(\bm{x},\bm{y}) = 1$). 
\end{proof}

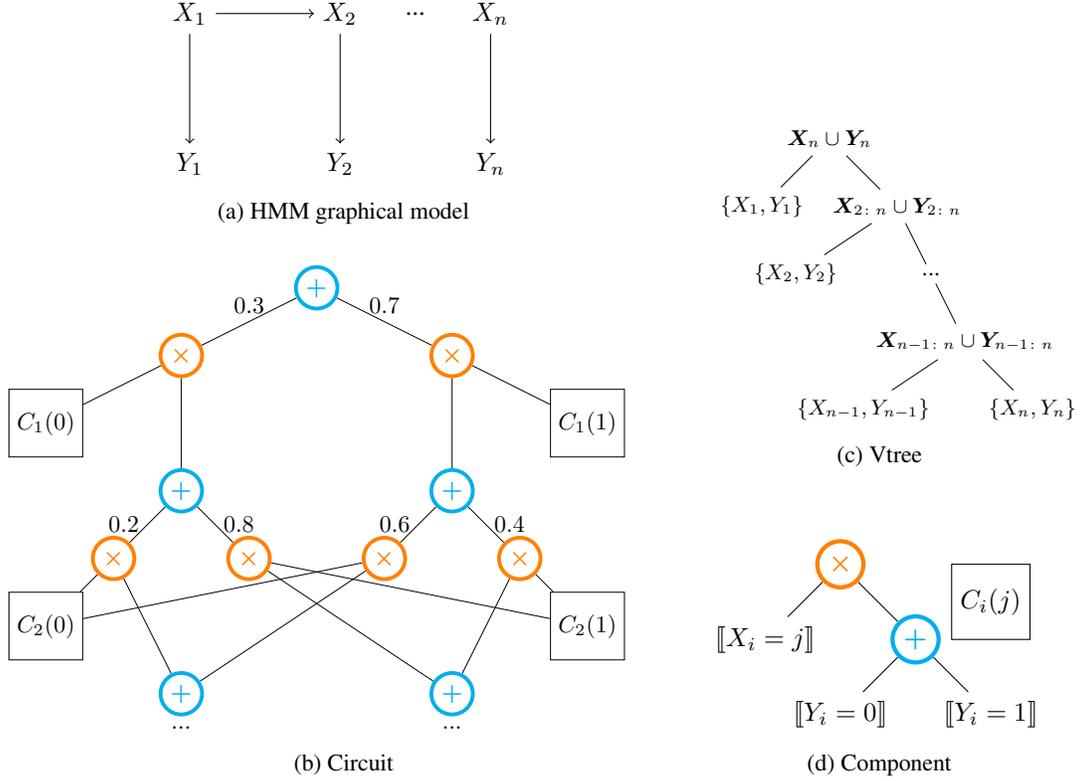
\begin{figure}
    \centering
    \begin{subfigure}{0.65\textwidth}
    \begin{subfigure}{\textwidth}
    \centering
\begin{tikzpicture}
	\begin{pgfonlayer}{nodelayer}
		\node [style=generic] (0) at (-2, 1) {$X_1$};
		\node [style=generic] (1) at (-2, -1) {$Y_1$};
		\node [style=generic] (2) at (0, 1) {$X_2$};
		\node [style=generic] (3) at (0, -1) {$Y_2$};
		\node [style=generic] (4) at (2, 1) {$X_n$};
		\node [style=generic] (5) at (2, -1) {$Y_n$};
		\node [style=generic] (6) at (1, 1) {...};
	\end{pgfonlayer}
	\begin{pgfonlayer}{edgelayer}
		\draw [style=directed] (0) to (1);
		\draw [style=directed] (0) to (2);
		\draw [style=directed] (2) to (3);
		\draw [style=directed] (4) to (5);
	\end{pgfonlayer}
\end{tikzpicture}
        \caption{HMM graphical model}
        \label{fig:separation_1_hmm}
    \end{subfigure}
    \vspace{0.1cm}
    
    \begin{subfigure}{\textwidth}
    \scalebox{0.9}{
        \begin{tikzpicture}
	\begin{pgfonlayer}{nodelayer}
		\node [style=generic, sum] (0) at (0, 4) {$\bm{+}$};
		\node [style=generic, prod] (1) at (-2, 3) {$\bm{\times}$};
		\node [style=generic, prod] (2) at (2, 3) {$\bm{\times}$};
		\node [style=rect] (3) at (-4, 2) {$C_{1}(0)$};
		\node [style=generic, sum] (4) at (-2, 1) {$\bm{+}$};
		\node [style=generic, sum] (5) at (2, 1) {$\bm{+}$};
		\node [style=generic, prod] (6) at (-3, 0) {$\bm{\times}$};
		\node [style=generic, prod] (7) at (-1, 0) {$\bm{\times}$};
		\node [style=generic, prod] (8) at (1, 0) {$\bm{\times}$};
		\node [style=generic, prod] (9) at (3, 0) {$\bm{\times}$};
		\node [style=rect] (10) at (4, 2) {$C_{1}(1)$};
		\node [style=generic, sum] (11) at (-2, -2) {$\bm{+}$};
		\node [style=generic, sum] (12) at (2, -2) {$\bm{+}$};
		\node [style=rect] (13) at (-4, -1) {$C_{2}(0)$};
		\node [style=rect] (14) at (4, -1) {$C_{2}(1)$};
  \node [style=generic] (15) at (-2, -2.5) {$...$};
         \node [style=generic] (16) at (2, -2.5) {$...$};
	\end{pgfonlayer}
	\begin{pgfonlayer}{edgelayer}
		\draw (0) to node [above] {$0.3$} (1);
		\draw (0) to node [above] {$0.7$} (2);
		\draw (1) to (3);
		\draw (1) to (4);
		\draw (2) to (5);
		\draw (2) to (10);
		\draw (4) to node [left] {$0.2$} (6);
		\draw (4) to node [right] {$0.8$} (7);
		\draw (5) to node [left] {$0.6$}(8);
		\draw (5) to node [right] {$0.4$} (9);
		\draw (6) to (13);
		\draw (9) to (14);
		\draw (7) to (14);
		\draw (6) to (11);
		\draw (7) to (12);
		\draw (8) to (11);
		\draw (9) to (12);
		\draw (8) to (13);
	\end{pgfonlayer}
\end{tikzpicture}
        }
        \caption{Circuit}
        \label{fig:separation_1_circuit}
    \end{subfigure}
    \end{subfigure}
    \hfill
    \begin{subfigure}{0.33\textwidth}
    \begin{subfigure}{\textwidth}
        \scalebox{0.9}{
        \begin{tikzpicture}
	\begin{pgfonlayer}{nodelayer}
		\node [style=generic] (0) at (1.5, 4) {\small $\bm{X}_n \cup \bm{Y}_n$};
		\node [style=generic] (1) at (0.5, 3) {\small$\{X_1, Y_1\}$};
		\node [style=generic] (2) at (2.5, 3) {\small$\bm{X}_{2\colon n} \cup \bm{Y}_{2\colon n}$};
		\node [style=generic] (3) at (1, 2) {\small$\{X_2, Y_2\}$};
		\node [style=generic] (4) at (3.5, 1) {\small$\bm{X}_{n-1\colon n} \cup \bm{Y}_{n-1\colon n}$};
		\node [style=generic] (5) at (2, 0) {\small$\{X_{n-1}, Y_{n-1}\}$};
		\node [style=generic] (6) at (4.5, 0) {\small$\{X_{n}, Y_{n}\}$};
		\node [style=generic] (7) at (3, 2) {};
		\node [style=generic] (8) at (3, 2) {...};
	\end{pgfonlayer}
	\begin{pgfonlayer}{edgelayer}
		\draw (0) to (1);
		\draw (0) to (2);
		\draw (2) to (3);
		\draw (4) to (5);
		\draw (4) to (6);
		\draw (2) to (8);
		\draw (8) to (4);
	\end{pgfonlayer}
\end{tikzpicture}
        }
        \caption{Vtree}
        \label{fig:separation_1_vtree}
    \end{subfigure}
    \vspace{0.5cm}

    \begin{subfigure}{\textwidth}
    \begin{tikzpicture}
	\begin{pgfonlayer}{nodelayer}
		\node [style=rect] (0) at (2, 1.5) {$C_{i}(j)$};
		\node [style=generic, prod] (1) at (0, 2) {$\bm{\times}$};
		\node [style=generic] (2) at (-1, 1) {$\llbracket{X_i = j}\rrbracket$};
		\node [style=generic, sum] (3) at (1, 1) {$\bm{+}$};
		\node [style=generic] (4) at (0, 0) {$\llbracket{Y_i = 0}\rrbracket$};
		\node [style=generic] (5) at (2, 0) {$\llbracket{Y_i = 1}\rrbracket$};
	\end{pgfonlayer}
	\begin{pgfonlayer}{edgelayer}
		\draw (1) to (2);
		\draw (1) to (3);
		\draw (3) to (4);
		\draw (3) to (5);
	\end{pgfonlayer}
\end{tikzpicture}
    \caption{Component}
    \label{fig:separation_1_component}
    \end{subfigure}
    \end{subfigure}
    \caption{Illustration of PC computing hidden Markov model (HMM)}
    \label{fig:separation_1}
\end{figure}

\begin{algorithm}[t]
    \KwInput{Decomposable, smooth, deterministic, $\propertyvars$-first and $\propertyvars$-deterministic logic circuit $\circuit$ over $\bm{X} \cup \bm{Y}$,  weight circuits $\omega_{\propertyvars}, \omega_{\othervars}$, semirings $\semiring_{\propertyvars}, \semiring_{\othervars}$, mapping function $\mappingfn_{\semiring_{\othervars} \to \semiring_{\propertyvars}}$}
    \KwOutput{2AMC value (scalar in semiring $\semiring_{\propertyvars}$)}
    $\circuit_{\semiring_{\othervars}}(\propertyvars, \othervars) \gets \mappingalg(\circuit(\propertyvars, \othervars); \supportmapping{\cdot}{\boolsemiring}{\semiring_{\othervars}})$ \; \label{algline:2amc_map1}
    $\circuit_{\semiring_{\othervars}, \weightfn_{\othervars}}(\propertyvars, \othervars) \gets \prodcmp(\circuit_{\semiring_{\othervars}}(\propertyvars, \othervars), \weightfn_{\othervars}) $ \; \label{algline:2amc_prod1}
    $\circuit_{\semiring_{\othervars}, \weightfn_{\othervars}}(\propertyvars) \gets \marg(\circuit_{\semiring_{\othervars}, \weightfn_{\othervars}}(\propertyvars, \othervars); \othervars)$ \; \label{algline:2amc_agg1}
    $\circuit_{\semiring_{\propertyvars}, \weightfn_{\othervars}}(\propertyvars) \gets \mappingalg(\circuit_{\semiring_{\othervars}, \weightfn_{\othervars}}(\propertyvars); \mappingfn_{\semiring_{\othervars} \to \semiring_{\propertyvars}})$ \; \label{algline:2amc_map2}
    $\circuit_{\semiring_{\propertyvars}, \weightfn_{\othervars}, \weightfn_{\propertyvars}}(\propertyvars) \gets \prodcmp(\circuit_{\semiring_{\propertyvars}}(\propertyvars), \weightfn_{\propertyvars}) $ \; \label{algline:2amc_prod2}
    \KwResult{$\marg(\circuit_{\semiring_{\propertyvars}, \weightfn_{\othervars}, \weightfn_{\propertyvars}}(\propertyvars); \propertyvars)$} \label{algline:2amc_agg2}
    
    \caption{2AMC}
    \label{alg:2amc}
\end{algorithm}
\thmAMCtract*
\begin{proof}
    In Algorithm \ref{alg:2amc}, we show the algorithm for 2AMC, which is simply a composition of aggregations, products, and elementwise mappings. To show tractability of 2AMC, we simply need to show that the input circuits to each of these \ops{} satisfy the requisite tractability conditions.

    We start with a smooth, decomposable, deterministic, $\bm{X}$-deterministic, and $\bm{X}$-first circuit $\circuit(\propertyvars, \othervars)$. 
    \begin{itemize}
        \item In line \ref{algline:2amc_map1}, we use the support mapping (Definition \ref{defn:support_mapping}) from the Boolean to $\mathcal{S}_{\othervars}$ semiring; this is tractable by Corollary \ref{cor:support_mapping} due to determinism, and the output  $\circuit_{\semiring_{\othervars}}(\propertyvars, \othervars)$ retains all the properties by Table \ref{tbl:tractability_apx}. 
        \item In line \ref{algline:2amc_prod1}, we take the product of $\circuit_{\semiring_{\othervars}}(\propertyvars, \othervars)$ and $\weightfn_{\bm{X}}(\bm{X})$. $\weightfn_{\bm{X}}$ is omni-compatible, so we can apply $\prodcmp$. This results in a circuit $\circuit_{\semiring_{\othervars}, \weightfn_{\othervars}}(\propertyvars, \othervars)$ that is smooth, decomposable and $\propertyvars$-first. $\weightfn_{\bm{X}}(\bm{X})$ is both deterministic and $\bm{X}$-deterministic as it has no sum nodes, so this output circuit is also deterministic and $\bm{X}$-deterministic by (5.2).
        \item In line \ref{algline:2amc_agg1}, we aggregate $\circuit_{\semiring_{\othervars}, \weightfn_{\othervars}}(\propertyvars, \othervars)$ over $\othervars$. The output circuit $\circuit_{\semiring_{\othervars}, \weightfn_{\othervars}}(\propertyvars)$ is smooth and decomposable. It is also $\bm{X}$-deterministic by (5.1), as $\othervars \cap \firstvars = \emptyset$. 
        
        Since $\circuit_{\semiring_{\othervars}, \weightfn_{\othervars}}(\propertyvars, \othervars)$ satisfied $\bm{X}$-firstness, each product node $\node = \node_1 \times \node_2$ in that circuit had at most one child (say $\node_1)$ with scope overlapping with $\othervars$. Then, in the product in the previous step, $\node_2$ must have been produced through Lines \ref{algline:cmp_disjointstart}-\ref{algline:cmp_disjointend} (otherwise it would contain some variable in $\othervars$); thus it was produced by applying $\supportmapping{\cdot}{\boolsemiring}{\semiring_{\othervars}}$ to some node in $\circuit$. Thus, for any value $\bm{v} \in \assign(\node_2)$,  $\nodefn_{\node_2} \in \{0_{\semiring_{\othervars}}, 1_{\semiring_{\othervars}}\}$. So \prodzeroone{} is satisfied.
        
        \item In line \ref{algline:2amc_map2}, we apply the mapping $\mappingfn_{\semiring_{\othervars} \to \semiring_{\propertyvars}}$ to $\circuit_{\semiring_{\othervars}, \weightfn_{\othervars}}(\propertyvars)$. This circuit is over $\propertyvars$ and is $\propertyvars$-deterministic, i.e. deterministic and satisfies \sumtosum{}. As shown in the previous step, it also satisfies \prodzeroone{}. Thus the mapping algorithm produces the correct result, producing a smooth, decomposable and determinsitic circuit $\circuit_{\semiring_{\propertyvars}, \weightfn_{\othervars}}(\propertyvars)$ as output. 

        \item In line \ref{algline:2amc_prod2}, we take the product of $\circuit_{\semiring_{\propertyvars}, \weightfn_{\othervars}}(\propertyvars)$ with $\weightfn_{\propertyvars}(\propertyvars)$. $\weightfn_{\propertyvars}$ is omni-compatible so we can apply $\prodcmp$, producing a circuit $\circuit_{\semiring_{\propertyvars}, \weightfn_{\othervars}, \weightfn_{\propertyvars}}$ that is smooth and decomposable (and also deterministic). 

        \item Finally, we aggregate $\circuit_{\semiring_{\propertyvars}, \weightfn_{\othervars}, \weightfn_{\propertyvars}}(\propertyvars)$ over $\propertyvars$, producing a scalar. 
    \end{itemize}

\end{proof}

\thmSuccinct*

\begin{proof}

Consider representing the distribution given by a hidden Markov model (HMM) over (hidden) variables $X_{\leq n} = \{X_1, ..., X_n\}$ and (observed) variables $Y_{\leq n} = \{Y_1, ..., Y_n\}$, as depicted in Figure \ref{fig:separation_1_hmm}. 
Figure \ref{fig:separation_1_circuit} shows a structured decomposable circuit that computes the hidden Markov model distribution, where the components $C_{i}(j)$ have scope $\{X_i, Y_i\}$. The corresponding vtree/scope-decomposition (with nodes notated using their scopes) is shown in Figure \ref{fig:separation_1_vtree}. It can easily be checked that the circuit is $X_{\leq n}$-deterministic, and that the circuit size is linear in $n$.

It remains to show that the smallest  $X_{\leq n}$-first and $X_{\leq n}$-deterministic circuit computing the HMM distribution is exponential in size. Explicitly, we will choose a HMM such that the emission distribution is given by $p(Y_i|X_i) = \identity_{Y_i = X_i}$. 
Then we have that $\nodefn_{C'}(x_{\leq n}, Y_{\leq n}) = \nodefn_{C'}(x_{\leq n}) \nodefn_{C'}(Y_{\leq n}|x_{\leq n}) = \nodefn_{C'}(x_{\leq n}) \identity_{Y_{\leq n} = x_{\leq n}}$, for any circuit $C'$ that expresses the distribution of the HMM.

Consider any such circuit $C'$. Then, let $\bm{\node} = \{\node_1, ..., \node_K\}$ be the set of nodes with scope $Y_{\leq n}$ in the circuit. We will need the following lemma:

\begin{lemma}
    For any value $x_{\leq n}$ of $X_{\leq n}$, there exists constants $c_1, .., c_K \in \mathbb{R}^{\geq 0}$ such that:
    \begin{equation}
        \nodefn_{C'}(x_{\leq n}, Y_{\leq n}) \equiv \sum_{k = 1}^{K} c_k \nodefn_{\node_k}(Y_{\leq n})
    \end{equation}
\end{lemma}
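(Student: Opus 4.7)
The plan is to strengthen the statement and prove it by induction on the DAG structure of $C'$, exploiting $X_{\leq n}$-firstness at every product node. Specifically, I will prove the following stronger invariant: for each node $\beta \in C'$ and the fixed assignment $x_{\leq n}$,
\begin{itemize}
\item if $\scope(\beta) \subseteq X_{\leq n}$, then $\nodefn_\beta(x_{\leq n}|_{\scope(\beta)})$ is a nonnegative constant;
\item if $\scope(\beta) \subseteq Y_{\leq n}$, then $\nodefn_\beta$ is itself a $Y$-only function (trivially a conic combination of itself);
\item if $\beta$ is mixed, then $\nodefn_\beta(x_{\leq n}|_{\scope(\beta) \cap X_{\leq n}}, Y_{\leq n}|_{\scope(\beta) \cap Y_{\leq n}})$, viewed as a function of $Y_{\leq n}|_{\scope(\beta) \cap Y_{\leq n}}$, equals a conic combination $\sum_k c_k^\beta \nodefn_{\gamma_k}$ where the $\gamma_k$ are nodes of $C'$ with $\scope(\gamma_k) = \scope(\beta) \cap Y_{\leq n}$ and $c_k^\beta \in \mathbb{R}_{\geq 0}$.
\end{itemize}
Applying this invariant to the root of $C'$ (which has scope $X_{\leq n} \cup Y_{\leq n}$, hence is mixed) immediately yields the lemma, since the $\gamma_k$ appearing in the decomposition have scope exactly $Y_{\leq n}$ and therefore lie in $\bm{\alpha}$.

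The induction base handles input nodes, which have scope of size at most one and are therefore either $X$-only or $Y$-only. For the inductive step at a sum node $\beta = +_i \beta_i$, smoothness gives $\scope(\beta_i) = \scope(\beta)$ for all $i$, so the $\beta_i$ are all of the same type as $\beta$; when $\beta$ is mixed, each $\beta_i$ contributes a conic combination of $Y$-only nodes with scope $\scope(\beta) \cap Y_{\leq n}$, and adding these combinations with nonnegative weights preserves the invariant. For a product node $\beta = \beta_1 \times \beta_2$, I invoke $X_{\leq n}$-firstness: either (i) both children are pure, in which case at a mixed $\beta$ one child is $X$-only and the other is $Y$-only, and the $X$-only factor evaluates to a nonnegative constant $c$ so $\nodefn_\beta = c \cdot \nodefn_{\beta_{Y}}$ where $\beta_{Y}$ is a $Y$-only node with scope $\scope(\beta) \cap Y_{\leq n}$; or (ii) one child is $X$-only and the other is mixed, in which case the $X$-only factor again contributes a nonnegative constant and the inductive hypothesis on the mixed child (whose $Y$-scope coincides with that of $\beta$, by decomposability) supplies the required conic combination.

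The main obstacle is being precise about how the $Y$-scope propagates through product nodes: one must verify that the mixed-child case preserves the equality $\scope(\beta_{\text{mixed}}) \cap Y_{\leq n} = \scope(\beta) \cap Y_{\leq n}$, which follows because the other child is $X$-only and the partition is disjoint by decomposability. Beyond this bookkeeping, the argument is mechanical: the nonnegativity of coefficients is immediate from working in the probability semiring, and the closure of the class ``conic combinations of $Y$-only nodes with a fixed $Y$-scope'' under sums and under multiplication by nonnegative constants is what makes the induction go through. Applying the invariant at the root then gives $\nodefn_{C'}(x_{\leq n}, Y_{\leq n}) = \sum_{k=1}^K c_k \nodefn_{\alpha_k}(Y_{\leq n})$ as desired.
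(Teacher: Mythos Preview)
Your proof is correct and takes essentially the same approach as the paper: a bottom-up induction that uses $X_{\leq n}$-firstness at each product node to peel off a nonnegative scalar from the $X$-only side. The paper restricts its induction to nodes whose scope already contains all of $Y_{\leq n}$, whereas you carry a threefold invariant over every node and track the varying $Y$-scope, but the case analysis at product nodes---and the implicit assumption that input nodes are pure (your ``scope of size at most one'' assumption; the paper likewise never handles mixed input nodes)---is the same.
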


In other words, the output of the circuit is a linear function of the nodes with scope $Y_{\leq n}$. 

\begin{proof}
    We show this proof by bottom-up induction (child before parent), for the set of nodes whose scope \emph{contains} $Y_{\leq n}$:
    \begin{itemize}
        \item \textbf{Input node}: If the scope is $Y_{\leq n}$, then it must be some node $\node_k \in \bm{\node}$; then we take $c_k = 1$ and $c_{k'} = 0$ for all $k' \neq k$.
        \item \textbf{Sum node}: By smoothness, all the children must have the same scope (containing $Y_{\leq n}$). The sum node is then just a linear combination of its children, so the result holds by the inductive hypothesis.
        \item \textbf{Product node} $P$: 
        Let $P_1, P_2$ be the children of $P$. By $X_{\leq n}$-firstness, either both children are pure (have scope entirely contained in $X_{\leq n}$ or $Y_{\leq n}$), or one of them is pure, and the scope of the other one (say $P_1$) contains $Y_{\leq n}$.

        In the first case, if there is exactly one node (say $P_1$), with scope contained in $Y_{\leq n}$, then it must have scope exactly $Y_{\leq n}$. Then we have that:
        \begin{equation*}
            \nodefn_{P}(x_{\leq n}, Y_{\leq n}) = \nodefn_{P_1}(Y_{\leq n}) \nodefn_{P_2}(x_{\leq n} \cap \scope(P_2))
        \end{equation*}
        $\nodefn_{P_2}(x_{\leq n} \cap \scope(P_2))$ here is a constant, so by the inductive hypothesis we are done. If both nodes have scope contained in $Y_{\leq n}$, then $P$ is in $\bm{\node}$, say $P = \node_k$. Then we set $c_k = 1$ and $c_{k'} = 0$ for $k' \neq k$.

        In the second case, we have that:
        \begin{equation*}
            \nodefn_{P}(x_{\leq n}, Y_{\leq n}) = \nodefn_{P_1}(x_{\leq n} \cap \scope(P_1), Y_{\leq n}) \nodefn_{P_2}(x_{\leq n} \cap \scope(P_2))
        \end{equation*}
        Here $\nodefn_{P_2}(x_{\leq n} \cap \scope(P_2))$ is a constant, so by the inductive hypothesis we are done.

        Note that $X_{\leq n}$-firstness was crucial to avoid the case where a product has two mixed nodes (containing variables in $X_{\leq n}$ and $Y_{\leq n}$) as children. 

    \end{itemize}
\end{proof}

For any $k = 1, .., K$, define $v_k \in \mathbb{R}_{\geq 0}^{2^n}$ to be the vector with entries $v_{k, i} = \node_k(i)$ (where we interpret $i$ as a value of $Y_{\leq n}$). Then we have the following Corollary:

\begin{corollary}
    The set of vectors $\{v_1, ..., v_K\}$ forms a spanning set for $\mathbb{R}^{2^n}$.
\end{corollary}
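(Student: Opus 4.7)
The plan is to leverage the Lemma to represent each indicator vector $e_{x_{\leq n}}$ as a non-negative linear combination of $v_1, \ldots, v_K$, then conclude that the $v_k$'s span all of $\mathbb{R}^{2^n}$ because they span the standard basis.

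Concretely, I would first pin down the HMM. Choose transitions and prior so that the marginal $\nodefn_{C'}(x_{\leq n}) > 0$ for every $x_{\leq n} \in \{0,1\}^n$ (for example, uniform prior and uniform transitions satisfy this). Since the emission is $p(Y_i \mid X_i) = \identity_{Y_i = X_i}$, this gives
\begin{equation*}
\nodefn_{C'}(x_{\leq n}, Y_{\leq n}) \;=\; \nodefn_{C'}(x_{\leq n})\,\identity_{Y_{\leq n} = x_{\leq n}}.
\end{equation*}
Viewed as a vector in $\mathbb{R}^{2^n}$ indexed by assignments to $Y_{\leq n}$, the left-hand side is exactly $\nodefn_{C'}(x_{\leq n}) \cdot e_{x_{\leq n}}$, where $e_{x_{\leq n}}$ is the standard basis vector corresponding to the index $x_{\leq n}$.

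Next, I would apply the Lemma to the fixed value $x_{\leq n}$ to obtain coefficients $c_1, \ldots, c_K$ with
\begin{equation*}
\nodefn_{C'}(x_{\leq n}, Y_{\leq n}) \;=\; \sum_{k=1}^{K} c_k\, \nodefn_{\node_k}(Y_{\leq n}),
\end{equation*}
which in vector form reads $\nodefn_{C'}(x_{\leq n})\, e_{x_{\leq n}} = \sum_k c_k v_k$. Dividing by the positive scalar $\nodefn_{C'}(x_{\leq n})$ shows $e_{x_{\leq n}} \in \mathrm{span}\{v_1, \ldots, v_K\}$. Ranging over all $2^n$ assignments $x_{\leq n}$ places every standard basis vector into the span, so $\mathrm{span}\{v_1, \ldots, v_K\} = \mathbb{R}^{2^n}$, as required.

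The only subtlety I anticipate is making sure the HMM is chosen so that $\nodefn_{C'}(x_{\leq n}) > 0$ for all $x_{\leq n}$; without that positivity, the argument would only place scalar multiples (possibly zero) of the basis vectors in the span. Any HMM with strictly positive prior and transition probabilities works, and this choice is consistent with all previous setup in the proof of Theorem \ref{thm:succinct}. Once the span claim is established, it immediately yields $K \geq 2^n$, giving the desired exponential lower bound on $|C'|$.
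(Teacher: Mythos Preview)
Your proof is correct and follows essentially the same approach as the paper: apply the Lemma to each fixed $x_{\leq n}$, rewrite in vector form as $\nodefn_{C'}(x_{\leq n})\,e_{x_{\leq n}} = \sum_k c_k v_k$, divide by $\nodefn_{C'}(x_{\leq n})$, and conclude that every standard basis vector lies in the span. You are in fact slightly more careful than the paper, which implicitly divides by $\nodefn_{C'}(x_{\leq n})$ without explicitly verifying it is nonzero; your remark about choosing an HMM with strictly positive prior and transitions closes that small gap.
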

\begin{proof}
    By the Lemma and the fact that $C'$ expresses the HMM distribution, we have that for any $x_{\leq n} \in \{0, 1\}^{n}$, there exists $c_1, .., c_k \in \mathbb{R}^{\geq 0}$ such that:
    \begin{equation*}
        \nodefn_{C'}(x_{\leq n}) \identity_{Y_{\leq n} = x_{\leq n}}  \equiv \sum_{k = 1}^{K} c_k \nodefn_{\node_k}(Y_{\leq n})
    \end{equation*}
    Rearranging, and writing in vector form, we have:
    \begin{equation*}
        e_{x_{\leq n}} = \sum_{k = 1}^{K} \frac{c_k}{\nodefn_{C'}(x_{\leq n})} v_k
    \end{equation*}
    where $e_{x_{\leq n}} \in \mathbb{R}^{2^n}_{\geq 0}$ is the standard basis vector corresponding to the value $x_{\leq n}$. Thus $\{v_1, ..., v_K\}$ is a spanning set.
\end{proof}

Any spanning set for $\mathbb{R}^{2^n}$ must contain at least $2^n$ elements. Thus, $K \geq 2^n$, and the circuit $C'$ must be exponentially sized.
\end{proof}

One might attempt to remedy the situation by replacing $\bm{X}$-firstness with $\bm{X}$-determinism.
For the general case, that however is insufficient:

\begin{theorem}[Hardness of 2AMC with $\bm{X}$-determinism]
    2AMC is \#P-hard even for decomposable, smooth, deterministic and $\bm{X}$-deterministic circuits, and a constant-time elementwise transformation function.
\end{theorem}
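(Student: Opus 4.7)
The plan is to mimic the proof of Theorem~\ref{thm:2amc-hard-xfirst}, reducing from \#DNF (counting satisfying assignments of a DNF, which is \#P-hard even for monotone 2-DNF), but with a different circuit construction that guarantees $\bm{X}$-determinism in place of $\bm{X}$-firstness. Given an input DNF $\psi(\bm{X}) = \bigvee_{i=1}^{m} \psi_i(\bm{X})$, I would construct an auxiliary formula $\phi'(\bm{X},\bm{Y})$ along with labels and a mapping function $\tau$ so that 2AMC evaluates to $|\{\bm{x} : \psi(\bm{x})\}|$.

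The construction uses $\bm{Y}$ to encode a \emph{canonical witness} for membership in $\psi$: to each $\bm{x}$ satisfying $\psi$ we associate a unique $\bm{y}$ (for instance, encoding in binary the lexicographically smallest index $i$ such that $\psi_i(\bm{x})$ holds, together with a canonical filling of the $\bm{X}$-variables left free by $\psi_i$). Concretely, each branch of the disjunction is rewritten as $\psi^{\ast}_i(\bm{X}) \wedge \llbracket \bm{Y} = \mathrm{enc}(i) \rrbracket$, where $\psi^{\ast}_i = \psi_i \wedge \bigwedge_{j<i} \neg \psi_j$ isolates the $\bm{x}$-assignments for which $i$ is the canonical witness. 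This simultaneously provides (i) determinism, since distinct $\bm{y}$-encodings separate the branches of the top-level $\vee$, and (ii) $\bm{X}$-determinism, since the $\psi^{\ast}_i$'s have pairwise disjoint $\bm{X}$-supports by construction. After smoothing~\citep{darwiche2001smoothing}, the resulting circuit is smooth, decomposable, deterministic and $\bm{X}$-deterministic, and the same semiring setup as in Theorem~\ref{thm:2amc-hard-xfirst}---inner semiring $(\mathbb{N}^2, +_2, \times_2, (0,0), (1,1))$ with constant-time $\tau(n_1,n_2) = n_1/n_2$, outer semiring $\probsemiring$, and the same labeling $\omega, \omega'$---yields $2\mathrm{AMC} = \sum_{\bm{x}} \psi(\bm{x})$.

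\textbf{Main obstacle.} The crux is making $\phi'$ \emph{polynomial-size} while keeping all four properties. The naive way to encode the canonicalization constraint $\bigwedge_{j<i} \neg \psi_j$ is a conjunction of clauses that share $\bm{X}$-variables across terms, and compiling this into a decomposable form can blow up exponentially. Overcoming this is where dropping $\bm{X}$-firstness is essential: product nodes may now have children whose scopes mix $\bm{X}$ and $\bm{Y}$ arbitrarily, which lets us share the ``no earlier term has fired'' constraints across the $m$ branches---for example, by compiling the prefix-tree of $\neg \psi_1, \neg \psi_1 \wedge \neg \psi_2, \dotsc$ once, and gating it with $\bm{Y}$-indexed literals at each branch. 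The exponential separation in Theorem~\ref{thm:succinct} already certifies that this sort of construction can be polynomial in the $\bm{X}$-deterministic class while any $\bm{X}$-first equivalent is exponential, which is the structural reason the hardness reduction works here but not in a class where $\bm{X}$-firstness is additionally required.

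Finally, correctness is the easy part: by the same argument as in the proof of Theorem~\ref{thm:2amc-hard-xfirst}, determinism of $\phi'$ makes the inner sum $\sum_{\bm{y}} \phi'(\bm{x},\bm{y})$ upper-bounded by $1$, and the division by $\tau$ precisely recovers the $\{0,1\}$-valued indicator of $\bm{x} \in \psi$, so summing over $\bm{x}$ returns the model count. Since this quantity is \#P-hard to compute, so is 2AMC under the stated structural restrictions.
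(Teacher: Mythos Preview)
Your proposal has a genuine gap: the ``main obstacle'' you identify is in fact fatal, and your proposed workaround does not address it. The difficulty is that the canonicalization constraints $\psi_i^\ast = \psi_i \wedge \bigwedge_{j<i} \neg\psi_j$ live entirely over $\bm{X}$; the $\bm{Y}$-variables you introduce sit in a disjoint part of the product and cannot interact with the $\bm{X}$-only conjuncts without violating decomposability. Compiling the family $\{\psi_i^\ast\}$ into decomposable form is thus just as hard as compiling an arbitrary DNF into d-DNNF, which is not polynomial in general (indeed, if it were, \#DNF would be easy). ``Dropping $\bm{X}$-firstness'' only lets you interleave $\bm{X}$- and $\bm{Y}$-variables at product nodes; it does nothing for a conjunction of terms that all share the same $\bm{X}$-variables. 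Your appeal to Theorem~\ref{thm:succinct} is also a non-sequitur: that theorem exhibits one specific function with a small $\bm{X}$-deterministic circuit and no small $\bm{X}$-first one; it does not say that arbitrary Boolean canonicalizations admit small $\bm{X}$-deterministic circuits.

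The paper's proof takes a completely different route that sidesteps compilation altogether. It reduces from the counting version of number partitioning and uses the \emph{trivial} circuit $\phi = \bigwedge_{i=1}^{n} (X_i \Leftrightarrow Y_i)$, which is immediately smooth, decomposable, deterministic, and $\bm{X}$-deterministic. All of the hardness is pushed into the \emph{semiring arithmetic} and the mapping $\tau$: the inner semiring has $\otimes = +$, inner labels encode the integers $k_i$, so that for each fixed $\bm{x}$ the unique model of $\phi$ evaluates to the subset sum $\sum_{i:x_i=1} k_i/c$; the constant-time mapping $\tau(s) = \mathds{1}_{s=1}$ then indicates whether $\bm{x}$ encodes a valid partition, and the outer sum counts solutions. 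The moral is that you should not try to encode a hard counting instance in the \emph{structure} of the circuit (which forces you to fight decomposability); instead, keep the circuit structurally trivial and encode the instance in the labels and semiring.
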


\begin{proof}
    By reduction from the counting version of number partitioning: Given positive integers $k_1,\dotsc,k_n$, count the number of index sets $S \subseteq \{1,\dotsc,n\}$ such that $\sum_{i \in S} k_i = \sum_{i \not\in S} k_i = c$. That problem is known to be \#P-hard \citep{shap}.
    Define $\phi = \bigwedge_{i=1}^n (X_i \Leftrightarrow Y_i)$.
    Then $\phi$ is a deterministic, $\bm{X}$-deterministic, decomposable and smooth circuit.\footnote{While this circuit is not $\bm{X}$-first, it does satisfy a property known as $\bm{X}$-firstness modulo definability \citep{asp2mc}; thus that property is insufficient for 2AMC even together with $\bm{X}$-determinism.} 
    Let the inner labeling function be $\omega'(y_i)=k_i/c$ and $\omega'(\neg y_i)=1$.
    Then for a fixed configuration $x$ of the variables $X=\{X_1,\dotsc,X_n\}$, we have exactly one model for $\phi$, whose value is $\otimes_{i: x_i=1} k_i/c$.
    If we select the inner semiring so that $\otimes$ is addition (e.g., the max tropical semiring or log semiring), then the inner AMC problem returns $\sum_{i: x_i=1} k_i/c$, which equals 1 iff $S=\{ i : x_i = 1\}$ is a solution to the number partitioning instance. 
    Now, define the outer labeling function to be $\omega=1$, and let the transformation function be $\mappingfn(s)=1$ if $s=1$ and $\mappingfn(s)=0$ otherwise.
    Then the 2AMC problem with the probability semiring as outer semiring counts the number of solutions of the number partitioning instance.
\end{proof}

\newpage
\section{Case Studies} \label{sec:case_studies_apx}

\begin{table}[t]
\caption{Tractability Conditions and Complexity for Compositional Inference Problems. We denote new results with an asterisk.}
\resizebox{\linewidth}{!}{
\centering
\begin{tabular}{llll}
\toprule
                                           & \textbf{Problem}              & \textbf{Tractability Conditions}                      & \textbf{Complexity} \\ \hline
\multirow{3}{*}{\textbf{2AMC}}             & PASP (Max-Credal)$^*$             & Sm, Dec, $\detvars$-Det                               & $O(|C|)$            \\
                                           & PASP (MaxEnt)$^*$, MMAP       & Sm, Dec, Det, $\detvars$-Det                          & $O(|C|)$            \\
                                           & SDP$^*$              & Sm, Dec, Det, $\detvars$-Det, $\firstvars$-First      & $O(|C|)$            \\ \hline %
\multirow{3}{*}{\textbf{Causal Inference}} & \multirow{2}{*}{Backdoor$^*$} & Sm, Dec, SD, $(\bm{X} \cup \bm{Z})$-Det               & $O(|C|^2)$          \\
                                           &                               & Sm, Dec, $\bm{Z}$-Det, $(\bm{X} \cup \bm{Z})$-Det     & $O(|C|)$            \\ \cline{2-4} 
                                           & Frontdoor$^*$                 & Sm, Dec, SD, $\bm{X}$-Det, $(\bm{X} \cup \bm{Z})$-Det & $O(|C|^2)$          \\ \hline
\multirow{2}{*}{\textbf{Other}}            & MFE$^*$                       & Sm, Dec, $\bm{H}$-Det, $\bm{I}^-$-Det, $(\bm{H} \cup \bm{I}^-)$-Det   & $O(|C|)$            \\
                                           & Reverse-MAP                   & Sm, Dec, $\bm{X}$-Det                            & $O(|C|)$            \\ \bottomrule
\end{tabular}
}

\label{tbl:case_studies_apx}
\end{table}

In this section, we provide more details about the compositional inference problems in Table \ref{tbl:case_studies} (reproduced in Table \ref{tbl:case_studies_apx}) for convenience, and prove the tractability conditions for each (Theorem \ref{thm:case_studies}). For all of them, we assume that we are given a Boolean formula represented as a circuit.
That would usually come from knowledge compilation from some source language such as Bayesian Networks \citep{Chavira08} or probabilistic logic programs \citep{problog-inference}; our results thus show what properties the compiled circuit must have in order a query of interest to be tractable. Note that the problems are generally computationally hard \citep{decampos2011new,choi2012sdp} on the source language, which means there do not exist compact circuits satsifying the properties in the worst-case. 

\thmCase*

\subsection{2AMC Queries}

Firstly, we consider instances of 2AMC queries. Recall the general form of a 2AMC query. Given a partition of the variables $\vars = (\propertyvars, \othervars)$, a Boolean function $\logicfn(\propertyvars, \othervars)$, \emph{outer} and \emph{inner} semirings $\semiring_{\propertyvars}, \semiring_{\othervars}$, labeling functions $\weightfn_{\othervars}(\othervars) = \bigotimes_{\othervar_i \in \othervars} \weightfn_{\othervars, i}(\othervar_i)$ over $\semiring$ and $\weightfn_{\propertyvars}(\propertyvars) = \bigotimes_{\propertyvar_i \in \propertyvars} \weightfn_{\propertyvars, i}(\propertyvar_i)$ over $\semiring'$, and an elementwise mapping $\mappingfn_{\semiring_{\othervars} \to \semiring_{\propertyvars}}: \semiring_{\othervars} \to \semiring_{\propertyvars}$, the 2AMC problem is given by:
\begin{equation} \tag{\ref{2amc-compositional}, revisited}
    \bigoplus_{\firstval} \biggl( \mappingfn_{\semiring_{\othervars} \to \semiring_{\propertyvars}} \biggl( \bigoplus_{\lastval} \supportmapping{\logicfn(\firstval,\lastval)}{\boolsemiring}{\semiring_{\othervars}} \otimes \weightfn(\lastval) \biggr) \otimes \weightfn'(\firstval) \biggr) 
\end{equation}

By Theorem \ref{thm:2amc-tractability-conditions}, any 2AMC problem is tractable if $\logicfn$ is given as a smooth, decomposable, deterministic, $\bm{X}$-deterministic, and $\bm{X}$-first circuit $\circuit$. However, in some instances, we can relax these conditions, as we show shortly.

\subsubsection{Marginal MAP}\label{appx:mmap} 
In the \emph{Marginal Maximum A Posteriori inference} (MMAP), we are given a Boolean function $\logicfn(\vars)$, a (unnormalized) fully factorized distribution  $p(\vars) =  \prod_{i} p_i(V_i)$, a partition $\firstvars \cup \lastvars = \vars$ and some evidence $\mathbf{e}$ on $\bm{E} \subset \vars$.
 The goal is to compute the probability of the maximum probability assignment of $\firstvars$ consistent with $\mathbf{e}$:
 \[
 \max_{\firstval} p(\firstvars = \firstval, \bm{E} = \mathbf{e}) = %
 \max_{\firstval} \sum_{\lastval \models \phi(\firstval,\lastvars) \wedge \mathbf{e}} \prod_i p_i(v_i) .
 \]
 To cast it as a 2AMC problem, take the inner semiring $\semiring_{\othervars}$ to be the probability semiring and define the inner labelling function to assign $\omega_{\othervars}(Y_i)=0$ if $Y_i \in \bm{E}$ and $Y_i$ is \emph{in}consistent with $\mathbf{e}$ and $\omega_{\othervars}(Y_i)=p_i(Y_i)$ otherwise.
 The outer semiring is the $(\max, \cdot)$ semiring with labeling function $\omega_{\propertyvars}(X_i) = 1$. The elementwise mapping function $\mappingfn_{\semiring_{\othervars} \to \semiring_{\propertyvars}}(a) = a$ is the identity function.

 The proof of the tractability conditions follows Theorem \ref{thm:2amc-tractability-conditions}, except that we note that the mapping function $\mappingfn_{\semiring_{\othervars} \to \semiring_{\propertyvars}}$ from the outer to inner semiring satisifies \prodtoprod{}. As such, we do not need the \prodzeroone{} circuit property, which was the reason we needed the $\bm{X}$-firstness condition.

\subsubsection{Probabilistic Answer Set Programming (PASP)}\label{appx:pasp}
The \emph{Probabilistic Answer Set Programming Inference} (PASP) query takes a Boolean formula  $\phi(\vars)$, a partition $\firstvars \cup \lastvars = \vars$, a (unnormalized) fully factorized distribution $p(\firstvars) = \prod_{i}p(X_i)$, and query variable and value $\{Q=q\}$, for some $Q \in \vars$. 
  The goal is to compute:
 \[
  p(Q=q) = \sum_{\firstval} \left(\prod_i p(X_i)\right) \sum_{\lastval \models \phi(\firstval,\lastvars) \wedge q} p^*(\lastval | \firstval)  . 
 \]
 The function $p^*(\lastvars | \firstvars)$ depends on the semantics adopted. Let $\text{mod}(\bm{Y}|\bm{X}) := \{ \bm{y}:  \phi(\firstvars,\bm{y}) \}$ be the set of assignments of $\bm{Y}$ such that $\phi(\firstvars, \cdot)$ is true. %
 In the \emph{Maximum Entropy Semantics} (MaxEnt) \citep{plog,neurasp,smproblog}, one distributes the probability mass $p(\bm{X})$ uniformly over the models of $\phi$ consistent with $\firstvars$, i.e. $p^*(\bm{y} | \firstvars) = \frac{1}{|\text{mod}(\bm{Y}|\bm{X})|}$.
 On the other hand, in the \emph{Credal Semantics} \cite{credal-semantics,cozman2017jair} (Max-Credal), one places all probability mass $p(\firstvars)$ on some assignment $\bm{y}$ of $\bm{Y}$ consistent with $\firstvars$ and $q$. To obtain an upper bound on the query probability regardless of which $\bm{y}$ is chosen, one sets $p^*(\bm{y} | \firstvars) := 1$ for all $\bm{y}$ if there exists an assignment $\lastvars \models \phi(\bm{X}, \bm{Y}) \wedge q$, and $p^*(\lastvars | \firstvars) = 0$ otherwise.
 
 The 2AMC formulation of the problem uses the probability semiring as outer semiring $\semiring_{\propertyvars}$, with labeling function $\omega_{\propertyvars}(X_i)=p(X_i)$ for $X_i \in \firstvars$. %
 \begin{itemize}
     \item In the (MaxEnt) semantics, for the inner semiring, we take as the semiring of pairs of naturals $\semiring_{\othervars} = (\mathbb{N}^2,+, \cdot,(0,0), (1,1))$, with coordinatewise addition and multiplication. %
     The inner labeling function sets $\omega_{\othervars}(Q) =(\mathds{1}_{Q=q},1)$, and sets $\omega_{\othervars}(Y_i)=(1, 1)$ for all other variables $Y_i \in \othervars$. The mapping function is defined by $\tau_{\semiring_{\othervars} \to \semiring_{\propertyvars}}((a,b))=a/b$ (with $0/0=0$).
     \item In the (Max-Credal) semantics, we simply set the inner semiring to be the Boolean semiring $\semiring_{\othervars} = \boolsemiring$. The inner labeling function sets $\omega_{\othervars}(Q) = \begin{cases}
         \top & \text{if } Q = q \\
         \bot & \text{otherwise}
     \end{cases}$, and sets $\omega_{\othervars}(Y_i)= \top$ for all other variables $Y_i \in \othervars$. The mapping function is defined by $\tau_{\semiring_{\othervars} \to \semiring_{\propertyvars}}(a) = \supportmapping{a}{\semiring_{\othervars}}{\semiring_{\propertyvars}}$.
 \end{itemize}

 As with marginal MAP, we can see that in both cases, the mapping function $\tau_{\semiring_{\othervars} \to \semiring_{\propertyvars}}$ satisfies \prodtoprod{}, so $\bm{X}$-firstness of the circuit is not required. In particular, for (MaxEnt) we have $\tau_{\semiring_{\othervars} \to \semiring_{\propertyvars} }((a, b) \otimes (c, d)) = \tau_{\semiring_{\othervars} \to \semiring_{\propertyvars} }((a \cdot c, b \cdot d) ) = \frac{a \cdot c}{b \cdot d} = \frac{a}{b} \cdot \frac{c}{d} = \tau_{\semiring_{\othervars} \to \semiring_{\propertyvars} }(a, b) \cdot \tau_{\semiring_{\othervars} \to \semiring_{\propertyvars} }(c, d) = \tau_{\semiring_{\othervars} \to \semiring_{\propertyvars} }(a, b) \otimes \tau_{\semiring_{\othervars} \to \semiring_{\propertyvars} }(c, d)$ (this holds also if $(a, b) = (0, 0)$ and/or $(c, d) = (0, 0)$). Meanwhile, for (Max-Credal) we have $\tau_{\semiring_{\othervars} \to \semiring_{\propertyvars} }(a \otimes b) = \tau_{\semiring_{\othervars} \to \semiring_{\propertyvars} }(a \wedge b) = \supportmapping{a \wedge b}{\semiring_{\othervars}}{\semiring_{\propertyvars}} = \supportmapping{a}{\semiring_{\othervars}}{\semiring_{\propertyvars}} \cdot \supportmapping{b}{\semiring_{\othervars}}{\semiring_{\propertyvars}} = \tau_{\semiring_{\othervars} \to \semiring_{\propertyvars} }(a) \cdot \tau_{\semiring_{\othervars} \to \semiring_{\propertyvars} }(b) = \tau_{\semiring_{\othervars} \to \semiring_{\propertyvars} }(a) \otimes \tau_{\semiring_{\othervars} \to \semiring_{\propertyvars} }(b)$.

 For the (Max-Credal) semantics, we note additionally since $\semiring_{\othervars}$ is just the Boolean semiring, we do \emph{not} need determinism in Line \ref{algline:2amc_map1} of Algorithm \ref{alg:2amc}. So the only conditions required are smoothness, decomposability, and $\bm{X}$-determinism.

\subsubsection{Same-Decision Probability}\label{appx:sdp} 
In the \emph{Same Decision Probability} (SDP) query \cite{oztok2016solving}, we are given a Boolean formula $\phi(\vars)$, a fully factorized distribution  $p(\vars) = \prod_i p(V_i)$, a partition $\firstvars, \{Y\}$ of $\vars$, a query $\{Y=y\}$, some evidence $\mathbf{e}$ on a subset $\bm{E} \subseteq \bm{X}$ of variables and a threshold value $T \in (0,1]$.
 The goal is to compute a confidence measure on some threshold-based classification made with the underlying probabilistic model:
 \[
 \sum_{\firstval} p(\firstval |  \mathbf{e})  \mathds{1}_{p( Y=y | \firstval, \mathbf{e}) \geq T} ,
 \]

 To cast this as a 2AMC instance, we use the inner semiring $\semiring'=(\mathbb{R}_{\geq 0}^2,+,\cdot,(0,0),(1,1))$, with coordinate-wise addition and multiplication.
 The inner labeling function assigns $\omega_{\othervars}(Y) = (p(Y)\mathds{1}_{Y = y},p(Y))$. 
 The outer semiring is the probability semiring and the mapping $\mappingfn_{\semiring_{\othervars} \to \semiring_{\propertyvars}}$ from inner to outer semirings is $\tau_{\semiring_{\othervars} \to \semiring_{\propertyvars}}((a,b))=[\![ a\geq  bT ]\!]$.
 Last, the outer labeling function assigns 
 $\omega_{\propertyvars}(X_i)= \mathds{1}_{X_i \models \bm{e}}$ if $X_i \in \bm{E}$, and 
 $\omega_{\bm{X}}(X_i)=p(X_i)$ otherwise.

 Unlike marginal MAP and PASP inference, there is no special structure in SDP that allows us to relax the general tractability conditions for 2AMC. However, it is still a 2AMC instance, and we have the tractability conditions from Theorem \ref{thm:2amc-tractability-conditions}. In particular this justifies the use of $\bm{X}$-constrained sentential decision diagrams for this problem.

\subsection{Causal Inference}

In Section \ref{sec:causal}, we discussed computing causal interventional distributions. In particular, in the backdoor and frontdoor cases, we had the following formulae:
\eqBackdoor*
\eqFrontdoor*

\subsubsection{Backdoor query}\label{appx:backdoor}
The backdoor query can be written as a compositional query as follows:
\begin{equation}
    \texttt{BACKDOOR}(p;\bm{x}, \bm{y}) := \bigoplus_{\bm{z}} \Bigl( \Bigl(\bigoplus_{\bm{x}, \bm{y}} p(\bm{v}) \Bigr) \otimes p(\bm{v}) \otimes \mappingfn_{\inv} \Bigl(\bigoplus_{\bm{y}} p(\bm{v}) \Bigr) \Bigr) .
\end{equation}
where $\bm{V} = (\bm{X}, \bm{Y}, \bm{Z})$, and $\tau_{-1}(a) = \begin{cases}
    a^{-1} & \text{if } a \neq 0 \\
    0 & \text{if } a = 0
\end{cases}$. Note that $\mappingfn_{-1}$ satisfies \prodtoprod{}, and so for this mapping to be tractable we just need the circuit it is applied to to be deterministic.

Assume that $p(\vars)$ is given as a smooth, structured decomposable, and $(\bm{X} \cup \bm{Z})$-deterministic circuit (over the probabilistic semiring). We now show that this query is tractable, by showing that each \op{} in the composition is tractable. For readability, we label each circuit constructed with the function that it represents (\fbox{boxed}). 
\begin{itemize}
    \item \fbox{$p(\bm{X}, \bm{Z})$}  $\circuit_1(\bm{X}, \bm{Z}) := \marg(\circuit, \bm{Y})$ is tractable by smoothness and decomposability. By (5.1) in Table \ref{tbl:tractability_apx}, since $\bm{Y} \cap (\bm{X} \cup \bm{Z}) = \emptyset$, $\circuit_1$ is $(\bm{X} \cup \bm{Z})$-deterministic (i.e. deterministic). 
    \item \fbox{$\frac{1}{p(\bm{X}, \bm{Z})}$} $\circuit_2(\bm{X}, \bm{Z}) := \mappingalg(\circuit_1, \mappingfn_{-1})$ is tractable since $\circuit_1$ is deterministic. 
    \item \fbox{$p(\bm{Y}|\bm{X}, \bm{Z})$} $\circuit_3(\bm{X}, \bm{Y}, \bm{Z}) := \prodscmp(\circuit(\bm{X}, \bm{Y}, \bm{Z}), \circuit_2(\bm{X}, \bm{Z}))$.  $\circuit$ is $(\bm{X} \cup \bm{Z})$-support-compatible with itself as it is $(\bm{X} \cup \bm{Z})$-deterministic $\implies$ $\circuit$ is also $(\bm{X} \cup \bm{Z})$-support-compatible with $\circuit_1$ by (5.9) $\implies$ $\circuit$ is also $(\bm{X} \cup \bm{Z})$-support-compatible with $\circuit_2$ by (5.11). As $\circuit$ and $\circuit_2$ share variables $(\bm{X} \cup \bm{Z})$, this means they are support-compatible. Thus this product is tractable in linear time.
    \item \fbox{$p(\bm{Z})$} $\circuit_4(\bm{Z}):= \marg(\circuit, \bm{X} \cup \bm{Y})$ is tractable by smoothness and decomposability.
    \item \fbox{$p(\bm{Z}) p(\bm{Y}|\bm{X}, \bm{Z})$} $\circuit_5(\bm{X}, \bm{Y}, \bm{Z}):= \prodcmp(\circuit_4, \circuit_3)$. $\circuit$ is $\bm{V}$-compatible with itself (structured decomposable) $\implies$ $\circuit$ is $\bm{Z}$-compatible with itself by Proposition \ref{prop:xcompat_properties} $\implies$ $\circuit$ is also $\bm{Z}$-compatible with $\circuit_4$ by (5.5) $\implies$ $\circuit_4$ is $\bm{Z}$-compatible with $\circuit_1$ by (5.5) $\implies$ $\circuit_4$ is $\bm{Z}$-compatible with $\circuit_2$ by (5.8) $\implies$ $\circuit_4$ is $\bm{Z}$-compatible with $\circuit_3$ by (5.6). Since $\circuit_4$ and $\circuit_3$ share variables $\bm{Z}$, this means they are compatible and so this product is tractable in quadratic time.
    \item \fbox{$\sum_{\bm{z}}p(\bm{z}) p(\bm{Y}|\bm{X}, \bm{z})$} $\circuit_6(\bm{X}, \bm{Y}) = \marg(\circuit_5, \bm{Z})$ is tractable by smoothness and decomposability.
\end{itemize}

Thus, we have recovered the tractability conditions derived by \cite{wang2023compositional}, with the same complexity of $O(|\circuit|^2)$ (induced by the compatible product to construct $\circuit_5$). However, we also have an alternative tractability condition. Suppose that $\circuit$ were additionally $\bm{Z}$-deterministic, but not necessarily structured decomposable. Then we could replace the derivation of $C_5$ above with the following:
\begin{itemize}
    \item \fbox{$p(\bm{Z}) p(\bm{Y}|\bm{X}, \bm{Z})$} $\circuit_5(\bm{X}, \bm{Y}, \bm{Z}):= \prodscmp(\circuit_4, \circuit_3)$. $\circuit$ is $\bm{Z}$-support-compatible with itself as it is $\bm{Z}$-deterministic $\implies$ $\circuit$ is also $\bm{Z}$-support-compatible with $\circuit_4$ by (5.9) $\implies$ $\circuit_4$ is $\bm{Z}$-support-compatible with $\circuit_1$ by (5.9) $\implies$ $\circuit_4$ is $\bm{Z}$-compatible with $\circuit_2$ by (5.11) $\implies$ $\circuit_4$ is $\bm{Z}$-compatible with $\circuit_3$ by (5.10). Since $\circuit_4$ and $\circuit_3$ share variables $\bm{Z}$, this means they are compatible and so this product is tractable in linear time.
\end{itemize}
In this case, the overall complexity is also reduced to $O(|\circuit|)$.

\subsubsection{Frontdoor query}\label{appx:frontdoor}
Now, consider the frontdoor case. In this case, we have the following compositional query:
\begin{equation}
    \texttt{FRONTDOOR}(p; \bm{x}, \bm{y}, \bm{z}) = \bigoplus_{\bm{z}} \Bigl(\Bigl(\bigoplus_{\bm{y}} p(\bm{v}) \Bigr)\otimes \mappingfn_{-1}\Bigl(\bigoplus_{\bm{y}, \bm{z}} p(\bm{v}) \Bigr)\otimes  \texttt{BACKDOOR}(p; \bm{z}, \bm{y}) \Bigr)
\end{equation}

Assume that $p(\vars)$ is given as a smooth, structured decomposable, $\bm{X}$-deterministic, and $(\bm{X} \cup \bm{Z})$-deterministic circuit (over the probabilistic semiring). We continue the analysis from the backdoor case:

\begin{itemize}
    \item \fbox{$p(\bm{X})$} $\circuit_7(\bm{X}) := \marg(\circuit, \bm{Y} \cup \bm{Z})$ is tractable by smoothness and decomposability. By (5.1) in Table \ref{tbl:tractability_apx}, since $(\bm{Y} \cup \bm{Z}) \cap \bm{X} = \emptyset$, $\circuit_7$ is $\bm{X}$-deterministic (i.e. deterministic).
    \item \fbox{$\frac{1}{p(\bm{X})}$} $\circuit_8(\bm{X}) := \mappingalg(\circuit_7, \mappingfn_{-1})$ is tractable since $\circuit_7$ is deterministic.
    \item \fbox{$p(\bm{Z}|\bm{X})$} $\circuit_9(\bm{X}, \bm{Z}) := \prodscmp(\circuit_8, \circuit_1)$. $\circuit$ is $\bm{X}$-support-compatible with itself as it is $\bm{X}$-deterministic $\implies$ $\circuit$ is $\bm{X}$-support-compatible with $\circuit_1$ by (5.9) $\implies$ $\circuit_1$ is $\bm{X}$-support-compatible with $\circuit_7$ by (5.9) $\implies$ $\circuit_1$ is $\bm{X}$-support-compatible with $\circuit_8$ by (5.11). Thus this product is tractable in linear time.
    \item \fbox{$ \sum_{\bm{x}} p(\bm{x}) p(\bm{Y}|\bm{x},\bm{Z})$} $\circuit_{10}(\bm{Y}, \bm{Z})$. This is just like $C_6$, but with variables $\bm{X}$ and $\bm{Z}$ swapped. Thus it is tractable for a smooth, $\bm{X}$-deterministic and $(\bm{X} \cup \bm{Z})$-deterministic circuit in linear time.
    \item \fbox{$p(\bm{Z}|\bm{X}) \sum_{\bm{x'}} p(\bm{x'}) p(\bm{Y}|\bm{x'},\bm{Z})$} $\circuit_{11}(\bm{X}, \bm{Y}, \bm{Z}) := \prodcmp(\circuit_{9}, \circuit_{10})$. We can chain applications of (5.5), (5.7) and (5.8) in a similar way to the other steps to show that $\circuit_9, \circuit_{10}$ are $\bm{Z}$-compatible (i.e. compatible), so this product is tractable in quadratic time.
    \item \fbox{$\sum_{\bm{z}} p(\bm{z}|\bm{X}) \sum_{\bm{x'}} p(\bm{x'}) p(\bm{Y}|\bm{x'},\bm{z})$} $\circuit_{12}(\bm{X}, \bm{Y}) := \marg(\circuit_{11}; \bm{Z})$. This is tractable by smoothness and decomposability.
\end{itemize}

Thus, this algorithm has complexity $O(|\circuit|^2)$, as opposed to the  $O(|\circuit|^3)$ complexity algorithm in \citep{wang2023compositional}. The key difference is that we exploit support compatibility for a linear time product when constructing $\circuit_{10}$.

\subsection{Other Problems} \label{sec:other_problems}

\subsubsection{Most Frugal Explanation}\label{appx:mfe} 
In \cite{kwisthout2015frugal}, the most frugal explanation (MFE) query was introduced. Given a partition of variables $\vars$ into $(\bm{H}, \bm{I}^{+}, \bm{I}^{-}, \bm{E})$, some evidence $\bm{e} \in \assign(\bm{E})$, and a probability distribution $p(\bm{V})$, the MFE query asks for the following:
\begin{equation}
    \max_{\bm{h}} \sum_{\bm{i^{-}}} \mathds{1} [\bm{h} \in \arg \max_{\bm{h}'} p(\bm{h}', \bm{i^{-}}, \bm{e}) ]
\end{equation}
In words, we want the explanation (assignment to $\bm{H}$) that is the most probable for the most number of assignments to $\bm{I}^{-}$, when $\bm{I}^{+}$ is marginalized out. We can rewrite as follows:
\begin{equation}
    \max_{\bm{h}} \sum_{\bm{i^{-}}} \mathds{1}\left[\frac{p(\bm{h}, \bm{i^{-}}, \bm{e})}{\max_{\bm{h}'} p(\bm{h}', \bm{i^{-}}, \bm{e})} = 1\right]
\end{equation}
This can be written as a compositional query as follows. 
\begin{equation}
    \bigoplus_{\bm{h}} \mappingfn_{\semiring''' \to \semiring'} \bigoplus_{\bm{i}^-} \mappingfn_{\semiring'' \to \semiring'''} \left( \mappingfn_{-1} \left(\mappingfn_{\semiring' \to \semiring''} \left(\bigoplus_{\bm{h}'} \mappingfn_{\semiring \to \semiring'} (p(\bm{h}', \bm{i}^{-}, \bm{e}))\right)\right) \otimes p(\bm{h}, \bm{i}^{-}, \bm{e}) \right)
\end{equation}
where $\semiring$ is the probability semiring, $\semiring'$ is the $(\max, \cdot)$-semiring, $\semiring''$ is $([0, 1], +, \cdot, 0, 1)$ (i.e. the probability semiring with domain $[0, 1]$), and $\semiring'''$ is the counting semiring $(\mathbb{N}, +, \cdot, 0, 1)$, and the mapping functions are defined as follows:
\begin{itemize}
    \item $\mappingfn_{\semiring \to \semiring'}(a) = a$
    \item $\mappingfn_{\semiring' \to \semiring''}(a) = a$
    \item $\tau_{-1}(a) = \begin{cases}
    a^{-1} & \text{if } a \neq 0 \\
    0 & \text{if } a = 0
\end{cases}$
    \item $\mappingfn_{\semiring'' \to \semiring'''}(a) = \mathds{1}_{a = 1}$
    \item $\mappingfn_{\semiring''' \to \semiring'}(a) = a$
\end{itemize}

Suppose we are given a probabilistic circuit representing $p(\bm{H}, \bm{I}^-, \bm{e})$. While this query appears extremely intimidating at first glance, we note that the only \ops{} we need to consider are the mappings and single product. Note that all of these mappings satisfy \prodtoprod{} ($\mappingfn_{\semiring'' \to \semiring'''}$ because the domain of $\semiring''$ is $[0, 1]$ so $\mappingfn_{\semiring'' \to \semiring'''}(a \cdot b) = 1$ iff $a = b = 1$); thus the mappings are tractable if the input circuits are deterministic. By checking the scopes of the inputs to each mapping, we can see that $(\bm{H} \cup \bm{I}^-)$-determinism, $\bm{I}^-$-determinism, and $\bm{H}$-determinism suffices. This also enables tractability of the product in linear time by support compatibility.

No tractability conditions for exact inference for this query were previously known. While the motivation behind the MFE query is as a means of approximating marginal MAP, and so this exact algorithm is not practically useful in this case, this example illustrates the power of the compositional framework to tackle even very complex queries.

\subsubsection{Reverse MAP}\label{appx:rmap} 
Recently, in \cite{huang2024causal}, the reverse-MAP query was introduced, defined by:
\begin{equation}
    \max_{\bm{X}} p(\bm{e_1}| \bm{X}, \bm{e_2})
\end{equation}
where the variables are partitioned as $\bm{V} = (\bm{E_1}, \bm{E_2}, \bm{X}, \bm{H})$. 
In our compositional framework, this can be written as:
\begin{equation}
    \bigoplus_{\bm{x}} \mappingfn_{\probsemiring \to \maxtimessemiring} \Bigl(\bigoplus_{\bm{h}} p(\bm{e_1}, \bm{x}, \bm{e_2}, \bm{h})  \otimes \mappingfn_{\inv} \bigl(\bigoplus_{\bm{h}, \bm{e_1'}} p(\bm{e_1'}, \bm{x}, \bm{e_2}, \bm{h}) \bigr) \Bigr) 
\end{equation}
Here, the mapping $\mappingfn_{\inv}$ is tractable if the circuit for $p$ is $\bm{X}$-deterministic. Since $p$ is $\bm{X}$-deterministic, it is $\bm{X}$-support-compatible with itself; chaining this with (5.9) and (5.11) in Table \ref{tbl:tractability_apx}, the inputs to the product are $\bm{X}$-compatible; since they have scope $\bm{X}$, this means the product is tractable by support-compatibility. The resulting circuit remains $\bm{X}$-deterministic (i.e. deterministic as the scope is $\bm{X}$), which means that the mapping $\mappingfn_{\probsemiring \to \maxtimessemiring}$ from the probability to $(\max, \cdot)$ semiring is tractable. Thus, this query is tractable for smooth, decomposable and $\bm{X}$-deterministic circuits in linear time (same as derived by the authors).

\end{document}